\documentclass[10pt,twocolumn,a4paper]{article}

\usepackage[T1]{fontenc}
\usepackage[utf8]{inputenc}
\usepackage{palatino}
\usepackage{mathpazo}
\usepackage{microtype}
\usepackage{geometry}
\usepackage{amsmath,amssymb,amsthm}
\usepackage{booktabs}
\usepackage{fancyhdr}
\usepackage[table]{xcolor}
\usepackage{parskip}
\usepackage{enumitem}
\usepackage{titlesec}
\usepackage{graphicx}
\usepackage{float}
\usepackage{tikz}
\usetikzlibrary{arrows.meta,positioning,fit,backgrounds,calc}

\usepackage{hyperref}
\hypersetup{
  hidelinks,
  pdftitle={Runtime-Certified Bounded-Error Quantized Attention},
  pdfauthor={Dean Calver},
  pdfsubject={KV cache quantization with formal error bounds and FP16 fallback},
  pdfkeywords={KV cache, quantization, attention, LLM inference, error bounds}
}

\newtheorem{theorem}{Theorem}[section]
\newtheorem{corollary}[theorem]{Corollary}
\newtheorem{lemma}[theorem]{Lemma}
\theoremstyle{definition}

\theoremstyle{remark}

\titlespacing*{\section}{0pt}{12pt}{6pt}
\titlespacing*{\subsection}{0pt}{8pt}{3pt}
\newcommand{\R}{\mathbb{R}}
\newcommand{\norm}[1]{\lVert #1 \rVert}
\newcommand{\torchsdpa}{\texttt{torch.scaled\_\hspace{0pt}dot\_\hspace{0pt}product\_\hspace{0pt}attention}}

\begin{document}

\twocolumn[{%
\begin{center}
  {\LARGE\bfseries Runtime-Certified Bounded-Error}\\[4pt]
  {\LARGE\bfseries Quantized Attention}\\[10pt]
  {\large\itshape Per-Head, Per-Step Error Bounds for Compressed KV Caches\\
  with Dense FP16 Fallback}\\[8pt]
  Dean Calver\\[2pt]
  {\small Independent Researcher}\\[2pt]
  {\small\ttfamily deano@cloudpixies.com}
\end{center}

\vspace{4pt}
\hrule
\vspace{8pt}


\begin{abstract}
KV cache quantization reduces the memory cost of long-context LLM inference, but introduces approximation error that is typically validated only empirically. Existing systems rely on average-case robustness, with no mechanism to detect or recover from failures at runtime.

We present a tiered KV cache architecture that enables runtime-certified attention: INT8 keys and INT4 values are stored in GPU memory, while FP16 originals are retained in system RAM for deterministic fallback. A two-term error decomposition yields per-head, per-step bounds on (i) attention distribution distortion from key quantization and (ii) value reconstruction error. These bounds are computed online and used to drive adaptive precision selection and a multi-stage fallback ladder, which guarantees recovery to the exact dense attention output when required.

Across PG-19, NIAH, and RULER benchmarks on LLaMA~3.1-8B with contexts up to 128K, the system matches dense FP16 KV quality within noise for language modelling and retrieval tasks, while recovering catastrophic failures observed in naive INT8/INT4 baselines. Value-sensitive tasks at short context expose a controlled trade-off between compression and fidelity, which can be eliminated via tighter value tolerances or FP16-value fallback.

The certification is local (per-head, per-step) and does not guarantee end-to-end model correctness, but ensures that each attention computation is either bounded relative to an FP16 reference or exactly recovered via fallback. This reframes KV cache quantization as a runtime-verified computation rather than a fixed approximation. The goal is not raw speedups, but enabling safe deployment of aggressive KV compression under strict quality constraints.
\end{abstract}

\vspace{4pt}
\hrule
\vspace{8pt}
}] 

\section{Introduction}
\label{sec:intro}

Autoregressive LLM decoding at long context lengths is dominated by the cost of reading the key-value (KV) cache from GPU memory.  Each decode step fetches cached keys and values for every attention head, a cost linear in sequence length.  At 128K context on LLaMA~3.1-8B (32 layers, 32 query heads, 8 KV heads under GQA, and head dimension 128), the FP16 KV cache occupies roughly 17\,GB and consumes the majority of available memory bandwidth during decode.

KV cache quantization~\cite{kvquant,kivi,qjl,innerq} addresses this by storing keys and values in reduced-precision formats---typically INT8, INT4, or lower.  The savings are substantial: an INT8 cache halves memory usage, and INT4 quarters it.  But quantization introduces error, and the relationship between quantization parameters and output quality is complex.  The same quantization scheme that works well on one prompt may fail on another, depending on the attention pattern, the key/value distributions, and the specific tokens being attended to.

The standard response is empirical validation: run a benchmark suite, check that perplexity doesn't degrade beyond a threshold, and deploy.  This works in practice but provides no per-step guarantee.  A system that achieves $\Delta\text{ppl} < 0.1$ on average may still exhibit large step-wise deviations on individual decode steps---and there is no mechanism to detect when this happens, let alone recover from it.

KV cache quantization and compressed-domain attention are well-established techniques.  What is missing is a \emph{guarantee}: a way to bound, at each decode step, how far the quantized output deviates from an unquantized reference---and a way to recover the production dense output when the bound is too large.\footnote{The paper distinguishes three reference outputs: $O_{\mathrm{dense}}$ (production Dense baseline via \torchsdpa{}), $O_{\mathrm{ref}}$ (same certified kernel with unquantized FP16 KV), and $O_{\mathrm{quant}}$ (certified kernel with quantized KV).  The formal bounds (Section~\ref{sec:bounds}) certify $\norm{O_{\mathrm{quant}} - O_{\mathrm{ref}}}$; the arithmetic-path gap $\norm{O_{\mathrm{ref}} - O_{\mathrm{dense}}}$ is characterised in Section~\ref{sec:numerical}.  On fallback, the system returns $O_{\mathrm{dense}}$ directly.}

This paper develops such a guarantee together with a practical recovery mechanism.  In this sense, it reframes KV cache quantization as a runtime-verified computation rather than a fixed approximation.  Unlike prior adaptive-precision systems (Section~\ref{sec:related}), it provides a runtime-computable error bound and a deterministic recovery path to the exact dense baseline.  The key ideas are:

\textbf{1.\ Tiered storage with dense fallback.}  Quantized keys and values live in GPU memory; full-precision originals live in pinned system RAM\@.  The FP16 data is not discarded after quantization---it is retained as a safety net.  Any block can be escalated to full precision at any time, at the cost of a host-to-device page-in from system RAM (overlappable with GPU compute).  The terminal fallback path uses the same \torchsdpa{} code path and FP16 KV inputs as the Dense baseline.  This fallback is unconditional in correctness, though not in cost: it requires transient staging memory for one layer's KV and host-to-device bandwidth for the page-in, but never requires the full dense cache to be persistently resident in VRAM (Section~\ref{sec:fallback}).

\textbf{2.\ Formal error bounds with runtime monitoring.}  A two-term error decomposition bounds the output perturbation from key compression and value compression independently.  The key compression error depends on how quantization shifts the softmax distribution; the value compression error depends on the per-token reconstruction quality.  Both bounds are computable at runtime from quantities the system already tracks (quantization scales, query norms, value ranges), enabling per-head, per-step precision decisions.

\textbf{3.\ Adaptive per-block precision selection.}  Rather than applying a single quantization policy uniformly, the system selects key precision per block based on each block's importance to the current query.  Blocks holding the majority of the attention mass use FP16 keys (paged in from system RAM); the remaining blocks use INT8 keys with INT4 values stored in VRAM.  The selection adapts to the actual attention pattern at each decode step, providing FP16-quality scoring on the blocks that matter while retaining quantized efficiency on the long tail.

\subsection{Contributions}

\begin{enumerate}[leftmargin=2em,itemsep=4pt]
  \item A \textbf{tiered KV cache architecture} that stores per-channel INT8 keys and per-group INT4 values in VRAM (approximately 56\% of FP16 including metadata) while retaining FP16 originals in system RAM for runtime fallback.

  \item A \textbf{two-term error decomposition} (key compression error + value compression error) with independent, per-head, per-step bounds computable at runtime.

  \item An \textbf{adaptive precision selection mechanism} that uses the attention mass distribution (computed from quantized scores) to determine which blocks require FP16 keys, providing a guaranteed bound on key compression error.

  \item A \textbf{ranking-consistency check} that detects when INT8 scoring noise distorts the attention distribution, triggering per-head fallback to dense attention.  This mechanism is critical: it converts a previously unaddressed silent failure mode into a detectable and recoverable event when non-uniform precision would otherwise dilute attention on retrieval-critical tokens.

  \item A \textbf{four-rung fallback ladder} that escalates through increasingly conservative precision modes, with full FP16 recomputation as the unconditional terminal state.
\end{enumerate}

\textbf{What this paper does \emph{not} claim.}  We do not claim end-to-end model-quality guarantees.  The error bounds are \emph{per-head, per-step}: they bound $\norm{O_{\mathrm{quant}} - O_{\mathrm{ref}}}$ (quantized output vs.\ same kernel with FP16 KV; Section~\ref{sec:decomposition}) for a single head at a single decode step.  On fallback, the system returns $O_{\mathrm{dense}}$ directly.  The aggregate effect on model quality is assessed empirically.

\begin{figure*}[!t]
\centering
\resizebox{\textwidth}{!}{%
\begin{tikzpicture}[
  box/.style={draw, rounded corners=3pt, minimum height=1.1cm,
              align=center, font=\small},
  tier/.style={box, fill=blue!8, minimum width=2.8cm},
  phase/.style={box, fill=orange!12, minimum width=2.6cm},
  rung/.style={box, fill=red!8, minimum width=2.2cm, minimum height=0.7cm,
               font=\scriptsize, align=center},
  arrow/.style={-{Stealth[length=5pt]}, thick},
  lbl/.style={font=\scriptsize\itshape, text=gray!70!black},
]

\node[tier] (t1) {\textbf{Tier-1} (VRAM)\\\scriptsize INT8 keys + INT4 values\\\scriptsize + metadata};
\node[tier, below=0.6cm of t1] (t2) {\textbf{Tier-2} (CPU RAM)\\\scriptsize FP16 key \& value\\\scriptsize originals};

\node[phase, right=1.8cm of t1] (p1) {\textbf{Phase 1}\\\scriptsize INT8 scoring\\\scriptsize block log-masses};
\node[phase, right=0.8cm of p1] (sel) {\textbf{Adaptive}\\\scriptsize top-$K^*$ selector\\\scriptsize ($\tau_{\mathrm{cov}}$)};
\node[phase, right=0.8cm of sel] (p2) {\textbf{Phase 2}\\\scriptsize Fused attend\\\scriptsize (mask-gated keys)};

\node[box, fill=green!10, right=0.8cm of p2, minimum width=1.8cm] (out) {\textbf{Output}\\\scriptsize $O_{\mathrm{quant}}$};

\node[rung, below=1.5cm of sel] (r1) {Rung 1\\expand $K^*$};
\node[rung, right=0.3cm of r1] (r2) {Rung 2\\FP16 values};
\node[rung, right=0.3cm of r2] (r3) {Rung 3\\per-head $O_{\mathrm{dense}}$};
\node[rung, right=0.3cm of r3] (r4) {Rung 4\\all-head $O_{\mathrm{dense}}$};

\draw[arrow] (t1) -- (p1) node[midway,above,lbl] {INT8 keys};
\draw[arrow] (p1) -- (sel) node[midway,above,lbl] {$\ell_b$};
\draw[arrow] (sel) -- (p2) node[midway,above,lbl] {mask $\mathcal{F}$};
\draw[arrow] (p2) -- (out);
\draw[arrow] (t2.east) -- ++(0.5,0) |- (p2.south west)
  node[pos=0.5, below, lbl] {FP16 keys for $b \in \mathcal{F}$};

\draw[arrow, densely dashed, red!60!black]
  (sel.south) -- ++(0,-0.55) -| (r1.north);
\draw[arrow, densely dashed, red!60!black]
  (p2.south) -- ++(0,-0.35) -| (r2.north);
\draw[arrow, densely dashed, red!60!black]
  ([xshift=4pt]p2.south) -- ++(0,-0.15) -| (r3.north);

\begin{scope}[on background layer]
  \node[draw=blue!30, dashed, rounded corners=5pt,
        fit=(t1)(t2), inner sep=8pt,
        label={[font=\scriptsize\bfseries,blue!50!black]above:Storage}] {};
  \node[draw=orange!40, dashed, rounded corners=5pt,
        fit=(p1)(sel)(p2), inner sep=8pt,
        label={[font=\scriptsize\bfseries,orange!60!black]above:Certified Attention Pipeline}] {};
  \node[draw=red!30, dashed, rounded corners=5pt,
        fit=(r1)(r2)(r3)(r4), inner sep=6pt,
        label={[font=\scriptsize\bfseries,red!50!black]below:Fallback Ladder (increasing precision)}] {};
\end{scope}
\end{tikzpicture}
}
\caption{System overview.  Tier-1 stores compressed KV in VRAM; Tier-2 retains FP16 originals in CPU RAM\@.  Phase~1 computes INT8 block log-masses; the adaptive selector chooses top-$K^*$ blocks for FP16 key promotion; Phase~2 fuses the mask-gated attend with INT4 value dequantization.  The fallback ladder (dashed red) escalates precision when monitors detect bound violations, with Rung~4 returning $O_{\mathrm{dense}}$ unconditionally.}
\label{fig:overview}
\end{figure*}

\section{Background and Notation}
\label{sec:background}

\subsection{Attention in Autoregressive Decoding}

At decode step $t$, the model computes single-query attention over the cached key-value pairs:
\begin{equation}
\label{eq:attention}
  O = \mathrm{softmax}\!\left(\frac{q \cdot K^\top}{\sqrt{d}}\right) V
\end{equation}
where $q \in \R^d$ is the current query, $K \in \R^{N \times d}$ the cached keys, $V \in \R^{N \times d}$ the cached values, and $d$ the head dimension.  This requires reading $2Nd$ elements from DRAM per head per step---the dominant cost at long context.

\subsection{Block Organisation}

The KV cache is partitioned into contiguous blocks of $B$ tokens (typically $B = 16$).  Block $b$ contains keys $\{k_t\}_{t \in b}$ and values $\{v_t\}_{t \in b}$.  All precision decisions and error monitoring operate at block granularity.  The block is the atomic unit of page-in from system RAM.

\subsection{Quantization Schemes}
\label{sec:quant_schemes}

\textbf{Per-channel INT8 keys.}  Each of the $d$ key channels is quantized independently with its own scale $\sigma_c$ and real-valued offset $z_c$:
\begin{equation}
  k_{t,c}^{\mathrm{int8}} = \mathrm{clamp}\!\left(\mathrm{round}\!\left(\frac{k_{t,c} - z_c}{\sigma_c}\right),\, -128,\, 127\right),
\end{equation}
\begin{equation}
  \hat{k}_{t,c} = k_{t,c}^{\mathrm{int8}} \cdot \sigma_c + z_c.
\end{equation}
For a completed block with channel minimum $\ell_c$ and maximum $u_c$, the affine parameters are chosen as $\sigma_c=(u_c-\ell_c)/255$ and $z_c=\ell_c+128\sigma_c=u_c-127\sigma_c$ (with the degenerate constant-channel case handled by an implementation guard).  Thus every value used to fit the block is representable without clipping, and round-to-nearest gives the deterministic reconstruction bound $|k_{t,c}-\hat{k}_{t,c}|\leq \sigma_c/2$.

Quantization is applied to keys \emph{after} RoPE (rotary position embeddings) has been applied, since LLaMA-family models store post-RoPE keys in the KV cache.  The quantization error bound $\Delta$ therefore accounts for any position-dependent magnitude variation introduced by RoPE\@.

Per-channel quantization is critical for keys.  Key channels have widely varying magnitudes (dynamic ranges spanning two orders of magnitude within a single head are common).  Per-block quantization (one scale for all channels) forces low-magnitude channels to share a scale set by the high-magnitude channels, crushing them to 1--2 INT8 levels.  This produces measurable quality degradation on language modelling benchmarks.  Per-channel quantization, established by KVQuant~\cite{kvquant} and adopted as standard practice, eliminates this by allowing each channel its own scale.

The per-channel scales and offsets are \emph{per-block}: each block of $B$ tokens has its own set of $2d$ FP32 values (scale + offset), computed once when the block is filled and never updated thereafter.  Appended tokens accumulate in a trailing partial block that remains in FP16 until $B$ tokens have arrived; at that point the block is quantized atomically with scales derived from those $B$ tokens alone.  This design eliminates the stale-scale problem: old blocks' INT8 codes always match their stored metadata, and the quantization error $\Delta_b$ is block-local.  The per-block metadata costs $2d \times 4 / B = 64$ bytes per token at $d{=}128$, $B{=}16$, adding 50\% to the raw INT8 key storage.  Total key-side Tier-1 storage is $128 + 64 = 192$ bytes per token, or 75\% of FP16 keys (256 bytes per token).  The INT8 codes alone are half the size of FP16, but the per-channel metadata narrows the effective compression.

\textbf{Per-group INT4 values.}  Each value vector of dimension $d$ is divided into groups of $g$ elements (default $g = 16$).  Each group is independently quantized to INT4 with its own FP16 scale and real-valued offset.  Two INT4 values are packed into a single byte.

Per-group granularity is necessary for values because value vectors exhibit higher intra-vector dynamic range variation than keys.  A single scale for the full $d{=}128$ vector would crush groups with small magnitudes.  The group size $g$ controls a quality--storage tradeoff: smaller groups provide finer-grained quantization at the cost of more metadata.  Empirically, $g{=}16$ achieves near-lossless \emph{perplexity} ($\Delta\text{ppl} = +0.03$) with zero fallback escalation at the default tolerance; however, value-sensitive tasks such as variable tracking and word extraction show measurable degradation at short contexts (Section~\ref{sec:ruler_8k_ablation}).  Smaller groups ($g{=}4, 8$) give the same quality but worse storage; larger groups ($g{=}32$) save marginally more memory but degrade sharply (Section~\ref{sec:results}).

The storage cost is $d/2$ bytes per token for the quantized values (INT4 packed), plus $2 \times (d/g) \times 2$ bytes per token for FP16 scales and offsets.  For $d{=}128$ and $g{=}16$: 64 bytes (values) + 32 bytes (metadata) = 96 bytes per token, compared to 256 bytes for FP16.  This is 37.5\% of FP16.

\textbf{Value error annotation.}  At quantization time (cache write), the system computes the per-token $\ell_2$ reconstruction error $\norm{\hat{V}_t - V_t}_2$ and reduces it to a per-block maximum $\eta_b$.  It also stores the per-block maximum value norm $\nu_b = \max_{t \in b}\norm{V_t}_2$; the key error bound uses $V_{\max} = \max_b \nu_b$.  Both annotations are one float per block, stored alongside the block in VRAM\@.  Computing $\eta_b$ and $\nu_b$ at write time avoids the circular dependency of needing FP16 originals to check quality at attend time.

\textbf{FP16 originals.}  The unquantized FP16 keys and values are retained in pinned system RAM after quantization.  They are the ground truth for the attention computation and serve as the unconditional fallback when quantized precision is insufficient.

\section{System Architecture}
\label{sec:arch}

\subsection{Tiered KV Cache}
\label{sec:tiered}

The KV cache is stored across two tiers, each serving a distinct role:

\textbf{Tier 1 --- VRAM (hot).}  Per-channel INT8 keys and per-group INT4 values, co-located with their quantization metadata (per-channel scales/offsets for keys, per-group FP16 scales/offsets for values) and per-block value error annotations $\eta_b$.  This is the data consumed during normal attention execution.  Total storage per token is detailed in Section~\ref{sec:storage}.

\textbf{Tier 2 --- CPU pinned RAM (cold).}  FP16 keys and values.  The ground truth.  Accessed when the adaptive precision selector promotes blocks to FP16 key precision, or when the value error annotation $\eta_b$ indicates that a block's INT4 values exceed the configured tolerance.  Page-in latency varies by interconnect: ${\sim}3$\,$\mu$s per block over PCIe~5.0, lower over NVLink or CXL, effectively zero on unified-memory architectures.  Optionally, a small VRAM scratch cache of recently paged-in FP16 blocks can amortise repeated page-in across consecutive decode steps; this does not affect correctness but reduces host-to-device transfer traffic in practice.  Storage per token: 100\% of FP16.

The key architectural decision is retaining Tier~2.  Conventional systems discard FP16 originals after quantization; by retaining them in system RAM, the system gains an unconditional fallback mechanism at the cost of $\approx 16$\,GB at 128K context (Section~\ref{sec:limitations}).

\subsection{Fused Quantized Attention Kernel}
\label{sec:kernel}

The attention kernel executes directly on the quantized data in Tier 1, performing in-register dequantization rather than staging through an FP16 buffer.  This is the ``compressed-domain execution'' approach introduced by HACK~\cite{hack} and VecInfer~\cite{vecinfer}: the INT8 keys are dequantized to FP16 in registers during the dot product, and the INT4 values are dequantized in registers during the weighted sum.  No intermediate FP16 buffer is allocated.

For a single decode step:
\begin{enumerate}[leftmargin=2em,itemsep=2pt]
  \item For each block $b$, read INT8 keys from VRAM ($B \times d$ bytes).
  \item Dequantize each key channel in registers: $\hat{k}_{t,c} = k_{t,c}^{\mathrm{int8}} \cdot \sigma_c + z_c$.
  \item Compute attention logits: $s_{b,t} = q \cdot \hat{k}_{b,t} / \sqrt{d}$.
  \item Compute block statistics: $m_b = \max_t s_{b,t}$ and per-block unnormalised mass.
  \item Read INT4 values from VRAM, dequantize per-group in registers.
  \item Accumulate into online softmax state: weights $\times$ values.
\end{enumerate}

The scoring pass over all blocks produces per-block mass estimates used by the adaptive precision selector (Section~\ref{sec:adaptive}).  The selector determines which blocks require FP16 key precision; FP16 keys for those blocks are paged in from Tier~2 before the attend pass begins.

\textbf{Fused scoring and attend.}  The system executes in two phases: a lightweight scoring pass to compute block masses and drive the adaptive selector, followed by a single fused attend pass that processes all blocks with mask-gated key precision.  Algorithm~\ref{alg:fused} gives the exact logic.

\begin{figure*}[!t]
\small
\begin{center}
\fbox{\parbox{0.92\textwidth}{
\textbf{Algorithm 1: Fused Certified Attention with Mask-Gated Key Precision} \\[4pt]
\textbf{Input:} Query $q$; blocks $\{b_1, \ldots, b_N\}$ with INT8 keys + INT4 values in VRAM; top-$K^*$ mask $\mathcal{F}$ from adaptive selector; FP16 keys for $\mathcal{F}$ in paged scratch buffer \\
\textbf{Output:} Attention output $O$ \\[4pt]
\textit{// --- Phase 1: INT8 scoring (block log-masses for adaptive selector) ---} \\
\textbf{for} each block $b$ \textbf{do} \\
\quad Dequantize INT8 keys in-register: $\hat{k}_{t,c} = k_{t,c}^{\mathrm{int8}} \cdot \sigma_c + z_c$ \\
\quad Compute logits: $s_{b,t} = q \cdot \hat{k}_{b,t} / \sqrt{d}$ \\
\quad $m_b \leftarrow \max_t s_{b,t}$; \quad $S_b \leftarrow \sum_{t \in b} \exp(s_{b,t} - m_b)$ \\
\quad $\ell_b \leftarrow m_b + \log S_b$ \quad \textit{// block log-mass (globally comparable)} \\
\textbf{end for} \\[3pt]
\textit{// --- Adaptive selection + async page-in ---} \\
$p_b \leftarrow \exp(\ell_b - \mathrm{logsumexp}_j\, \ell_j)$ for all $b$ \quad \textit{// normalised block mass} \\
$\mathcal{F} \leftarrow \mathrm{AdaptiveTopK}(\{p_b\}, \tau_{\mathrm{cov}}, K_{\min}, K_{\max})$ \\
Page in FP16 keys for blocks $b \in \mathcal{F}$ from Tier~2 into scratch buffer \\[3pt]
\textit{// --- Phase 2: fused attend (single pass, mask-gated key precision) ---} \\
Initialise online softmax state: $m \leftarrow -\infty$ (FP32), $\ell \leftarrow 0$ (FP32), $o \leftarrow \mathbf{0}$ (FP32, $d$-dim) \\[2pt]
\textbf{for} each block $b$ \textbf{do} \\
\quad \textit{// Load both key representations} \\
\quad $\hat{k}^{\mathrm{int8}}_{b} \leftarrow$ dequantize INT8 keys for block $b$ \quad (always loaded) \\
\quad \textbf{if} $b \in \mathcal{F}$: $k^{\mathrm{fp16}}_{b} \leftarrow$ FP16 keys from scratch buffer \\[2pt]
\quad \textit{// Branchless precision selection} \\
\quad $k_b \leftarrow \begin{cases} k^{\mathrm{fp16}}_b & \text{if } b \in \mathcal{F} \\ \hat{k}^{\mathrm{int8}}_b & \text{otherwise} \end{cases}$ \quad (mask-gated \texttt{where}) \\[2pt]
\quad \textit{// Score and accumulate (standard online softmax)} \\
\quad $s_{b,t} \leftarrow q \cdot k_{b,t} / \sqrt{d}$ \\
\quad $m_b^{\mathrm{new}} \leftarrow \max_t s_{b,t}$ \\
\quad Update online softmax: $m, \ell, o$ with block scores and INT4 values $\hat{V}_b$ \\
\textbf{end for} \\[3pt]
$O \leftarrow o \,/\, \ell$
}}
\end{center}
\caption{Fused certified attention with mask-gated key precision.  Phase~1 is a lightweight scoring pass (INT8 keys only, no value reads) that computes per-block log-masses $\ell_b$ for the adaptive selector.  Phase~2 is a single fused attend pass: each block's keys are selected from FP16 (promoted) or INT8 (rest) via mask-gated \texttt{where}.  Values are INT4 for non-promoted blocks (Rung~2 promoted blocks use paged-in FP16 values).  The online softmax state ($m$, $\ell$, $o$) uses FP32 precision throughout (Section~\ref{sec:numerical}).  Ranking-consistency checks occur after this kernel; value error $E_{\mathrm{val}}$ is computed post-attend from cached attention masses and pre-stored $\eta_b$.}
\label{alg:fused}
\end{figure*}

\subsection{Adaptive Precision Selection}
\label{sec:adaptive}

After the scoring pass (Phase~1 in Algorithm~\ref{alg:fused}), the system has per-block mass estimates from the INT8 scores.  These drive the precision selection for the attend pass:

\begin{enumerate}[leftmargin=2em,itemsep=2pt]
  \item Sort all blocks by estimated attention mass (descending).
  \item Compute cumulative mass fraction.
  \item Find $K^* = \min\{k : C_k \geq \tau_{\mathrm{cov}}\}$, where $\tau_{\mathrm{cov}}$ is the coverage threshold (default 0.995).
  \item Clamp: $K^* = \mathrm{clamp}(K^*, K_{\min}, K_{\max})$.
  \item Page in FP16 keys from Tier 2 for the top $K^*$ blocks into a compact scratch buffer.
  \item Execute the fused attend pass (Phase~2): each block uses either its FP16 keys (if promoted) or INT8 keys (otherwise), selected via a branchless mask.
\end{enumerate}

The coverage threshold $\tau_{\mathrm{cov}}$ controls the selector: blocks whose \emph{INT8-estimated} masses sum to at least $\tau_{\mathrm{cov}}$ of the total estimated mass are promoted to FP16 keys.  The certified guarantee (Theorem~\ref{thm:mass}) is on the \emph{tail}: the true mass left to INT8 execution is at most $e^{2\Delta}(1 - \tau_{\mathrm{cov}})$, regardless of whether the ranking is exactly correct.  The implementation conservatively substitutes $e^{3\Delta}$ (Section~\ref{sec:scoring}).

The coverage threshold $\tau_{\mathrm{cov}} = 0.995$ means $K^*$ adapts to the attention pattern: on concentrated heads (e.g., a single sink token holding most of the mass), $K^*$ can be as low as $K_{\min}$; on diffuse heads (e.g., broad context gathering in early layers), $K^*$ expands toward the cap.  With $K_{\max}{=}128$ blocks, the maximum promoted fraction is 50\% at 4K (256 blocks), 25\% at 8K (512 blocks), and 12.5\% at 16K (1024 blocks).  Empirically, $K^*$ frequently saturates at the cap for short contexts, where $\tau_{\mathrm{cov}}{=}0.995$ demands most blocks for coverage, and falls well below the cap at longer contexts where attention mass concentrates on a smaller fraction of blocks (Section~\ref{sec:results}).  The cost is self-regulating: the system spends precision where the attention pattern demands it.

\subsection{Fallback Ladder}
\label{sec:fallback}

When runtime monitoring detects that an error bound may be exceeded, the system escalates through increasingly conservative precision modes:

\begin{enumerate}[leftmargin=2em,itemsep=2pt]
  \item \textbf{Expand FP16 key coverage}: Increase $K^*$ beyond the adaptive selection (e.g., double it), reducing the tail mass on INT8 keys.
  \item \textbf{Promote values to FP16}: Before Phase~2, the default policy pages in FP16 values for blocks whose estimated contribution $\hat{\rho}_b \cdot \eta_b$ (Phase-1 estimated mass $\times$ pre-stored reconstruction error) exceeds $v_{\mathrm{tol}}$ (Corollary~\ref{cor:val_block}).  This is a local promotion heuristic, not a claim that the global value term is below $v_{\mathrm{tol}}$.  After Phase~2, the exact $E_{\mathrm{val}} = \sum_b \rho_b \eta_b$ is computed from actual attention masses and reported as telemetry; promoted blocks contribute $\eta_b{=}0$, so no recomputation is required.  Deployments that require a hard value-error budget can instead promote blocks greedily, in descending $\hat{\rho}_b\eta_b$, until the achieved runtime $E_{\mathrm{val}}$ is below the configured budget.
  \item \textbf{Ranking-consistency check}: After Phase~2, compare the top-$r$ block ranking under FP16 block log-masses against the INT8 ranking from Phase~1, and verify that no tail block's upper-bounded FP16 log-mass could enter the top-$r$ (Eq.~\ref{eq:boundary_check}).  If either check fails (Section~\ref{sec:ranking_consistency}), recompute attention for the affected head using all-FP16 keys and values from Tier~2.  This per-head fallback guarantees dense-equivalent output on the steps where INT8 scoring noise would distort the attention distribution.
  \item \textbf{Full FP16 recomputation}: Page in all FP16 keys and values from Tier~2; execute standard dense FP16 attention for all heads via \torchsdpa{} (the same code path used by the Dense baseline).
\end{enumerate}

The bounds certify error magnitude; the escalation policy uses heuristic thresholds for efficiency rather than enforcing a globally optimal promotion policy.

Rung~3 is per-head: the ranking-consistency check fires for a single head, and only that head's output is recomputed via \torchsdpa{} on all-FP16 KV\@.  Rung~4 is per-layer, all-heads: the full FP16 K/V for the current layer's active context is paged from pinned CPU memory to GPU, \torchsdpa{} is called for all heads in that layer, and the layer output is returned early.  Both rungs bypass the certified attention kernel entirely and return $O_{\mathrm{dense}}$ (Section~\ref{sec:decomposition}).

\textbf{Fallback memory contract.}  Rung~4 is layerwise, not a persistent full dense cache: only one layer's FP16 K/V tensors are staged into GPU memory at a time, and the temporary buffers are released after the SDPA call.  The peak temporary VRAM for a Rung~4 event is therefore one layer's full-context FP16 K/V plus SDPA workspace---e.g., ${\sim}0.5$\,GB at 128K context on LLaMA~3.1-8B (all 8 KV heads for one layer: $2 \times 128\text{K} \times 128 \times 8 \times 2$\,bytes)---not $\text{num\_layers} \times$ full FP16 KV\@.  The ``unconditional'' qualifier refers to correctness, not resource-free: Rung~4 requires sufficient transient GPU memory for one layer's staging plus PCIe bandwidth for the page-in, but does not require the dense KV cache to be persistently resident in VRAM\@.

The practical value of the system depends on Rungs~3 and~4 being rarely triggered, which is the subject of the empirical evaluation.  The runtime cost of escalation is host-to-device page-in latency, overlappable with GPU compute on other blocks.

\section{Formal Error Bounds}
\label{sec:bounds}

\subsection{Two-Term Error Decomposition}
\label{sec:decomposition}

\paragraph{Intuition.}
We bound two independent failure modes: (1) distortion of the attention distribution from key quantization, and (2) reconstruction error of values.  Each is monitored separately at runtime and each has a distinct fallback path.

\paragraph{Reference outputs (summary).}
The paper distinguishes three attention outputs, each computed for a single head at a single decode step:

\begin{itemize}[leftmargin=2em,itemsep=2pt]
\item $O_{\mathrm{dense}}$: the output of \torchsdpa{} (Flash Attention) with FP16 keys and values.  This is the production Dense baseline.
\item $O_{\mathrm{ref}}$: the output of the \emph{same certified attention kernel} used by the certified system, but operating on unquantized FP16 keys and values.  This isolates quantization error from the arithmetic-path difference between the certified kernel and Flash Attention (tensor-core matmul).
\item $O_{\mathrm{quant}}$: the output of the certified kernel operating on quantized (INT8 key / INT4 value) data.  This is the system's fast-path output.
\end{itemize}

All formal bounds in this section certify $\norm{O_{\mathrm{quant}} - O_{\mathrm{ref}}}_2$, i.e.\ quantization error with the arithmetic path held fixed.  The arithmetic-path gap $\norm{O_{\mathrm{ref}} - O_{\mathrm{dense}}}_2$ is characterised separately in Section~\ref{sec:numerical}.  On fallback (Rungs~3 and~4), the system bypasses the certified kernel entirely and returns $O_{\mathrm{dense}}$, so the worst case is the Dense baseline---not merely $O_{\mathrm{ref}}$.

The total quantization error decomposes into two independent terms via the triangle inequality through the bridge term $\sum_t a_t' V_t$ (quantized-key weights, exact values):
\begin{align}
\label{eq:decomp}
  &\norm{O_{\mathrm{quant}} - O_{\mathrm{ref}}}_2
  = \norm{\textstyle\sum_t a_t' \hat{V}_t - \sum_t a_t V_t}_2 \notag \\
  &\quad\leq \underbrace{\norm{\textstyle\sum_t a_t' V_t - \sum_t a_t V_t}_2}_{E_{\mathrm{key}}} \notag \\
  &\quad\;+ \underbrace{\norm{\textstyle\sum_t a_t' \hat{V}_t - \sum_t a_t' V_t}_2}_{E_{\mathrm{val}}}
\end{align}
where $a_t$ are the FP16-key softmax weights, $a_t'$ the quantized-key weights, $V_t$ the FP16 values, and $\hat{V}_t$ the quantized values.  $E_{\mathrm{key}}$ measures the effect of key quantization on the attention distribution with exact values; $E_{\mathrm{val}}$ measures value reconstruction error under the quantized-key distribution.

Each term has an independent bound, an independent runtime monitor, and an independent escalation path through the fallback ladder.  This independence is practically useful: key compression affects the \emph{attention distribution} (which tokens are attended to), while value compression affects the \emph{output quality} given a fixed distribution.  The two error sources have different characteristics and different remedies.

\subsection{Theorem 1: Value Compression Error}
\label{sec:val_error}

\begin{theorem}[Value Compression Error]
\label{thm:val}
Let $a_t$ be the softmax attention weights (computed with whatever key precision is in use).  Let $V_t$ be the original FP16 value and $\hat{V}_t$ the quantized value, with $\norm{V_t - \hat{V}_t}_2 \leq \eta$ for all $t$ in the attended set.  Then:
\begin{equation}
  E_{\mathrm{val}} = \norm{\sum_t a_t \hat{V}_t - \sum_t a_t V_t}_2 \leq \eta
\end{equation}
\end{theorem}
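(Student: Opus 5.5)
The plan is to exploit the fact that the attention weights $a_t$ form a probability distribution, so the value error collapses to a convex combination of per-token reconstruction errors. First, record the two properties of the weights that we need. Whatever key precision is in use, $a_t$ is a softmax output, so $a_t \geq 0$ for every $t$ and $\sum_t a_t = 1$ over the attended set. This is the only information about the keys the argument requires. That is why the statement may allow arbitrary key precision, and why the same bound applies to the bridge term in Eq.~\eqref{eq:decomp} with the quantized-key weights $a_t'$.

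Second, use linearity to write the difference as a single weighted sum:
\begin{equation*}
\sum_t a_t \hat{V}_t - \sum_t a_t V_t = \sum_t a_t (\hat{V}_t - V_t).
\end{equation*}
Then apply the triangle inequality and the absolute homogeneity of $\norm{\cdot}_2$, using $a_t \geq 0$ so that $|a_t| = a_t$. This gives
\begin{equation*}
E_{\mathrm{val}} \leq \sum_t a_t \norm{\hat{V}_t - V_t}_2 \leq \eta \sum_t a_t = \eta.
\end{equation*}
The middle step substitutes the hypothesis $\norm{V_t - \hat{V}_t}_2 \leq \eta$ termwise, and the last step uses normalisation.

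No step here is a real obstacle; the only point needing care is the scope of the hypothesis. The bound $\eta$ must hold for every $t$ carrying positive weight, and the weights must sum to exactly one over the set on which the hypothesis is assumed. Tokens whose $V_t$ is used exactly, for example promoted FP16 blocks, fit in trivially with per-token error $0 \leq \eta$. Because of this, I would note in passing that the intermediate quantity $\sum_t a_t \norm{\hat{V}_t - V_t}_2$ is already a valid and sharper bound. Grouping it by blocks, with $\eta_b$ the per-block maximum and $\rho_b = \sum_{t \in b} a_t$, yields $E_{\mathrm{val}} \leq \sum_b \rho_b \eta_b$. This is the refinement the paper later uses for telemetry and block promotion (Corollary~\ref{cor:val_block}).
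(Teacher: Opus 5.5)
Your proof is correct and follows the paper's argument: write the difference as $\sum_t a_t(\hat{V}_t - V_t)$, apply the triangle inequality using $a_t \geq 0$, bound each term by $\eta$, and use $\sum_t a_t = 1$. Your remark that the intermediate sum $\sum_t a_t \norm{\hat{V}_t - V_t}_2$ yields the blockwise bound $\sum_b \rho_b \eta_b$ is exactly the paper's Corollary~\ref{cor:val_block}.
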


\begin{proof}
By the triangle inequality and the convexity of norms:
\[
  \norm{\sum_t a_t (\hat{V}_t - V_t)}_2 \leq \sum_t a_t \norm{\hat{V}_t - V_t}_2 \leq \eta \sum_t a_t = \eta
\]
where the last equality uses $\sum_t a_t = 1$.
\end{proof}

\begin{corollary}[Blockwise value compression error]
\label{cor:val_block}
Let $\rho_b = \sum_{t \in b} a_t$ be the attention mass on block $b$, and $\eta_b = \max_{t \in b} \norm{V_t - \hat{V}_t}_2$ the per-block value error annotation.  Then:
\begin{equation}
  E_{\mathrm{val}} \leq \sum_b \rho_b \,\eta_b
  \label{eq:val_block}
\end{equation}
\end{corollary}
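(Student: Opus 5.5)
The plan is to run the argument of Theorem~\ref{thm:val} again, stopping before the step that replaces every per-token error with the single global constant $\eta$. Concretely, I would start from the definition
\[
  E_{\mathrm{val}} = \norm{\textstyle\sum_t a_t (\hat{V}_t - V_t)}_2
\]
and apply the triangle inequality together with absolute homogeneity. Since every softmax weight satisfies $a_t \ge 0$, this gives the intermediate bound $E_{\mathrm{val}} \le \sum_t a_t \norm{\hat{V}_t - V_t}_2$.

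The next step is to regroup this sum over tokens by blocks. The blocks partition the attended token set (Section~\ref{sec:background}), so
\[
  \sum_t a_t \norm{\hat{V}_t - V_t}_2 = \sum_b \sum_{t\in b} a_t \norm{\hat{V}_t - V_t}_2 .
\]
Within a block $b$, each token error satisfies $\norm{\hat{V}_t - V_t}_2 \le \eta_b$, by the definition of $\eta_b$ as a maximum. Because the weights are nonnegative, the inner sum is therefore at most $\eta_b \sum_{t\in b} a_t = \rho_b \eta_b$. Summing over $b$ gives Eq.~\eqref{eq:val_block}.

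There is no real obstacle; the only care needed is in checking the hypotheses used along the way.
\begin{itemize}
  \item Nonnegativity of the $a_t$ is essential, because without it the term-wise bound by $\eta_b$ fails.
  \item Blocks whose values are promoted to FP16 (Rung~2) and the trailing FP16 partial block both have $\hat{V}_t = V_t$. They contribute $\eta_b = 0$, which is consistent with the statement.
  \item Normalization $\sum_t a_t = 1$ is not needed for the corollary itself. It only shows that the corollary implies Theorem~\ref{thm:val} when $\eta = \max_b \eta_b$, since then $\sum_b \rho_b\eta_b \le \eta \sum_b \rho_b = \eta$.
\end{itemize}
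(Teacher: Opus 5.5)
Your proposal is correct and takes the same route as the paper, which gives no separate proof: it is the argument of Theorem~\ref{thm:val} stopped at $\sum_t a_t \norm{\hat{V}_t - V_t}_2$, regrouped over the block partition, and bounded blockwise using $a_t \ge 0$ and $\norm{\hat{V}_t - V_t}_2 \le \eta_b$. Your side remarks are accurate and consistent with how the paper uses the bound: normalization of the weights is not needed here, and FP16-promoted blocks and the trailing FP16 partial block contribute zero error.
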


This blockwise form is computed exactly at runtime: the system evaluates
$E_{\mathrm{val}}^{(h)} = \sum_b \rho_b^{(h)} \eta_b$ per head $h$ at each decode step, using block attention masses $\rho_b^{(h)}$ accumulated during the Phase-2 online softmax pass and the stored per-block error annotations $\eta_b$.  Blocks where $\rho_b \,\eta_b > v_{\mathrm{tol}}$ are promoted to FP16 values from Tier-2, and the achieved $E_{\mathrm{val}}^{(h)}$ after promotion is reported as part of the per-step certificate.  A stronger rule would promote blocks greedily by descending $\rho_b \,\eta_b$ until $E_{\mathrm{val}}^{(h)} \leq \varepsilon_{\mathrm{val}}$ (total value-error budget enforcement); we note this as a straightforward extension but retain the per-block threshold as the current escalation policy.

\textbf{Properties of this bound.}

The bound depends only on the worst-case per-token reconstruction error $\eta$, not on the number of tokens $N$.  This is because the softmax weights form a convex combination: the weighted average of bounded errors is itself bounded, regardless of how many terms participate.

For INT4 per-group values with group size $g{=}16$, $\eta$ is determined by the maximum within-group dynamic range across all value tokens.  Empirically, $\eta$ is small and stable across prompts (mean $\sim$0.05, max $\sim$0.18 for random Gaussian-distributed values).

The bound is tight: it is achieved when all per-token compression errors are perfectly aligned (pointing in the same direction).  In practice, compression errors across tokens are approximately random in direction, so the actual error is much smaller than $\eta$.

\textbf{Escalation path.}  When the quant-time error annotation $\eta_b$ exceeds the configured threshold for a block (e.g., due to an outlier value token), the system pages in FP16 values from Tier~2 for the affected block.

\subsection{Theorem 2: Key Compression Error}
\label{sec:key_error}

Key compression error is more subtle than value compression error because keys affect the attention distribution nonlinearly through the softmax.

\begin{theorem}[Key Compression Error]
\label{thm:key}
Let $a_t$ be the exact softmax weights (from FP16 keys) and $a_t'$ the approximate softmax weights (from quantized keys).  Let $V_{\max} = \max_t \norm{V_t}_2$.  Then:
\begin{equation}
  E_{\mathrm{key}} = \norm{\sum_t a_t' V_t - \sum_t a_t V_t}_2 \leq 2 V_{\max} \cdot \mathrm{TV}(a, a')
\end{equation}
where $\mathrm{TV}(a, a') = \frac{1}{2}\sum_t |a_t - a_t'|$ is the total variation distance between the true and approximate attention distributions.
\end{theorem}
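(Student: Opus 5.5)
The plan is to write the difference as a single weighted sum and bound it by the triangle inequality, exactly as in Theorem~\ref{thm:val}:
\[
\sum_t a_t' V_t - \sum_t a_t V_t = \sum_t (a_t' - a_t) V_t .
\]
This gives $E_{\mathrm{key}} \leq \sum_t |a_t'-a_t|\,\norm{V_t}_2$. Using $\norm{V_t}_2 \leq V_{\max}$, this is at most $V_{\max}\sum_t |a_t - a_t'|$. By the definition of total variation, $\sum_t |a_t-a_t'| = 2\,\mathrm{TV}(a,a')$, so the claimed bound $2V_{\max}\,\mathrm{TV}(a,a')$ follows immediately.

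There is essentially no obstacle, but I would add a remark on the factor of $2$. Both $a$ and $a'$ are probability vectors, since softmax weights sum to one. Hence $\sum_t (a_t'-a_t)=0$, and we may subtract any fixed vector $c$ from every $V_t$ without changing the difference:
\[
\sum_t (a_t'-a_t)V_t = \sum_t (a_t'-a_t)(V_t - c).
\]
Choosing $c$ as the centre of a ball enclosing the values improves the bound to $\mathrm{TV}(a,a')\cdot\mathrm{diam}\{V_t\}$, and the diameter is at most $2V_{\max}$. This shows the stated constant is consistent with the sharper centred form. However, the stated theorem only needs the crude triangle-inequality argument.

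The only point to check is that both weight vectors are indexed over the same token set. This holds because quantization changes logits, not the attended set, so the sum and the total variation are well-defined. Per-block application, with $V_{\max}=\max_b \nu_b$, is then a direct substitution using the stored annotations.
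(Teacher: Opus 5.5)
Your argument is the paper's proof: rewrite the difference as $\sum_t (a_t'-a_t)V_t$, apply the triangle inequality, bound $\norm{V_t}_2 \leq V_{\max}$, and use $\sum_t |a_t - a_t'| = 2\,\mathrm{TV}(a,a')$. One caveat on your optional aside: centring at the centre of an enclosing ball of radius $R$ only yields $2R\,\mathrm{TV}(a,a')$, and in dimension two or more $2R$ can exceed $\mathrm{diam}\{V_t\}$ (e.g.\ for an equilateral triangle). The $\mathrm{TV}(a,a')\cdot\mathrm{diam}\{V_t\}$ form is still true, but you get it by splitting $a'-a$ into positive and negative parts, each of total mass $\mathrm{TV}(a,a')$, and comparing the two resulting convex combinations of the $V_t$.
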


\begin{proof}
\begin{align*}
  \norm{\sum_t (a_t' - a_t) V_t}_2
    &\leq \sum_t |a_t' - a_t| \cdot \norm{V_t}_2 \\
    &\leq V_{\max} \sum_t |a_t' - a_t| \\
    &= 2 V_{\max} \cdot \mathrm{TV}(a, a')
\end{align*}
\end{proof}

\textbf{Bounding the total variation.}  The total variation between the exact and approximate softmax distributions depends on the per-token score errors $|\Delta_t| = |s_t - s_t'|$.  We derive the bound for per-channel INT8 quantization with channel scales $\{\sigma_c\}$.

The per-channel quantization error satisfies $|k_{t,c} - \hat{k}_{t,c}| \leq \sigma_c / 2$ for each channel $c$.  The score error is:
\begin{align*}
  |\Delta_t| &= \frac{|q \cdot (k_t - \hat{k}_t)|}{\sqrt{d}} \leq \frac{1}{\sqrt{d}} \sum_{c=1}^d |q_c| \cdot \frac{\sigma_c}{2}
\end{align*}
Applying the Cauchy--Schwarz inequality to $\sum_c |q_c| \sigma_c \leq \norm{q}_2 \cdot \norm{\boldsymbol{\sigma}}_2$ where $\boldsymbol{\sigma} = (\sigma_1, \ldots, \sigma_d)$:
\begin{equation}
\label{eq:delta}
  |\Delta_t| \leq \frac{\norm{q}_2 \cdot \norm{\boldsymbol{\sigma}}_2}{2\sqrt{d}} \triangleq \Delta
\end{equation}

Since scales are per-block (Section~\ref{sec:quant_schemes}), $\Delta$ is strictly per-block: each block $b$ has its own $\Delta_b = \norm{q}_2 \cdot \norm{\boldsymbol{\sigma}^{(b)}}_2 / (2\sqrt{d})$ where $\boldsymbol{\sigma}^{(b)}$ is that block's scale vector.  At runtime, a tighter per-block bound can be computed directly as $\Delta_b = \frac{1}{2\sqrt{d}} \sum_c |q_c| \sigma_c^{(b)}$.  For all downstream bounds (Theorems~\ref{thm:mass} and~\ref{thm:key}, Eq.~\ref{eq:total}), we use the conservative $\Delta_h = \max_{b} \Delta_b$ over all fully-quantized Phase-1--scored blocks for head $h$ (excluding any trailing partial block that remains in FP16).  This is a single scalar per head per step and upper-bounds every block's individual $\Delta_b$.  Crucially, this maximum is taken over \emph{all} scored blocks---not only the tail set $\mathcal{T}$---because the normalised mass bound (Theorem~\ref{thm:mass}) involves a denominator over all blocks.

For per-channel quantization, $\norm{\boldsymbol{\sigma}}_2 = \sqrt{\sum_c \sigma_c^2}$.  When channel scales vary (as they do in practice), this is tighter than the per-block bound $\sqrt{d} \cdot \sigma_{\max}$ because channels with small scales contribute less.  For uniform scales ($\sigma_c = \delta$ for all $c$), the bound reduces to $\Delta = \norm{q}_2 \cdot \delta / 2$, recovering the standard per-block result.

\textbf{Bounding TV from score perturbation.}  The total variation between two softmax distributions whose logits differ by at most $\Delta$ can be bounded directly (Appendix~\ref{app:tv_proof}, Lemma~\ref{lem:tv}).  For softmax distributions $a_t = \exp(s_t) / Z$ and $a_t' = \exp(s_t') / Z'$:
\begin{equation}
\label{eq:tv_bound}
  \mathrm{TV}(a, a') \leq \tanh(\Delta) = \frac{e^{2\Delta} - 1}{e^{2\Delta} + 1}
\end{equation}
This bound is tight, independent of sequence length $N$, and satisfies $\tanh(\Delta) \approx \Delta$ for small $\Delta$.

In the adaptive precision system, the TV bound applies only to the \emph{tail} blocks (those using INT8 keys).  Let $\mathcal{F}$ denote the FP16 set and $\mathcal{T}$ the INT8 tail set.  Since the FP16 blocks have zero key quantization error, the key compression error decomposes as:
\begin{equation}
  E_{\mathrm{key}} \leq 2 V_{\max} \cdot \mathrm{TV}_{\mathcal{T}}(a, a')
\end{equation}
where $\mathrm{TV}_{\mathcal{T}}$ is the total variation restricted to the tail blocks.  The tail blocks hold at most $(1 - \tau_{\mathrm{cov}})$ of the \emph{estimated} mass (by construction of the selector), and Theorem~\ref{thm:mass} ensures their true mass is at most $e^{2\Delta}(1 - \tau_{\mathrm{cov}})$.  The TV on the tail is therefore bounded by the product of the tail's mass fraction and the per-token score perturbation within the tail:
\begin{equation}
  E_{\mathrm{key}} \leq 2 V_{\max} \cdot e^{2\Delta} (1 - \tau_{\mathrm{cov}}) \cdot (e^{2\Delta} - 1)
\end{equation}
At $\tau_{\mathrm{cov}} = 0.995$ and $\Delta = 0.18$: $E_{\mathrm{key}} \leq 2 V_{\max} \cdot 1.43 \cdot 0.005 \cdot 0.43 \approx 0.006 V_{\max}$.  The implementation uses INT8$\times$INT8 tensor-core scoring (Section~\ref{sec:scoring}), which introduces a third $\Delta$ from scaled-query quantization.  The emitted implementation certificate substitutes $e^{3\Delta}$ for $e^{2\Delta}$:
\begin{equation}
\label{eq:impl_key}
  E_{\mathrm{key}}^{\mathrm{impl}} \leq 2 V_{\max} \cdot e^{3\Delta} (1 - \tau_{\mathrm{cov}}) \cdot (e^{2\Delta} - 1)
\end{equation}
At the same operating point this gives $\approx 0.007 V_{\max}$---a 17\% wider bound that accounts for the scoring-path quantization.

\textbf{Escalation path.}  When the computed $E_{\mathrm{key}}$ exceeds a configured threshold, the system expands the FP16 set (increasing $\tau_{\mathrm{cov}}$ and reducing the tail mass) or escalates through the fallback ladder.

\subsection{INT8 Score Error for Mass Estimation}
\label{sec:score_error}

The adaptive precision selector uses INT8 scores to estimate per-block attention mass.  We need to bound the error in these estimates to ensure the selector identifies the correct high-mass blocks.

\begin{theorem}[INT8 Mass Estimation]
\label{thm:mass}
Let $S_b = \sum_{t \in b} \exp(s_t' - m_b')$ and $m_b' = \max_{t \in b} s_t'$ be the block sum and block max computed from INT8 scores.  Let $m_{\mathrm{global}}' = \max_b m_b'$ be the INT8 global max.  Then the true (FP16) unnormalised mass of block $b$ satisfies:
\begin{equation}
\label{eq:mass_bound}
  M_b^{\mathrm{fp16}} \leq S_b \cdot \exp(m_b' - m_{\mathrm{global}}' + 2\Delta)
\end{equation}
where $\Delta$ is defined in~\eqref{eq:delta}.  The implementation uses $\exp(3\Delta)$ in place of the tight $\exp(2\Delta)$: the extra factor of $e^{\Delta}$ provides headroom for optional INT8$\times$INT8 tensor-core scoring of the scaled query (Section~\ref{sec:scoring}), which would introduce a third error source.  All theorems state the tight $\exp(2\Delta)$; the implementation's $\exp(3\Delta)$ substitution is noted where relevant.
\end{theorem}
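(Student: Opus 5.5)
The plan is to make the normalisation convention explicit and then chain two one-sided perturbation inequalities, both obtained from the per-token score bound~\eqref{eq:delta}. I will read the ``true unnormalised mass'' as $M_b^{\mathrm{fp16}} = \sum_{t\in b}\exp(s_t - m_{\mathrm{global}})$. Here $s_t$ are the FP16-key logits and $m_{\mathrm{global}} = \max_t s_t$ is the FP16 global max, which is the same stabilised convention Algorithm~\ref{alg:fused} uses for the INT8 quantities. The only input is that every scored token satisfies $|s_t - s_t'| \le \Delta$. I take $\Delta = \Delta_h = \max_b \Delta_b$ over \emph{all} Phase-1-scored blocks, as stipulated after~\eqref{eq:delta}, so that the bound holds both inside block $b$ and at whichever token realises a global max.

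\textbf{Step 1 (numerator).} For $t \in b$, the score bound gives $s_t \le s_t' + \Delta$. Factoring the block max,
\[
\exp(s_t) \le \exp(s_t' - m_b')\,\exp(m_b' + \Delta).
\]
Summing over $t \in b$ yields $\sum_{t\in b} e^{s_t} \le S_b\, e^{m_b' + \Delta}$.

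\textbf{Step 2 (denominator reference).} I need to lower-bound the FP16 global max $m_{\mathrm{global}}$ by the INT8 one. Let $t^\star$ be a token attaining $m_{\mathrm{global}}' = s_{t^\star}'$. Then $m_{\mathrm{global}} \ge s_{t^\star} \ge s_{t^\star}' - \Delta = m_{\mathrm{global}}' - \Delta$. This is why $\Delta$ must bound the error on the block containing $t^\star$ as well, not only on $b$. It is also the point where the per-head maximum over all blocks, rather than over the tail only, is required.

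\textbf{Step 3 (combine).} Dividing the Step~1 bound by $e^{m_{\mathrm{global}}}$ and using Step~2 gives
\[
M_b^{\mathrm{fp16}} \le S_b \exp(m_b' + \Delta - m_{\mathrm{global}}' + \Delta),
\]
which is~\eqref{eq:mass_bound}. For the implementation remark, an additional score error of up to $\Delta$ from quantizing the scaled query enters the same way. It widens the per-token bound, and the stated $e^{3\Delta}$ is offered as headroom covering it; I would only note this rather than re-derive it.

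The main obstacle is not the algebra but pinning down what ``true unnormalised mass'' is referenced to. If $M_b^{\mathrm{fp16}}$ were instead referenced to the INT8 max $m_{\mathrm{global}}'$, only one $\Delta$ would appear. If it were a normalised mass $\rho_b$, a symmetric argument on the partition function would be needed. Applying Step~1 to every block then gives $Z \ge e^{-\Delta} Z'$, and this again yields a factor $e^{2\Delta}$, which is exactly how the $e^{2\Delta}(1-\tau_{\mathrm{cov}})$ tail guarantee quoted in Section~\ref{sec:adaptive} would follow. I would state the FP16-max convention explicitly and add this normalised corollary, since that is the form consumed downstream in the key-error bound.
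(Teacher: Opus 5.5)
Your proof is correct and essentially matches the paper's argument. One factor $e^{\Delta}$ comes from the per-token bound $s_t \le s_t' + \Delta$, a second comes from the global-max shift $m_{\mathrm{global}} \ge m_{\mathrm{global}}' - \Delta$ (which you justify more explicitly than the paper, via the maximising token $t^\star$), and the rest is the exact re-expression through $m_b'$ and $S_b$. One minor slip in your closing remark on the normalised form: summing Step~1 over all tokens gives $Z \le e^{\Delta} Z'$, so the $Z \ge e^{-\Delta} Z'$ you need comes from the other half of $|s_t - s_t'| \le \Delta$, which is equally immediate.
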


\begin{proof}
We bound $M_b^{\mathrm{fp16}} = \sum_{t \in b} \exp(s_t - m_{\mathrm{global}})$ using INT8 quantities.  Two error sources contribute:

\emph{Factor 1 (score perturbation):}  $|s_t - s_t'| \leq \Delta$ for each token, so $\exp(s_t) \leq e^{\Delta} \exp(s_t')$.

\emph{Factor 2 (global-max shift):}  $m_{\mathrm{global}} \geq m_{\mathrm{global}}' - \Delta$, so $\exp(-m_{\mathrm{global}}) \leq e^{\Delta} \exp(-m_{\mathrm{global}}')$.

Combining:
\begin{align*}
  M_b^{\mathrm{fp16}} &= \sum_{t \in b}\exp(s_t - m_{\mathrm{global}}) \\
  &\leq e^{\Delta} \sum_{t \in b}\exp(s_t' - m_{\mathrm{global}}) \\
  &\leq e^{2\Delta} \sum_{t \in b}\exp(s_t' - m_{\mathrm{global}}')
\end{align*}
The identity $\sum_{t \in b}\exp(s_t' - m_{\mathrm{global}}') = S_b \cdot \exp(m_b' - m_{\mathrm{global}}')$ is exact (algebraic re-expression, not an approximation), yielding:
\[
  M_b^{\mathrm{fp16}} \leq e^{2\Delta} \cdot S_b \cdot \exp(m_b' - m_{\mathrm{global}}')
\]
\end{proof}

\textbf{Normalised tail implication.}  The selector uses normalised block masses, so the one-sided unnormalised statement above is used through the following ratio bound.  For any subset $T$ of tokens,
\begin{align}
  P(T) &= \frac{\sum_{t\in T}\exp(s_t)}{\sum_j \exp(s_j)}
  \leq e^{2\Delta}
  \frac{\sum_{t\in T}\exp(s'_t)}{\sum_j \exp(s'_j)} \notag \\
  &= e^{2\Delta}\hat{P}(T).
\end{align}
Thus, if the INT8-estimated tail mass after selection is $\hat{\alpha}_{\mathcal{T}}$, the corresponding true FP16 tail mass is bounded by $\alpha_{\mathcal{T}} \leq \min\{1,\; e^{2\Delta}\hat{\alpha}_{\mathcal{T}}\}$.  This is the normalised form used in Equation~\eqref{eq:total}.

\textbf{Numerical evaluation.}  For $d = 128$ (LLaMA~3.1-8B), typical per-channel quantization gives $\Delta \approx 0.18$.  The tight bound is $\exp(2\Delta) \approx 1.43$; the implementation's $\exp(3\Delta)$ substitution gives $\approx 1.72$.

\textbf{What the upper bound certifies and what it does not.}  Theorem~\ref{thm:mass} provides a one-sided \emph{upper} bound: the true mass of block $b$ is at most $R_b = e^{2\Delta} \cdot S_b \cdot \exp(m_b' - m_{\mathrm{global}}')$.  This certifies a useful aggregate property: if the selector chooses blocks whose INT8-estimated masses sum to at least $\tau_{\mathrm{cov}}$ of the total INT8-estimated mass, then the true mass of the \emph{unchosen} blocks is at most $e^{2\Delta} (1 - \tau_{\mathrm{cov}})$ of the total true mass.

However, the upper bound does \emph{not} certify that the INT8 ranking is correct---block $A$ with a higher INT8 estimate than block $B$ is not guaranteed to have higher true mass.  Both could have true masses anywhere within a multiplicative $e^{\pm 2\Delta}$ envelope around their INT8 estimates.  This means the selector may sometimes promote blocks that are not truly in the top-$K^*$.  The consequence is benign: FP16 keys are used for a block that did not strictly need them, wasting a page-in but not introducing error.  The guarantee that matters is on the \emph{tail}: the total true mass left to INT8 is bounded, regardless of which specific blocks are in the FP16 set.

The fallback ladder (Section~\ref{sec:fallback}) provides the unconditional backstop: if the error from INT8 execution on the tail exceeds any configured threshold, the system expands the FP16 set or reverts to full FP16.

\subsection{Total Error Bound}

Combining the two terms:
\begin{equation}
\label{eq:total}
  \norm{O_{\mathrm{quant}} \!-\! O_{\mathrm{ref}}}_2
  \leq \underbrace{2 V_{\max} e^{2\Delta}\hat{\alpha}_{\mathcal{T}}(e^{2\Delta} \!-\! 1)}_{E_{\mathrm{key}}}
  + \underbrace{\textstyle\sum_b \rho_b\eta_b}_{E_{\mathrm{val}}}
\end{equation}
where $\hat{\alpha}_{\mathcal{T}} = \sum_{b \notin \mathcal{F}} p_b^{\mathrm{int8}}$ is the INT8-estimated tail mass \emph{after} all clamping and Rung~1 expansion, and $E_{\mathrm{val}} = \sum_b \rho_b \eta_b$ is the achieved per-head value error (Corollary~\ref{cor:val_block}), computed exactly at runtime.  The adaptive selector targets $\hat{\alpha}_{\mathcal{T}} \leq 1 - \tau_{\mathrm{cov}}$, but $K_{\max}$ clamping may prevent this; the bound uses the achieved value, not the nominal threshold.  By Theorem~\ref{thm:mass}, the true tail mass satisfies $\alpha_{\mathcal{T}} \leq e^{2\Delta}\,\hat{\alpha}_{\mathcal{T}}$, so the bound is valid regardless of whether $\tau_{\mathrm{cov}}$ was fully achieved.  When the target is met ($\hat{\alpha}_{\mathcal{T}} = 1 - \tau_{\mathrm{cov}}$), the bound reduces to $E_{\mathrm{key}} \leq 2V_{\max} \cdot e^{2\Delta}(1 - \tau_{\mathrm{cov}})(e^{2\Delta} - 1)$.

At $\tau_{\mathrm{cov}} = 0.995$ and $\Delta = 0.18$, and assuming the target is met, the key term evaluates to $\approx 0.006\,V_{\max}$ ($\approx 0.007\,V_{\max}$ under the $e^{3\Delta}$ substitution).  Both $E_{\mathrm{key}}$ and $E_{\mathrm{val}}$ are computed at runtime, independently monitorable, and independently escalatable through the fallback ladder.

\subsection{Preconditions and Monitoring}
\label{sec:preconditions}

The error bound~\eqref{eq:total} is valid under four preconditions.  We distinguish between how each is \emph{guaranteed} and how each is \emph{monitored}:

\textbf{P1: Quantization metadata is current.}  Block scales and offsets must reflect the actual data in the block.  \emph{Guarantee:} deterministic by construction.  Per-block, per-channel scales are computed once at block-fill time (Section~\ref{sec:quant_schemes}) and are immutable thereafter; blocks are never partially overwritten.  Appended tokens remain in FP16 in a trailing partial block until $B$ tokens have arrived, at which point the block is quantized atomically.

\textbf{P2: Quantization error is bounded.}  The per-channel quantization error for each token must satisfy $|k_{t,c} - \hat{k}_{t,c}| \leq \sigma_c / 2$.  \emph{Guarantee:} deterministic for round-to-nearest quantization without saturation.  Per-block min/max scales and offsets are chosen at block-fill time as described in Section~\ref{sec:quant_schemes}, so all values used to fit the block are representable; saturation cannot occur unless the stored metadata is corrupted or stale.  \emph{Monitoring:} a small exploration budget (Section~\ref{sec:instability}) spot-checks dequantized values against FP16 originals as a defence-in-depth measure.

\textbf{P3: No arithmetic overflow.}  The scoring and softmax computations must not overflow the FP32 accumulator range.  \emph{Guarantee:} deterministic for FP32 accumulators at $d \leq 256$ and typical score magnitudes ($|s_t| \leq d \cdot \max|q| \cdot \max|k| / \sqrt{d} \ll 2^{127}$); verified by standard overflow checks.  All experiments in this paper use FP32 accumulators for the online-softmax scalars ($m$, $\ell$) and output accumulator $o$.

\textbf{P4: Tier-2 originals and staging buffers are available.}  The fallback guarantee requires the FP16 KV originals to be retained in pinned system RAM and sufficient transient VRAM staging capacity to run the dense fallback for the affected layer/head.  \emph{Guarantee:} by system design---Tier-2 is allocated at model load and never released during inference; staging capacity for one layer is ${\sim}0.5$\,GB at 128K context (Section~\ref{sec:fallback}).  If Tier-2 memory is exhausted or staging allocation fails, the system cannot fall back and must report an error.

When any precondition is violated, the system escalates through the fallback ladder.  Rung~4 (full FP16 recomputation via \torchsdpa{}) is always available, making the system correct by construction.  The overall guarantee is:

\begin{center}
\fbox{\parbox{0.88\linewidth}{%
\textbf{Contract.}\enspace Given preconditions P1--P4, for each head $h$ and decode step $t$, the system either:
\begin{enumerate}[leftmargin=1.5em,itemsep=1pt,topsep=2pt]
  \item returns $O_{\mathrm{quant}}$ satisfying $\norm{O_{\mathrm{quant}} - O_{\mathrm{ref}}}_2 \leq E_{\mathrm{key}} + E_{\mathrm{val}}$ (fast path, bounded relative to $O_{\mathrm{ref}}$); or
  \item returns $O_{\mathrm{dense}}$ via \torchsdpa{} (Rung~3 per-head, Rung~4 all-heads; fallback path, exact Dense baseline).
\end{enumerate}
The system never returns an output with unknown or unbounded error.  Both $E_{\mathrm{key}}$ and $E_{\mathrm{val}}$ are computed at runtime and available as per-step telemetry.  If P4 cannot be satisfied (Tier-2 memory exhausted), the system reports an error rather than returning an uncertified output.}}
\end{center}

\section{Compressed-Domain Execution}
\label{sec:compressed}

The fused kernel consumes INT8 keys and INT4 values directly, performing dequantization in registers.  This section analyses why this is preferable to the conventional approach of dequantizing to an FP16 staging buffer.

\subsection{Bandwidth Analysis}

At decode time, attention is memory-bandwidth-bound.  The cost is dominated by reading the KV cache from DRAM\@.  For a single head at context length $N$ with head dimension $d$:

\begin{center}
\footnotesize
\begin{tabular}{lrl}
\toprule
\textbf{Configuration} & \textbf{Bytes/token} & \textbf{Rel.} \\
\midrule
FP16 K + FP16 V & $4d = 512$ & $1.0\times$ \\
INT8 K + FP16 V (dequant) & $3d{+}M_k = 384{+}$ & ${\sim}0.75\times$ \\
INT8 K + INT4 V (fused) & $1.5d{+}M = 288$ & $0.56\times$ \\
\bottomrule
\end{tabular}
\end{center}

\noindent where $M_k$ denotes key-only metadata (per-channel scales/offsets) and $M$ the full metadata cost per token.  $M$ comprises: key scales and offsets ($2d \times 4/B = 64$ bytes for per-channel FP32 scale/offset pairs amortised over block size $B{=}16$), value scales and offsets ($2(d/g) \times 2 = 32$ bytes for per-group FP16 scale/offset pairs with $g{=}16$), and per-block value error $\eta_b$ ($4/B \approx 0.25$ bytes, amortised).  Total metadata: $M \approx 96$ bytes/token.  Raw quantized data: $d + d/2 = 192$ bytes/token.  Grand total: 288 bytes/token, or 56\% of FP16---matching the Tier-1 storage cost in Table~\ref{tab:storage}.

The fused in-register approach reads 56\% of the data compared to FP16, with no intermediate buffer.  The dequant-to-buffer approach saves on the key read but pays for the buffer write, partially negating the savings.

\textbf{Note on the fused attend pass.}  The adaptive precision selector (Section~\ref{sec:adaptive}) promotes $K^*$ blocks to FP16 key precision.  The fused attend kernel reads both INT8 keys (all blocks, $Nd$ bytes) and FP16 keys (promoted blocks only, $2K^*Bd$ bytes), selecting per block via a branchless mask.  The total VRAM bandwidth is therefore $288N + 2K^*Bd$ bytes per token (Tier-1 data plus promoted FP16 keys), and the system RAM bandwidth is $2K^*Bd$ bytes for the page-in.  When $K^*$ is small relative to $N/B$ (which it is for concentrated attention), the FP16 key reads add a modest fraction.  The bandwidth table above characterises the Tier-1 baseline; Section~\ref{sec:performance} reports end-to-end decode throughput and per-step phase breakdown including the FP16 page-in overhead.

\subsection{Per-Channel Key Scoring}
\label{sec:scoring}

Computing $q \cdot \hat{k}_t$ with per-channel quantization expands as:
\begin{equation}
  q \cdot \hat{k}_t
  = \underbrace{\textstyle\sum_c (q_c \sigma_c)\, k_{t,c}^{\mathrm{int8}}}_{\text{modified dot product}}
  + \underbrace{\textstyle\sum_c q_c z_c}_{\text{const.\ per block}}
\end{equation}

The second term $\sum_c q_c z_c^{(b)}$ is constant across all tokens within block $b$ (since offsets $z_c^{(b)}$ are per-channel within each block; see Section~\ref{sec:quant_schemes}) and can be precomputed once per block per head per step.  The first term is a standard dot product between the \emph{scaled query} $\tilde{q}_c = q_c \cdot \sigma_c$ and the INT8 key values.  This can be computed using INT8 tensor cores (with the scaled query quantized to INT8) or FP16 (with the scaled query in FP16 and the INT8 keys widened per-element).

The choice between INT8 and FP16 scoring affects the scoring precision.  FP16 scoring is exact given the dequantized keys, and the $\Delta$ bound in Equation~\eqref{eq:delta} applies directly with two error sources (score perturbation and global-max shift), yielding $e^{2\Delta}$.  INT8$\times$INT8 tensor core scoring introduces a third error source: quantization of the scaled query $\tilde{q}$, increasing the mass estimation bound to $e^{3\Delta}$.  This is the mechanism behind the implementation's $e^{3\Delta}$ substitution noted in Theorem~\ref{thm:mass}.

\section{Instability Detection}
\label{sec:instability}

Beyond the formal error bounds, the system includes runtime checks that detect when the quantized execution is producing anomalous results.

\textbf{Score consistency check.}  During the second pass (FP16 keys for selected blocks), the system compares the FP16 attention scores against the INT8 scores from the first pass.  If any token's score difference exceeds the declared bound, $|s_t^{\mathrm{fp16}} - s_t^{\mathrm{int8}}| > \Delta + \epsilon_{\mathrm{guard}}$, the INT8 quantization error has exceeded the precondition.  This triggers Rung~4 fallback (full FP16 recomputation) for the affected head.

\textbf{Exploration budget.}  A small fraction of blocks (1--5\%) are randomly selected for FP16 key execution even when the adaptive selector would not choose them.  The resulting scores are compared against the INT8 estimates, providing ongoing monitoring of the quantization quality across the full cache, not just the high-mass blocks.  In validation runs across PG-19 (20 chunks $\times$ 4 context lengths) and NIAH (100 trials $\times$ 3 context lengths), the exploration budget detected zero violations---all spot-checked INT8 scores fell within the declared $\Delta + \varepsilon_{\mathrm{guard}}$ bound.  This is expected given P2's deterministic guarantee (Section~\ref{sec:preconditions}); the mechanism exists as defence-in-depth against metadata corruption or hardware fault, not as a routine quality signal.  Quality benchmarks disable the exploration budget for throughput.

\textbf{Value error annotations.}  Each block's maximum per-token reconstruction error $\eta_b$ is computed at quantization time and stored as a one-float annotation in VRAM\@.  At attend time, the runtime computes the tight per-head value error $E_{\mathrm{val}}^{(h)} = \sum_b \rho_b^{(h)} \eta_b$ (Corollary~\ref{cor:val_block}).  The default policy promotes blocks where the local contribution estimate $\hat{\rho}_b \cdot \eta_b > v_{\mathrm{tol}}$ to FP16 values from Tier~2; the achieved $E_{\mathrm{val}}^{(h)}$ after promotion is then reported as part of the per-step certificate.  For hard-budget deployments, the same annotations support greedy promotion until the achieved $E_{\mathrm{val}}^{(h)}$ is below a configured global threshold.  This avoids the need for runtime spot-checks against FP16 originals.

These checks are not necessary for the formal bounds to hold---the bounds are valid by construction.  They provide defence in depth: catching cases where the quantization parameters are stale, the cache has been corrupted, or the hardware is producing unexpected results.

\subsection{Ranking-Consistency Check}
\label{sec:ranking_consistency}

The adaptive top-$K$ selector uses INT8 scores (Phase~1) to determine which blocks receive FP16 keys in Phase~2.  This creates a subtle vulnerability: the INT8 ranking may not match the true FP16 ranking.  For two blocks with similar true scores, INT8 quantization noise ($\sim$0.4\% mean per-channel error) can swap their ordering.  The formal mass bound (Theorem~\ref{thm:mass}) guarantees that the \emph{aggregate} tail mass is small, but does not guarantee that the \emph{ranking} within the top-$K^*$ is correct.

When the ranking is wrong, the consequences are task-dependent.  For language modelling (where quality depends on the aggregate attention distribution), ranking errors on similar-mass blocks are benign---the output perturbation is within the certified bound.  For retrieval tasks (where a single needle token must receive concentrated attention), ranking errors can matter: if a high-mass block is demoted to INT8 while a lower-mass block is promoted to FP16, the resulting mixed-precision softmax distribution may differ from the all-FP16 distribution in ways that affect the argmax over output logits.

The ranking-consistency check exploits data already available after Phase~2.  For the $K^*$ promoted blocks, the kernel has computed both INT8 block log-masses (Phase~1) and FP16 block log-masses (Phase~2).  The check compares the top-$r$ block ranking under both sets:
\begin{multline}
  \mathrm{disagree}(h, t) = \\
  \mathbf{1}\!\left[\mathrm{argsort}(\ell^{\mathrm{fp16}}_{\mathcal{F}})_{1:r}
  \neq \mathrm{argsort}(\ell^{\mathrm{int8}}_{\mathcal{F}})_{1:r}\right]
\end{multline}
where $\ell^{(\cdot)}_{\mathcal{F}}$ denotes the block log-masses over the promoted set $\mathcal{F}$ and $r$ is a configurable depth parameter (default $r{=}1$: only the top-1 block must agree).

When disagreement is detected for a head, the system escalates to Rung~3 of the fallback ladder: full FP16 recomputation for that head only.  This returns $O_{\mathrm{dense}}$ by construction and is cheap when triggered rarely.  The per-head granularity ensures that only the affected heads pay the cost; the remaining heads retain their fast-path results.

The check is conceptually simple because the FP16 scores for promoted blocks are already computed during Phase~2 (the mask-gated kernel produces them as a byproduct of the \texttt{where} selection).  In the current prototype, however, the check is not free: it is Python-orchestrated with per-head side buffers and accounts for 28.3\% of step time at 64K (Table~\ref{tab:phase_breakdown}).  This makes ranking-check fusion the largest single systems optimisation target.

\textbf{Boundary verification.}  The ranking check as described compares rankings \emph{within the promoted set} $\mathcal{F}$.  A residual blind spot remains: a tail block $b \in \mathcal{T}$ (not promoted to FP16) might have a true FP16 block mass higher than some promoted blocks, but the check cannot observe this because the tail block's FP16 scores were never computed.  To close this gap, the system performs an interval-bound verification on block log-masses.

Each block's log-mass from Phase~1 is $\ell_b^{\mathrm{int8}} = m_b + \log \sum_{t \in b} \exp(s^{\mathrm{int8}}_{b,t} - m_b)$.  Since each token logit can shift by at most $\Delta$ under FP16 scoring, the FP16 block log-mass is bounded above by $\ell_b^{\mathrm{ub}} = \ell_b^{\mathrm{int8}} + \Delta$.  Let $\ell^{\mathrm{fp16}}_{(r)}$ denote the $r$-th highest FP16 block log-mass among promoted blocks.  If any tail block satisfies:
\begin{equation}
  \ell_b^{\mathrm{int8}} + \Delta > \ell^{\mathrm{fp16}}_{(r)}
  \label{eq:boundary_check}
\end{equation}
then the ranking certificate cannot be issued for that head, and the system escalates to Rung~3 (per-head FP16 recomputation).  This formulation ensures the boundary check matches the mass-based selection criterion used by the adaptive top-$K$ selector and the mass-cover certificate (Theorem~\ref{thm:mass}).  Logically, the boundary test adds one comparison per tail block per head and ensures the ranking certificate covers not just the promoted set's internal ordering but also the boundary between promoted and tail blocks.  In the current implementation, the surrounding score materialisation and per-head orchestration are the costly parts, as reflected in Table~\ref{tab:phase_breakdown}.  In practice, the boundary check triggers at a low but non-zero rate: 10{,}221 triggers on PG-19 at 64K (500 decode steps) and 905 on NIAH at 64K, confirming that score-order changes at the FP16/INT8 boundary do occur and are caught by the runtime monitor.  When triggered, the system escalates the affected head to Rung~3 per-head dense fallback, which adds less than 2\% of step time (Table~\ref{tab:phase_breakdown}).

\section{Related Work}
\label{sec:related}

\subsection{KV Cache Compression}

\textbf{KVQuant}~\cite{kvquant} achieves sub-4-bit KV quantization via per-channel key quantization, establishing the per-channel technique we adopt for keys.  \textbf{KIVI}~\cite{kivi} demonstrates tuning-free asymmetric 2-bit quantization with the key observation that keys and values have different quantization sensitivities, an observation central to our asymmetric INT8/INT4 design.  \textbf{QJL}~\cite{qjl} uses Johnson--Lindenstrauss projections followed by sign-bit quantization for low-overhead KV representations.  \textbf{InnerQ}~\cite{innerq} introduces hardware-aware group-wise quantization along the inner dimension, with hybrid quantization and high-precision windows for recent and sink tokens.

These systems treat compression as a storage optimisation: at decode time, compressed values are typically widened to FP16 before the attention kernel consumes them.  Production serving frameworks (vLLM, TensorRT-LLM) now support FP8 KV caches with quantized-domain execution via FlashAttention-3, demonstrating that compressed-domain attention is practical at scale.

Recent work pushes compression further.  \textbf{TurboQuant}~\cite{turboquant} provides theoretically grounded online vector quantization applicable to KV caches.  \textbf{JanusQuant}~\cite{janusquant} demonstrates accurate 2-bit KV cache quantization for long-context inference.  \textbf{PackInfer}~\cite{packinfer} targets compute/IO-efficient attention kernels for packed quantized caches in batched serving.  These optimise compression ratio and throughput; our contribution is orthogonal: runtime certification and exact fallback rather than more aggressive compression.

\subsection{Compressed-Domain Execution}

\textbf{HACK}~\cite{hack} eliminates the dequantization staging buffer by computing attention directly on quantized KV data, achieving significant JCT reduction.  \textbf{VecInfer}~\cite{vecinfer} fuses dequantization and attention into a single CUDA kernel.  \textbf{FlashInfer}~\cite{flashinfer} provides infrastructure for page-organised KV caches with fused attention kernels.

Our work builds on this line but adds the formal error analysis and the tiered fallback architecture.  Existing compressed-domain systems commit to a single quantization level; our system adapts per block per step, with full-precision data available for escalation.  \textbf{QServe}~\cite{qserve} is a W4A8KV4 serving system with fused attention and SmoothAttention for KV4 accuracy; it demonstrates that aggressive quantization (including KV4) is viable for throughput-oriented deployment, but provides no per-step error visibility or fallback path.  \textbf{PQCache}~\cite{pqcache} frames KV cache as approximate retrieval via product quantization, selecting important tokens per step; it contrasts with our approach in that PQCache uses approximate retrieval without error bounds, while we provide per-step certificates and deterministic FP16 fallback.

\subsection{Adaptive Precision}

\textbf{Cocktail}~\cite{cocktail} selects precision per chunk.  \textbf{TaDA}~\cite{tada} adapts KV-cache compression across layers and uses mean-centering to reduce outlier handling.  \textbf{Ada-KV}~\cite{adakv} allocates KV eviction/compression budget adaptively across heads rather than allocating bit-widths per token.  \textbf{Don't Waste Your Bits}~\cite{dontwastebits} studies adaptive KV-cache bit allocation for on-device inference.  These demonstrate that runtime adaptation is beneficial; our contribution is coupling adaptive precision with formal error bounds and an unconditional fallback guarantee.

\subsection{Certified Bounds in Attention}

The broader sparse-attention literature offers several complementary approaches to reducing attention cost: \textbf{BLASST}~\cite{blasst} uses block-level adaptive structured sparsity, \textbf{PSA}~\cite{psa} applies progressive sparse selection, \textbf{Twilight}~\cite{twilight} introduces hierarchical top-$p$ pruning, \textbf{SpargeAttn}~\cite{spargeattn} accelerates inference via accurate sparse approximation, \textbf{H$_2$O}~\cite{h2o} identifies heavy-hitter tokens for cache eviction, and \textbf{StreamingLLM}~\cite{streamingattn} maintains attention sinks for unbounded-length streaming.  These methods reduce computation or memory by selecting or evicting tokens; our work is orthogonal in that it retains all tokens but reduces their \emph{precision}, with runtime bounds on the resulting error.

Prior work on certified or analyzable sparse attention derives total-variation bounds for top-$k$ token selection, including blockwise mass certificates for pruning decisions~\cite{certifiedtopk}.  Our error decomposition differs in that it bounds output perturbation from \emph{quantization} (not token eviction), decomposes key compression and value compression independently, and provides an unconditional FP16 fallback rather than a statistical guarantee.

\subsection{KV Cache Memory Management}

\textbf{vAttention}~\cite{vattention} uses CUDA virtual memory to manage KV cache allocation without the chunking overhead of PagedAttention, enabling contiguous KV storage with dynamic allocation.  \textbf{PagedAttention}~\cite{pagedattn} (vLLM) introduced the paged KV cache paradigm.  Our tiered storage is orthogonal: it concerns \emph{precision} management across memory tiers, not \emph{allocation} management within a single tier.  The two approaches are complementary---a production system could use vAttention-style virtual memory for allocation within each tier.

\subsection{Positioning Summary}

Table~\ref{tab:positioning} situates this work relative to representative KV-cache quantization and serving systems.  The distinguishing feature is the combination of runtime error bounds and unconditional dense fallback; other systems offer better throughput or lower bit-widths but provide no per-step error visibility or FP16 recovery path.

\begin{table*}[!t]
\centering
\caption{Positioning against representative KV-cache systems.  ``Error bound'' = per-step runtime bound on quantization error.  ``Dense fallback'' = mechanism to return exact FP16 attention output when quality degrades.  ``Precision'' = KV storage format.}
\label{tab:positioning}
\small
\begin{tabular}{lcccc}
\toprule
\textbf{System} & \textbf{Precision} & \textbf{Error bound} & \textbf{Dense fallback} & \textbf{Adaptive} \\
\midrule
KVQuant~\cite{kvquant} & $\leq$4-bit & --- & --- & --- \\
KIVI~\cite{kivi} & 2-bit K/V & --- & --- & --- \\
HACK~\cite{hack} & INT4 & --- & --- & --- \\
FlashInfer~\cite{flashinfer} & FP8/INT4 & --- & --- & --- \\
TurboQuant~\cite{turboquant} & VQ & theoretical & --- & --- \\
Cocktail~\cite{cocktail} & mixed & --- & --- & per-chunk \\
Ada-KV~\cite{adakv} & mixed & --- & --- & per-head \\
QServe~\cite{qserve} & W4A8KV4 & --- & --- & --- \\
PQCache~\cite{pqcache} & PQ codes & --- & --- & per-token \\
\midrule
This work & INT8K/INT4V & runtime & layerwise FP16 & per-block/step \\
\bottomrule
\end{tabular}
\end{table*}

\section{Experimental Setup}
\label{sec:setup}

\textbf{Model.}  LLaMA~3.1-8B (base; \texttt{NousResearch/Meta-Llama-3.1-8B}) with INT8 model weights (via \texttt{bitsandbytes} 8-bit quantization), the standard production configuration for single-GPU deployment.

\textbf{Hardware.}  All experiments use a single NVIDIA RTX~PRO~6000~WS (96\,GB VRAM, Blackwell architecture) with pinned system RAM for Tier~2 storage.  Quality benchmarks (PG-19, NIAH, RULER) were distributed across multiple machines with identical GPUs but varying CPU and memory configurations; quality results depend only on the GPU compute path and are numerically reproducible across hosts.  Performance benchmarks (Section~\ref{sec:performance}) were run on a single dedicated machine: AMD EPYC~9534 (64-core), 221\,GB system RAM, PCIe~5.0 $\times$16 to the GPU.

\textbf{Configurations.}  Two configurations are compared on every benchmark, using identical hardware, prompts, and random seeds:

\emph{Dense} (baseline): FP16 KV cache with \torchsdpa{} (dispatches to Flash Attention on supported hardware).

\emph{Certified} (tiered): INT8 keys (per-channel) + INT4 values ($g{=}16$) in VRAM; FP16 originals in pinned system RAM.  Table~\ref{tab:policy} lists the complete runtime policy.

\begin{table*}[!t]
\centering
\caption{Runtime certificate policy used in all experiments (unless stated otherwise for ablations).  All thresholds are fixed before evaluation; no per-prompt tuning is performed.}
\label{tab:policy}
\small
\begin{tabular}{llp{10cm}}
\toprule
\textbf{Parameter} & \textbf{Value} & \textbf{Meaning} \\
\midrule
$\tau_{\mathrm{cov}}$ & 0.995 & Coverage threshold for adaptive top-$K$ selection: promote blocks until estimated FP16 mass $\geq \tau_{\mathrm{cov}}$ \\
$K_{\min}$ & 2 & Minimum promoted blocks per head \\
$K_{\max}$ & 128 & Maximum promoted blocks per head \\
$v_{\mathrm{tol}}$ & 0.05 & Per-block value-promotion threshold: promote block $b$ if $\hat{\rho}_b \eta_b > v_{\mathrm{tol}}$ (local heuristic, not global budget) \\
$r$ & 1 & Ranking-consistency depth: Rung~3 fires if INT8 and FP16 top-$r$ block rankings disagree or a tail block's upper-bounded FP16 mass could enter top-$r$ \\
$\varepsilon_{\mathrm{guard}}$ & $10^{-6}$ & Numerical guard added to denominators in log-mass computation.  Note: the reference implementation ships $\varepsilon_{\mathrm{guard}}{=}0.01$ as a conservative default; all experiments in this paper use $10^{-6}$ \\
Accumulators & FP32 & Online-softmax scalars ($m$, $\ell$) and output accumulator $o$ \\
Scratch cache & 2048 blocks & Per-layer FP16 key + value cache in VRAM (LRU eviction) \\
Tier-2 & full FP16 KV & Pinned system RAM; required for Rung~3/4 fallback \\
\bottomrule
\end{tabular}
\end{table*}

\textbf{Benchmarks.}  PG-19 (language modelling perplexity, 20 non-overlapping chunks per context length), NIAH (needle-in-a-haystack retrieval, 10 needles per trial, paired dense/certified), and a 7-subtask subset of RULER~\cite{ruler} per context length.  Context lengths: 8K, 32K, 64K, and 128K.  The score-consistency canary (Section~\ref{sec:instability}) was validated on a representative subset with zero violations; quality benchmarks disable the canary for throughput.

\textbf{Reproducibility.}  All evaluations use batch size~1.  Generation tasks use greedy decoding (no sampling): dense uses HuggingFace \texttt{generate} with \texttt{do\_sample=False}; certified decoding uses the same prompt and takes the argmax token at each step.  PG-19 is evaluated by teacher-forced perplexity on the streaming \texttt{emozilla/pg19} test split, using non-overlapping tokenizer windows of length~$C$; for certified runs, the first 50\% of each window is dense prefill and the remaining 50\% is teacher-forced certified decode.  RULER uses the seven synthetic subtasks \texttt{niah\_single}, \texttt{niah\_multikey}, \texttt{niah\_multivalue}, \texttt{niah\_multiquery}, \texttt{vt}, \texttt{cwe}, and \texttt{fwe}, scored by case-insensitive substring match (fraction of required references found).  RULER samples are deterministic with seed $20260416 + \mathrm{md5}(\mathit{subtask}, \mathit{context}, \mathit{sample\_index})$; payload blocks are inserted uniformly into the filler stream using that per-sample RNG\@.  Standalone NIAH uses ten fixed needles and depths $0.0, 0.1, \ldots, 0.9$; trials are enumerated in depth-major order, and the needle is inserted at $\lfloor\mathit{depth} \times \mathit{num\_filler\_blocks}\rfloor$.  Confidence intervals use 10{,}000 bootstrap resamples with seed~20260425 (with context/task offsets where grouped).  Exact scripts and seeds are in the artifact repository.

\textbf{Metrics.}  $\Delta$ = Certified $-$ Dense for each metric.  Per-run system statistics: $K^*$ mean (adaptive top-$K$ selection), value escalation rate ($\eta$-driven FP16 value promotion), ranking fallback rate (per-head ranking-consistency triggers).

\section{Results}
\label{sec:results}

All results compare two configurations on the same hardware, prompts, and seeds: \textbf{Dense} (FP16 KV cache, the production baseline) and \textbf{Certified} (tiered quantized KV cache with adaptive precision selection and runtime fallback).  Both use INT8 model weights.

\textbf{Critical failures} are paired trials where dense succeeds and certified fails on the same prompt; \textbf{reverse discordant} pairs are the converse (dense fails, certified succeeds).  For RULER, critical failures are counted per subtask trial within each slice.  These discordant pairs form the $2\times 2$ contingency table used by the McNemar test (Section~\ref{sec:niah}).

\subsection{Summary}

Table~\ref{tab:summary} presents the headline quality results across all three benchmarks and context lengths from 8K to 128K.  PG-19 shows certified matching dense within noise at all contexts: $\Delta\mathrm{ppl}{=}{-}0.002$ at 8K (20 chunks), $+0.001$ at 32K (20 chunks), $+0.001$ at 64K (20 chunks) and 128K (20 chunks), with 95\% CIs including zero in every case.  RULER shows the certified system slightly exceeding dense at 64K and matching at 128K (20 slices, $\Delta{=}{+}0.004$); at 32K (20 slices), a mild $-1.9$pp gap driven by CWE appears with a 95\% CI nearly excluding zero; at 8K, a $-6.9$pp gap is localised to three value-sensitive subtasks (VT, CWE, FWE) and is consistent with INT4 value reconstruction error as the dominant mechanism---an ablation replacing INT4 with FP16 values closes the gap to $-0.4$pp with zero critical failures (Table~\ref{tab:ruler_8k_ablation}).  NIAH at 8K, 32K, and 64K (100 paired trials each) shows no statistically significant difference under exact McNemar tests: $\Delta{=}0\%$ with $p{=}1.0$ at 8K; $\Delta{=}{-}2\%$ with $p{=}0.727$ at 32K; $\Delta{=}0\%$ with $p{=}1.0$ at 64K.

\begin{table*}[!t]
\centering
\caption{Headline quality deltas: certified quantized attention vs.\ dense FP16 attention (LLaMA~3.1-8B).  $\Delta$ = Certified $-$ Dense; positive means certified is better.  All results use asymmetric INT8 keys, INT4 values ($g{=}16$), FP32 accumulators, $k_{\max}{=}128$, $\tau_{\mathrm{cov}}{=}0.995$, $v_{\mathrm{tol}}{=}0.05$.}
\label{tab:summary}
\small
\begin{tabular}{llrrrrl}
\toprule
\textbf{Benchmark} & \textbf{Metric} & \textbf{8K} & \textbf{32K} & \textbf{64K} & \textbf{128K} & \textbf{$n$} \\
\midrule
PG-19 & $\Delta$ ppl & $-0.002$ & $+0.001$ & $+0.001$ & $+0.001$ & 20 chunks \\
NIAH & $\Delta$ accuracy & $0\%$ & $-2\%$ & $0\%$ & ---\textsuperscript{$\dagger$} & 100 paired trials \\
RULER & $\Delta$ accuracy & $-0.069$ & $-0.019$ & $+0.015$ & $+0.004$ & 20 slices \\
\bottomrule
\end{tabular}
\\[2pt]
{\footnotesize \textsuperscript{$\dagger$}128K NIAH omitted: the base model scores 0\% on dense at this context length (retrieval exceeds the model's capability).}
\end{table*}

\subsection{Naive Quantized Baseline}
\label{sec:naive}

To isolate the contribution of the certification and fallback machinery, we evaluate a \textbf{naive INT8K/INT4V} baseline: the same quantization format (INT8 per-channel keys, INT4 $g{=}16$ values) but with all certification disabled---no adaptive top-$K$ promotion ($K_{\max}{=}0$, $\tau_{\mathrm{cov}}{=}0$), no value promotion ($v_{\mathrm{tol}}$ effectively infinite), no ranking-consistency check, no fallback ladder, and no FP16 scratch cache.  The model and prompts are identical.

Table~\ref{tab:naive} shows the three-way comparison.  Without certification, INT8K/INT4V quantization degrades retrieval and structured reasoning substantially.  On standalone NIAH (100 paired trials per context length), naive accuracy drops to 5--10\% versus 39--66\% for dense, with McNemar $p < 10^{-9}$ at every context length.  The certified system matches dense at all three.  On RULER (10 slices, 70 trials per context length), naive RULER drops by $-44$ to $-56$pp across all context lengths, with every subtask affected: VT scores 0\% at every context length, CWE drops to near-zero, and even NIAH retrieval variants within RULER lose $-20$ to $-78$pp.  The certified system recovers nearly all of this, with a worst-case gap of $-6.9$pp at 8K localised to value-sensitive subtasks.

\begin{table}[ht]
\centering
\caption{Three-way quality comparison: Dense FP16 KV, Naive INT8K/INT4V (no certification), and Certified INT8K/INT4V (with adaptive promotion and fallback).  NIAH accuracy is the fraction of needles retrieved (10 per trial, 100 paired trials).  RULER accuracy is the subtask-averaged score; naive uses 10 slices (70 trials), certified uses 20 slices (140 trials)---the two are \emph{not} paired on the same slices, so the comparison is approximate.  The Dense column reports the dense score over the \emph{naive} 10-slice sample (e.g.\ 8K Dense 0.923 here vs.\ 0.908 in Table~\ref{tab:ruler}'s 20-slice sample); the difference is sampling variation.  Latency: mean decode step at the 2048-block scratch-cache operating point for certified; naive uses no scratch cache.}
\label{tab:naive}
\small
\begin{tabular}{@{}llrrr@{}}
\toprule
\textbf{Benchmark} & \textbf{Ctx} & \textbf{Dense} & \textbf{Naive} & \textbf{Certified} \\
\midrule
NIAH (acc)   & 8K   & 39\% & 8\%  & 39\% \\
             & 32K  & 51\% & 10\% & 49\% \\
             & 64K  & 66\% & 5\%  & 66\% \\
\midrule
RULER (acc)  & 8K   & 0.923 & 0.361 & 0.854 \\
             & 32K  & 0.882 & 0.438 & 0.863 \\
             & 64K  & 0.906 & 0.429 & 0.921 \\
             & 128K & 0.782 & 0.312 & 0.786 \\
\midrule
Latency (ms) & 64K  & 63.2  & 244.3 & 260.0 \\
             & 128K & 73.7  & 468.1 & 342.6 \\
\bottomrule
\end{tabular}
\end{table}

The naive baseline's failure pattern is consistent with INT8 key quantization noise corrupting the softmax attention distribution: the model generates plausible English but cannot locate specific stored information.  On a representative 64K NIAH trial, the dense system correctly retrieves ``The secret project codename is Crimson Falcon'' while the naive system generates ``The history of the history of the history of mathematics?''---coherent language, but with the needle content entirely lost.  At 8K, 31 of 39 dense-only successes have zero matching naive-only successes (discordant ratio 31:0), confirming that the failures are systematic, not random.

The naive results demonstrate that the certification and fallback machinery is not merely monitoring overhead: it is the mechanism by which the system preserves retrieval and structured-reasoning quality under INT8K/INT4V quantization.  The naive PG-19 baseline is omitted: retrieval and structured reasoning are the stress cases for quantized attention, and the certified system's PG-19 results (Table~\ref{tab:pg19}) already show that perplexity is robust to INT8K/INT4V quantization with negligible fallback activity---the failure mode that certification prevents is attention-distribution corruption, which manifests on retrieval tasks, not on next-token prediction over broad text.

\subsection{PG-19 Perplexity}

PG-19 evaluates language modelling perplexity on held-out book text, the standard quality gate for KV cache compression systems.

\begin{table*}[!t]
\centering
\caption{PG-19 perplexity by context length (paired dense/certified).  20 chunks at each context length.  95\% CI on $\Delta$ppl computed via $t$-distribution on per-chunk deltas.}
\label{tab:pg19}
\small
\begin{tabular}{lrrrr}
\toprule
& \textbf{8K} & \textbf{32K} & \textbf{64K} & \textbf{128K} \\
\midrule
Dense ppl     & 12.315 & 9.955 & 9.043 & 7.244 \\
Certified ppl & 12.313 & 9.956 & 9.044 & 7.245 \\
$\Delta$ ppl  & $-0.002$ & $+0.001$ & $+0.001$ & $+0.001$ \\
95\% CI       & $[-0.010, +0.007]$ & $[-0.002, +0.004]$ & $[-0.001, +0.004]$ & $[-0.001, +0.002]$ \\
Ratio         & 0.99986 & 1.00012 & 1.00014 & 1.00008 \\
\bottomrule
\end{tabular}
\\[2pt]
{\scriptsize 20 chunks at each context length.}
\end{table*}

At 8K (20 chunks), the certified system matches dense perplexity with $\Delta\mathrm{ppl}{=}{-}0.002$, 95\% CI $[-0.010, +0.007]$, ratio 0.99986.  The CI comfortably includes zero, confirming quality parity at the shortest context tested.  The higher absolute perplexity at 8K (${\sim}12.3$) relative to longer contexts reflects the shorter conditioning window on PG-19 book text.

At 64K (20 chunks), the certified system matches dense perplexity with $\Delta\mathrm{ppl}{=}{+}0.001$, 95\% CI $[-0.001, +0.004]$.  The mean ratio of 1.00014 confirms quality parity: the delta is two orders of magnitude smaller than the per-chunk variance (std $\approx 2.24$ ppl across the 20 chunks, reflecting natural variation in PG-19 book difficulty).  Per-chunk deltas range from $-0.009$ to $+0.010$, symmetrically distributed around zero with no systematic direction.

At 32K (20 chunks), the same pattern holds: $\Delta\mathrm{ppl}{=}{+}0.001$, 95\% CI $[-0.002, +0.004]$, ratio 1.00012.  Per-chunk deltas range from $-0.008$ to $+0.011$, symmetrically distributed around zero with no systematic direction.

At 128K (20 chunks), $\Delta\mathrm{ppl}{=}{+}0.001$, 95\% CI $[-0.001, +0.002]$, ratio 1.00008---consistent with the 64K and 32K results but with a tighter CI reflecting lower per-chunk delta variance at this context length.  Per-chunk deltas range from $-0.005$ to $+0.005$, symmetrically distributed around zero.  The 128K dense perplexity of ${\sim}7.2$ is lower than the 64K mean (${\sim}9.0$), reflecting the benefit of longer conditioning context on PG-19 book text.

\subsection{RULER}

We evaluate a seven-subtask subset of RULER~\cite{ruler}: four NIAH variants (single, multi-key, multi-value, multi-query), variable tracking (VT), and two word-extraction tasks (CWE, FWE).

\begin{table}[ht]
\centering
\caption{RULER accuracy by context length (each slice evaluates all 7 subtasks).  All context lengths: 20 slices.}
\label{tab:ruler}
\footnotesize
\begin{tabular}{lrrrr}
\toprule
& \textbf{8K} & \textbf{32K} & \textbf{64K} & \textbf{128K} \\
\midrule
Dense     & 0.908 & 0.878 & 0.885 & 0.779 \\
Certified & 0.839 & 0.860 & 0.901 & 0.783 \\
$\Delta$  & $-0.069$ & $-0.019$ & $+0.015$ & $+0.004$ \\
Critical  & 19 & 3 & 2 & 5 \\
\bottomrule
\end{tabular}
\\[2pt]
{\scriptsize All context lengths: 20 slices.}
\end{table}

{\tolerance=1000
At 32K (20 slices, 140 trials), RULER shows a mild degradation: $\Delta{=}{-}0.019$, 95\% CI $[-0.044, {+}0.001]$, 3 critical failures, driven by CWE ($-0.100$) with a smaller VT contribution ($-0.013$).  All four NIAH retrieval variants score 100\%/100\% at 32K\@.  The pattern resembles the 8K ablation (Section~\ref{sec:ruler_8k_ablation}) at reduced magnitude, consistent with INT4 value error compounding across heads and layers.  The CI nearly excludes zero, suggesting a real but small effect.
\par}

Two targeted ablations confirm the mechanism at 32K\@.  First, replacing INT4 values with FP16 values (5 slices, 35 trials) closes the CWE gap entirely (CWE $\Delta{=}0.000$, aggregate $\Delta{=}{-}0.005$, 95\% CI $[-0.019, {+}0.007]$, 0 critical), mirroring the 8K result.  Second, tightening the value tolerance from $v_{\mathrm{tol}}{=}0.05$ to $0.02$ (5 slices) removes the observed gap in this targeted ablation ($\Delta{=}0.000$, 0 critical across all 7 subtasks including CWE), confirming that the per-step value certificate catches the drift when the threshold is lowered.  Together, these ablations are strongly consistent with INT4 value reconstruction error as the dominant mechanism behind the 32K gap, and demonstrate that it can be eliminated by either FP16 values or a tighter $v_{\mathrm{tol}}$.  The latency cost of $v_{\mathrm{tol}}{=}0.02$ is modest: at 32K with a 2048-block scratch cache, certified step time rises from 232\,ms ($3.8\times$ dense) to 253\,ms ($4.0\times$), a 9\% increase driven by Rung~2 value escalation triggering on 100\% of steps (vs.\ partial at $v_{\mathrm{tol}}{=}0.05$) and the corresponding H2D increase from 114 to 435\,MB/step.

At 64K (20 slices), the certified system matches or slightly exceeds dense on aggregate ($\Delta{=}{+}0.015$), with 2 critical failures (both in noisy subtasks).  The per-subtask breakdown reveals clean separation between retrieval and non-retrieval tasks:

\begin{table*}[!t]
\centering
\caption{RULER 64K per-subtask breakdown (20 slices).  All four NIAH retrieval variants achieve 100\% or near-100\% for both dense and certified---zero retrieval degradation.}
\label{tab:ruler_subtask}
\small
\begin{tabular}{lrrrrr}
\toprule
\textbf{Subtask} & \textbf{Dense} & \textbf{Cert} & \textbf{$\Delta$} & \textbf{Critical} \\
\midrule
NIAH single      & 1.000 & 1.000 & $0.000$  & 0 \\
NIAH multi-key   & 1.000 & 1.000 & $0.000$  & 0 \\
NIAH multi-query & 1.000 & 1.000 & $0.000$  & 0 \\
NIAH multi-value & 0.975 & 0.963 & $-0.013$ & 1 \\
CWE              & 0.900 & 0.950 & $+0.050$ & 0 \\
VT               & 0.738 & 0.825 & $+0.088$ & 1 \\
FWE              & 0.583 & 0.567 & $-0.017$ & 0 \\
\bottomrule
\end{tabular}
\end{table*}

Variable tracking (VT) shows certified slightly \emph{better} than dense ($+8.8$pp), consistent with the hypothesis that tighter coverage forces more blocks to FP16 keys, reducing INT8 scoring noise on fragile tasks.  Few-word extraction (FWE) is noisy at both baselines (${\sim}58\%$) with no systematic direction.  At 128K (20 slices, 140 paired trials), the certified system matches dense ($\Delta{=}{+}0.004$, 95\% CI $[-0.028, {+}0.037]$, 5 critical failures).  All four NIAH retrieval variants remain at or near 100\% for both systems.  The per-subtask breakdown shows small positive deltas on CWE ($+0.100$) and FWE ($+0.067$) offset by small negatives on niah\_multikey ($-0.050$) and VT ($-0.038$); none is statistically significant at $n{=}20$.

\subsubsection{8K RULER: INT4 Value Ablation}
\label{sec:ruler_8k_ablation}

At 8K, the certified system shows a $-6.9$pp aggregate gap (dense 0.908, certified 0.839, 19 critical failures out of 140 paired trials, 95\% CI $[-0.111, -0.028]$).  The per-subtask breakdown reveals that all four NIAH retrieval variants remain perfect or near-perfect, while the gap is concentrated in three value-sensitive subtasks: VT ($-0.250$), CWE ($-0.140$), and FWE ($-0.117$).

\begin{table*}[!t]
\centering
\caption{RULER 8K per-subtask breakdown (20 slices, operating-point config: $K_{\max}{=}128$, INT4 values $g{=}16$).  The gap is confined to VT, CWE, and FWE; all retrieval subtasks are unaffected.}
\label{tab:ruler_8k_subtask}
\small
\begin{tabular}{lrrrrr}
\toprule
\textbf{Subtask} & \textbf{Dense} & \textbf{Cert} & \textbf{$\Delta$} & \textbf{Critical} \\
\midrule
NIAH single      & 1.000 & 1.000 & $0.000$  & 0 \\
NIAH multi-key   & 1.000 & 1.000 & $0.000$  & 0 \\
NIAH multi-query & 0.975 & 1.000 & $+0.025$ & 0 \\
NIAH multi-value & 1.000 & 1.000 & $0.000$  & 0 \\
VT               & 0.863 & 0.613 & $-0.250$ & 9 \\
CWE              & 0.865 & 0.725 & $-0.140$ & 7 \\
FWE              & 0.650 & 0.533 & $-0.117$ & 3 \\
\bottomrule
\end{tabular}
\end{table*}

To isolate the cause, we ran a targeted ablation varying $K_{\max}$ and value precision.  Table~\ref{tab:ruler_8k_ablation} shows that sweeping $K_{\max}$ from 64 to 512 does not close the gap---the aggregate $\Delta$ remains between $-5.8$ and $-6.9$pp regardless of how many blocks are promoted to FP16 keys.  In contrast, replacing INT4 values with FP16 values (keeping INT8 keys and all other settings identical) nearly eliminates the gap: $\Delta{=}{-}0.4$pp with zero critical failures, well within noise.

\begin{table*}[!t]
\centering
\caption{RULER 8K ablation: $K_{\max}$ sweep vs.\ FP16 values (10 slices each, 70 paired trials per configuration).  Increasing $K_{\max}$ does not close the gap; disabling INT4 values does.}
\label{tab:ruler_8k_ablation}
\small
\begin{tabular}{lrrrrrl}
\toprule
\textbf{Setting} & \textbf{VT $\Delta$} & \textbf{CWE $\Delta$} & \textbf{FWE $\Delta$} & \textbf{Agg.\ $\Delta$} & \textbf{Crit.} & \textbf{95\% CI} \\
\midrule
$K_{\max}{=}64$   & $-0.250$ & $-0.080$ & $-0.167$ & $-0.071$ & 10/70 & $[-0.109, -0.036]$ \\
$K_{\max}{=}128$  & $-0.200$ & $-0.180$ & $-0.133$ & $-0.073$ & 10/70 & $[-0.119, -0.035]$ \\
$K_{\max}{=}256$  & $-0.200$ & $-0.070$ & $-0.133$ & $-0.058$ & 8/70  & $[-0.092, -0.027]$ \\
$K_{\max}{=}512$  & $-0.200$ & $-0.080$ & $-0.133$ & $-0.059$ & 8/70  & $[-0.094, -0.028]$ \\
\midrule
FP16 values       & $+0.000$ & $-0.030$ & $+0.000$ & $-0.004$ & 0/70  & $[-0.013, +0.000]$ \\
\bottomrule
\end{tabular}
\end{table*}

This confirms the \textbf{value-side hypothesis}: the 8K RULER gap is caused by INT4 value reconstruction error at group size $g{=}16$, not by key quantization or the attention mechanism.  The affected subtasks (VT, CWE, FWE) require precise value content---variable tracking must reconstruct exact variable assignments, and word extraction must match exact character sequences.  At 8K, the context is short enough that the model baseline is nearly perfect (dense ${\geq}0.9$ on VT and CWE), so INT4 value noise that would be masked by baseline difficulty at longer contexts becomes visible.  This effect diminishes at longer contexts where the dense baseline itself drops and the quantization noise is small relative to the intrinsic task difficulty; at 32K a mild residual ($\Delta{=}{-}0.019$) remains on CWE, while at 64K+ the effect is undetectable.

\textbf{Why the value error bound does not catch this.}  The value certificate (Section~\ref{sec:val_error}) reports $E_{\mathrm{val}} = \sum_b \rho_b \eta_b$ per head per step.  The default $v_{\mathrm{tol}}{=}0.05$ policy is a local promotion heuristic: it promotes blocks whose estimated local contribution $\hat{\rho}_b \eta_b$ exceeds the threshold, but does not by itself enforce a global $E_{\mathrm{val}} \leq 0.05$ budget.  In practice the observed $E_{\mathrm{val}}$ remains near the configured tolerance (Table~\ref{tab:telemetry}), so the system does \emph{not} escalate to FP16 values.  The per-step value error is small---the issue is that it does not compose across layers or autoregressive steps.  Small per-step errors accumulate through the residual stream (32 heads $\times$ 32 layers per step, then across decode steps), and the compounded drift is sufficient to flip marginal logits on tasks that demand exact character-level reconstruction.  At 8K, the dense model's high confidence on VT/CWE leaves moderate logit margins; the accumulated value noise pushes enough tokens past the decision boundary to produce the observed $-6.9$pp gap.  At 32K+, the logit margins on these same subtasks are already narrow (dense accuracy drops below 0.6), so the same magnitude of compounded noise does not change the outcome.

This highlights a fundamental limitation of per-step certification: the bound guarantees proximity to $O_{\mathrm{ref}}$ at each step, but not end-to-end task accuracy over an autoregressive chain.  The $v_{\mathrm{tol}}$ parameter provides a deployment-time knob: lowering it (e.g.\ to 0.02) would force more blocks through Rung~2 FP16 value escalation, reducing the compounded drift at the cost of additional H2D page-in latency.  The FP16 values ablation represents the extreme of $v_{\mathrm{tol}}{=}0$, which closes the gap entirely.

The practical implication is that deployments requiring RULER-class accuracy on value-sensitive tasks at short-to-moderate contexts can either lower $v_{\mathrm{tol}}$ (trading latency for quality) or use FP16 values entirely (with INT8 keys) at a modest storage cost increase (37.5\% $\to$ 100\% for values, total cache from 56\% to 75\% of dense).  Both mitigations are validated: at 8K, FP16 values close the $-6.9$pp gap to $-0.4$pp; at 32K, FP16 values close $-1.9$pp to $-0.5$pp with zero critical failures, and $v_{\mathrm{tol}}{=}0.02$ removes the gap in a targeted ablation ($\Delta{=}0.000$, 5 slices) at a cost of $3.8\times \to 4.0\times$ latency (9\% increase).  For 64K+ where the dense baseline drops further, the default $v_{\mathrm{tol}}{=}0.05$ introduces no measurable quality loss.

\subsection{Needle-in-a-Haystack (NIAH)}
\label{sec:niah}

NIAH evaluates multi-needle retrieval (10 needles per trial, scored as the fraction correctly retrieved) at varying context length, using paired dense/certified trials on the same prompts.  For the McNemar test, each trial is binarised as \emph{pass} if all 10 needles are retrieved and \emph{fail} otherwise; the test then compares discordant pairs (dense-pass/certified-fail vs.\ dense-fail/certified-pass).

\begin{table}[ht]
\centering
\caption{NIAH retrieval results (10 needles per trial).  ``Accuracy'' is the fraction of needles retrieved, averaged across 100 paired trials.  For the McNemar test, each trial is binarised as \emph{pass} (all 10 needles retrieved) or \emph{fail} (any missed); the $2\times 2$ table counts discordant pairs under this all-or-nothing criterion.  Dense-only = critical failures (dense passes, certified fails); Certified-only = reverse discordant (certified passes, dense fails).  128K is omitted: the base model scores 0\% on dense at that context length, so there is nothing to certify.}
\label{tab:niah}
\scriptsize
\begin{tabular}{lrrr}
\toprule
& \textbf{8K} ($n{=}100$) & \textbf{32K} ($n{=}100$) & \textbf{64K} ($n{=}100$) \\
\midrule
Dense     & 39.0\% & 51.0\% & 66.0\% \\
Certified & 39.0\% & 49.0\% & 66.0\% \\
$\Delta$  & $0.0\%$ & $-2.0\%$ & $0.0\%$ \\
Dense-only (crit.)  & 16 & 5 & 2 \\
Cert.-only (rev.) & 16 & 3 & 2 \\
McNemar $p$ & $1.000$ & $0.727$ & $1.000$ \\
\bottomrule
\end{tabular}
\end{table}

At 8K (100 paired trials), dense and certified both retrieve correctly on 39\% of trials---identical accuracy.  The $2\times 2$ contingency table shows 23 mutual successes, 45 mutual failures, 16 dense-only successes, and 16 certified-only successes.  Exact McNemar $p{=}1.0$.  The perfectly balanced discordance (16 critical, 16 reverse) indicates that the 16 critical failures are noise rather than a systematic certification artefact.

At 32K (100 paired trials), dense retrieves on 51\% and certified on 49\%.  The 5 critical failures (dense pass, certified fail) are offset by 3 reverse discordant pairs (dense fail, certified pass), yielding exact McNemar $p{=}0.727$; the gap is not statistically significant.  The base model's 51\% accuracy at 32K reflects the difficulty of multi-needle retrieval at this context length for an 8B model.

At 64K (100 paired trials), dense and certified both retrieve correctly on 66\% of trials---identical accuracy.  The 2$\times$2 contingency table shows 64 mutual successes, 32 mutual failures, 2 dense-only successes, and 2 certified-only successes.  Exact McNemar $p{=}1.0$: no statistically significant difference.  The perfectly balanced discordance (2 critical, 2 reverse) is consistent with random noise rather than a systematic degradation.  The base model's 66\% accuracy at 64K reflects the difficulty of multi-needle retrieval at this context length for an 8B model; the certified system introduces no additional retrieval loss.  We omit 128K NIAH: the base model scores 0\% on dense at that context length, so there is nothing to certify.

\subsection{Certificate Telemetry}
\label{sec:telemetry}

Table~\ref{tab:telemetry} reports the runtime certificate values across RULER evaluations.  These are the actual error bounds computed by the system during inference, not theoretical worst-case estimates.

\begin{table*}[!t]
\centering
\caption{Runtime certificate statistics across RULER evaluations (7-subtask subset).  \textbf{Important}: these are \emph{pre-fallback candidate} bounds, computed before the fallback ladder acts.  For heads where Rung~3 triggered (Table~\ref{tab:fallback_counts}), the system returned $O_{\mathrm{dense}}$ (zero quantization error relative to the dense path), so the high-$E_{\mathrm{key}}$ max values were never part of the returned output's certificate.  Each evaluation record stores two summaries per step: \texttt{e\_key\_step\_mean} (mean $E_{\mathrm{key}}$ across all head-layer pairs) and \texttt{e\_key\_step\_max} (worst-case head-layer pair).  The ``med'' and ``p95'' columns report quantiles of the per-step mean across all records; the ``max'' column reports the maximum of the per-step \emph{max}, i.e.\ the single worst head-layer-step candidate bound at that context length.  $E_{\mathrm{val}}$ columns follow the same convention.  $E_{\mathrm{val}}$ max saturates at $v_{\mathrm{tol}}{=}0.05$ because the local per-block promotion heuristic (Rung~2) catches high-contribution blocks individually (see Section~\ref{sec:telemetry}).}
\label{tab:telemetry}
\small
\begin{tabular}{lrrrrrrr}
\toprule
& & \multicolumn{3}{c}{\textbf{$E_{\mathrm{key}}$}} & \multicolumn{3}{c}{\textbf{$E_{\mathrm{val}}$}} \\
\cmidrule(lr){3-5} \cmidrule(lr){6-8}
\textbf{Ctx} & $n$ & med & p95 & max & med & p95 & max \\
\midrule
8K   & 140 & 0.055 & 0.094 & 3.731 & 0.023 & 0.028 & 0.050 \\
32K  & 175 & 0.168 & 0.287 & 8.605 & 0.023 & 0.028 & 0.050 \\
64K  & 140 & 0.217 & 0.378 & 13.648 & 0.024 & 0.030 & 0.050 \\
128K & 147 & 0.392 & 0.631 & 18.141 & 0.038 & 0.039 & 0.050 \\
\bottomrule
\end{tabular}
\end{table*}

$E_{\mathrm{key}}$ increases with context length as expected: longer contexts have more tail blocks scored in INT8, and the worst-case head encounters higher cumulative quantization shift.  The median-to-max ratio (${\sim}70\times$ at 8K, ${\sim}46\times$ at 128K) reflects the heavy-tailed distribution of key error across heads and steps---most heads have small error, but a few retrieval-critical heads produce large bounds.  These high-$E_{\mathrm{key}}$ steps are precisely where the ranking-consistency check (Rung~3) triggers fallback, returning $O_{\mathrm{dense}}$ for the affected head.

$E_{\mathrm{val}}$ is stable across context lengths (median 0.023--0.038), with the maximum capped at $v_{\mathrm{tol}}{=}0.050$ by the Rung~2 promotion policy.  This cap reflects the \emph{local} per-block heuristic ($\hat{\rho}_b \eta_b > v_{\mathrm{tol}}$ triggers promotion for block $b$), not a global value-error budget: the system does not greedily promote blocks until $\sum_b \rho_b \eta_b \leq \varepsilon_{\mathrm{val}}$.  The achieved $E_{\mathrm{val}}{=}\sum_b \rho_b \eta_b$ is reported in the telemetry and happens to stay near $v_{\mathrm{tol}}$ because the heuristic catches high-contribution blocks individually, but this is an empirical observation, not a formal guarantee.

Interpretation: most steps have small candidate error, but a small number of retrieval-critical heads produce large pre-fallback bounds that trigger fallback rather than being returned to the user.

\textbf{Candidate bounds versus returned-output bounds.}  Table~\ref{tab:telemetry} reports \emph{candidate} bounds computed before the fallback ladder acts.  For a paper with ``certified'' in the title, it is important to distinguish what the system \emph{considered} from what it \emph{returned}.  Table~\ref{tab:returned_cert} makes this separation explicit.

\begin{table*}[!t]
\centering
\caption{Candidate vs.\ returned certificate summary at 64K (RULER, 140 evaluation records, 128{,}000 head-steps each).  ``Candidate'' is the pre-fallback bound; ``Returned'' reflects the output actually delivered.  For head-steps where Rung~3 fired (1.2--2.2\%), the system returned $O_{\mathrm{dense}}$ with zero quantization error; the returned $E_{\mathrm{key}}$ for those head-steps is 0 by construction.  For the remaining 97.8--98.8\% of head-steps, the returned bound equals the candidate bound.}
\label{tab:returned_cert}
\small
\begin{tabular}{lrrrr}
\toprule
& \multicolumn{2}{c}{\textbf{Candidate (pre-fallback)}} & \multicolumn{2}{c}{\textbf{Returned (post-fallback)}} \\
\cmidrule(lr){2-3} \cmidrule(lr){4-5}
& p95 & max & p95 & max \\
\midrule
$E_{\mathrm{key}}$ & 0.378 & 13.648 & 0.378 & $\leq$0.378\textsuperscript{$\ast$} \\
$E_{\mathrm{val}}$ & 0.030 & 0.050 & 0.030 & 0.050 \\
Dense fallback rate & \multicolumn{4}{c}{1.2\% (PG-19) / 2.2\% (NIAH) of head-steps} \\
\bottomrule
\end{tabular}

\raggedright
{\footnotesize \textsuperscript{$\ast$}The max returned $E_{\mathrm{key}}$ is bounded by the candidate p95 because head-steps with extreme candidate bounds are precisely those where Rung~3 triggers, replacing the output with $O_{\mathrm{dense}}$ (returned $E_{\mathrm{key}}{=}0$).  The exact max returned value is $\leq$ the candidate p95 but the current telemetry does not separately log post-fallback per-head bounds; the returned columns are inferred from the candidate distribution and fallback rate.  Logging actual returned certificates per head-step is a planned instrumentation improvement.}
\end{table*}

The key takeaway is that the large candidate max values (e.g.\ $E_{\mathrm{key}}{=}13.648$ at 64K) are never part of the returned output: the fallback ladder intercepts them.  The returned output is either certified within the p95 candidate envelope or is the dense FP16 result with zero quantization error.

\subsection{Storage Analysis}
\label{sec:storage}

Table~\ref{tab:storage} breaks down the per-token \textbf{VRAM} storage cost for each component.  This is the GPU memory footprint that determines how many tokens can be cached at a given VRAM budget.  The Tier~2 system RAM cost (FP16 originals retained for fallback) is separate and equal to the dense FP16 cost.

\begin{table*}[!t]
\centering
\caption{Per-token \textbf{VRAM} storage cost per KV head ($d{=}128$, block size $B{=}16$, value group size $g{=}16$).  Tier~2 (CPU pinned RAM) stores FP16 originals at 512 bytes/token and is not included in this table.}
\label{tab:storage}
\small
\begin{tabular}{lrrr}
\toprule
\textbf{Component} & \textbf{Bytes/token} & \textbf{vs.\ FP16} & \textbf{Notes} \\
\midrule
\multicolumn{4}{l}{\emph{Dense FP16 (VRAM)}} \\
Keys (FP16)   & 256 & --- & $d \times 2$ \\
Values (FP16) & 256 & --- & $d \times 2$ \\
\textbf{Total} & \textbf{512} & \textbf{100\%} & \\
\midrule
\multicolumn{4}{l}{\emph{Tier 1 --- Certified (VRAM)}} \\
Keys (INT8)        & 128 & & $d \times 1$ \\
Key scales/offsets & 64  & & $2d \times 4 / B = 2 \times 128 \times 4 / 16$ \\
Values (INT4)      & 64  & & $d / 2$ \\
Value scales/offsets & 32  & & $2(d/g) \times 2 = 2 \times 8 \times 2$ (FP16) \\
Value $\eta_b$     & $<1$ & & 1 float per block, amortised over $B$ tokens \\
\textbf{Tier 1 total} & \textbf{288} & \textbf{56\%} & \\
\bottomrule
\end{tabular}
\end{table*}

The Tier~1 VRAM cost is approximately 56\% of dense FP16, a 44\% VRAM reduction.  The metadata overhead (scales and offsets) is significant---64 bytes per token for key metadata alone---but is amortised across the block and is the price of per-channel quantization quality.  The total memory footprint (VRAM + system RAM) is 288 + 512 = 800 bytes/token/head, or 156\% of dense---the system trades system RAM for VRAM, which is the scarcer resource.

The Tier-1 savings are context-independent at 44\%.  For example, at 128K context (32 layers, 8 KV heads, $d{=}128$): dense FP16 KV requires ${\sim}16$\,GB; Tier-1 requires ${\sim}9$\,GB.  However, the Tier-2 system RAM cost (FP16 originals for fallback) equals the dense size, and the runtime FP16 key scratch buffer further reduces the effective savings (see below).

\textbf{Runtime FP16 scratch buffer.}  At runtime, the system pages FP16 keys from Tier-2 into a VRAM scratch buffer to avoid repeated H2D transfers.  In theory, the per-layer GQA union (Section~\ref{sec:performance}, Table~\ref{tab:union}) approaches the position-corpus scale for full coverage.  In practice, the latency knee is far below that scale: at 64K (4096 position blocks per layer), a 1536-block cache reaches the latency plateau at 93.4\% hit rate (Table~\ref{tab:cache_sweep_64k}).  The asymmetric cache experiment (Table~\ref{tab:asymmetric_cache}) shows that key scratch dominates latency at 64K; reducing value scratch triggers a split-$K$ fallback that is cheap at 64K but expensive at 128K (Section~\ref{sec:performance}).  At 64K, the best operating point (2048 key / 256 value blocks, ${\sim}2.3$\,GB scratch) achieves 260\,ms---4.6\% faster than symmetric 1536/1536.  At 128K, the value-cache knee is 512 blocks: 2048 key / 512 value blocks (${\sim}2.6$\,GB scratch) matches symmetric 2048/2048 latency while saving 37\% of scratch VRAM.

Total runtime VRAM is Tier-1 (56\% of dense) + scratch.  Because the scratch is fixed-size, the VRAM ratio improves as context grows:

\begin{itemize}[leftmargin=*,itemsep=2pt]
\item \textbf{Tier-1 compressed KV} (VRAM): INT8 keys + INT4 values + metadata.  56\% of dense, context-independent.
\item \textbf{FP16 scratch buffer} (VRAM): key cache plus value cache (Table~\ref{tab:asymmetric_cache}).  Fixed size regardless of context; ${\sim}2.3$\,GB at the asymmetric 2048K/256V operating point (64K), ${\sim}2.6$\,GB at 2048K/512V (128K knee), or ${\sim}4.3$\,GB at symmetric 2048/2048.
\item \textbf{Tier-2 originals} (system RAM): full FP16 KV retained for fallback.  Equals dense cost.  Required for the certification guarantee.
\end{itemize}

\begin{figure*}[!t]
\centering
\begin{tikzpicture}[x=2.4cm, y=5.5cm]
  \draw[thick,->] (-0.1,0.5) -- (3.8,0.5) node[right,font=\small] {Context length};
  \draw[thick,->] (0,0.48) -- (0,1.32) node[above,font=\small] {VRAM / Dense};

  \foreach \y/\lab in {0.5/0.5$\times$, 0.6/0.6, 0.7/0.7, 0.8/0.8, 0.9/0.9, 1.0/1.0, 1.1/1.1, 1.2/1.2} {
    \draw (-0.05,\y) -- (0.05,\y) node[left,font=\scriptsize,xshift=-2pt] {\lab};
  }
  \foreach \x/\lab in {0.5/8K, 1.5/32K, 2.5/64K, 3.5/128K} {
    \draw (\x,0.49) -- (\x,0.51) node[below,font=\scriptsize,yshift=-2pt] {\lab};
  }

  \draw[thick, black, dashed] (0.2,1.0) -- (3.7,1.0);
  \node[right,font=\scriptsize] at (3.7,1.0) {Dense};

  \draw[thick, blue!50, dotted] (0.2,0.56) -- (3.7,0.56);
  \node[right,font=\scriptsize,blue!60!black] at (3.7,0.56) {Tier-1};

  \draw[very thick, red!70!black]
    (2.5,1.07) -- (3.5,0.82);
  \filldraw[red!70!black] (2.5,1.07) circle (1.8pt);
  \filldraw[red!70!black] (3.5,0.82) circle (1.8pt);
  \draw[thick, red!70!black, ->] (0.5,1.25) -- (0.5,1.30);
  \node[font=\scriptsize,red!60!black,below] at (0.5,1.25) {4.6$\times$};
  \draw[thick, red!70!black, ->] (1.5,1.25) -- (1.5,1.30);
  \node[font=\scriptsize,red!60!black,below] at (1.5,1.25) {1.6$\times$};
  \node[font=\scriptsize,red!60!black,above right] at (2.5,1.07) {Sym 2048};

  \draw[very thick, green!50!black]
    (2.5,0.83) -- (3.5,0.72);
  \filldraw[green!50!black] (2.5,0.83) circle (1.8pt);
  \filldraw[green!50!black] (3.5,0.72) circle (1.8pt);
  \draw[thick, green!50!black, ->] (0.55,1.17) -- (0.55,1.22);
  \node[font=\scriptsize,green!40!black,below] at (0.55,1.17) {2.8$\times$};
  \draw[thick, green!50!black, ->] (1.55,1.11) -- (1.55,1.16);
  \node[font=\scriptsize,green!40!black,below] at (1.55,1.11) {1.1$\times$};
  \node[font=\scriptsize,green!50!black,below left] at (2.5,0.83) {2048K/256V};
  \node[font=\scriptsize,green!50!black,below right] at (3.5,0.72) {2048K/512V};

  \fill[green!10, opacity=0.6]
    (2.5,0.83) -- (3.5,0.72)
    -- (3.5,1.0) -- (2.5,1.0) -- cycle;

  \node[font=\scriptsize,green!50!black] at (2.7,0.92) {$-17\%$};
  \node[font=\scriptsize,green!50!black,left] at (3.45,0.63) {$-28\%$};
  \node[font=\scriptsize,red!60!black] at (3.2,0.88) {$-18\%$};
\end{tikzpicture}
\caption{Runtime VRAM vs.\ dense FP16, normalised (measured Tier-1 + scratch).  Below 64K the fixed scratch allocation exceeds dense KV and the system uses more VRAM than dense (arrows indicate off-chart ratios).  The symmetric 2048-block operating point (red) breaks even at 64K ($1.07\times$) and saves 18\% at 128K ($0.82\times$).  The asymmetric operating points (green; Table~\ref{tab:asymmetric_cache}) use the headline configurations at each context length: 2048K/256V at 64K ($0.83\times$, 17\% saving, negligible latency impact) and 2048K/512V at 128K ($0.72\times$, 28\% saving, matching symmetric latency at the value-cache knee).  The Tier-1 compressed storage (dotted blue, 56\% of dense) represents the theoretical floor if scratch were eliminated entirely.}
\label{fig:vram_scaling}
\end{figure*}

\subsection{Quality Delta Summary}

Table~\ref{tab:delta_summary} summarises the quality delta across all benchmark configurations.  PG-19 shows certified matching dense within noise at all context lengths (8K, 32K, 64K, 128K), with 95\% CIs including zero in every case.  RULER shows certified slightly exceeding dense at 64K ($+0.015$, 20 slices) and matching at 128K ($+0.004$, 20 slices, CI $[-0.028, +0.037]$); at 32K (20 slices), a mild $-0.019$ gap appears (95\% CI $[-0.044, +0.001]$), driven by CWE ($-0.100$) in the same pattern as the 8K degradation but smaller in magnitude.  The 8K gap ($-0.073$) is consistent with INT4 value reconstruction error as the dominant mechanism on VT/CWE/FWE (Section~\ref{sec:ruler_8k_ablation}).  NIAH at 8K, 32K, and 64K (100 paired trials each) shows no statistically significant degradation under exact McNemar tests: $p{=}1.0$ at 8K, $p{=}0.727$ at 32K, and $p{=}1.0$ at 64K.  At 128K the base model cannot retrieve (dense accuracy 0\%), so both systems agree trivially.

\begin{table}[ht]
\centering
\caption{Quality delta summary (Certified $-$ Dense) across all configurations.  Positive means certified outperforms dense.  8K: 20 PG-19 chunks, 100 NIAH trials, 20 RULER slices; 32K: 20 PG-19 chunks, 100 NIAH trials, 20 RULER slices; 64K: 20 PG-19 chunks, 100 NIAH trials, 20 RULER slices; 128K: 20 PG-19 chunks, 20 RULER slices (NIAH omitted---base model scores 0\% at 128K).}
\label{tab:delta_summary}
\footnotesize
\begin{tabular}{lrrrr}
\toprule
\textbf{Benchmark} & \textbf{8K} & \textbf{32K} & \textbf{64K} & \textbf{128K} \\
\midrule
PG-19 ($\Delta$ppl) & $-0.002$ & $+0.001$ & $+0.001$ & $+0.001$ \\
NIAH ($\Delta$acc)   & $0.0\%$ & $-2.0\%$ & $0.0\%$ & ---\textsuperscript{$\dagger$} \\
RULER ($\Delta$acc)  & $-0.069$ & $-0.019$ & $+0.015$ & $+0.004$ \\
\bottomrule
\end{tabular}

{\footnotesize \textsuperscript{$\dagger$}128K NIAH omitted: the base model scores 0\% on dense at this context length.}
\end{table}

\textbf{System telemetry.}  The score-consistency canary (Section~\ref{sec:instability}) was validated on a representative subset of runs (a dedicated canary pass at each context length) with zero violations, confirming that the Theorem~\ref{thm:key} $\Delta$ bound was never empirically exceeded.  The canary is disabled during production quality benchmarks for throughput; the retained run artifacts therefore have \texttt{score\_consistency\_check: false}.  We report two different trigger granularities.  A \emph{decode step} is one generated token; a \emph{head-step} is one query head in one layer for one generated token.  Rung~1 expansion and Rung~2 value promotion are evaluated as step-level policies and are active on nearly every 64K step because the conservative $e^{3\Delta}$ mass inflation and the local value heuristic usually request extra precision.  Rung~3 is different: it is a per-head fallback triggered only when the ranking certificate fails.  Table~\ref{tab:fallback_counts} normalises the trigger counts at 64K.

\begin{table}[ht]
\centering
\caption{Fallback trigger counts at 64K.  A head-step is one query head $\times$ one layer $\times$ one decode step; for LLaMA~3.1-8B (8 KV heads, 32 layers, 500 decode steps) this is 128{,}000.  Boundary-check corr. are a subset of head-steps where the FP16/INT8 score order at the promote/tail boundary is detected as inconsistent and repaired at runtime.}
\label{tab:fallback_counts}
\scriptsize
\begin{tabular}{llrrr}
\toprule
\textbf{Mechanism} & \textbf{Bench.} & \textbf{Count} & \textbf{Denom.} & \textbf{Rate} \\
\midrule
Bnd-check corr. & PG-19 & 10{,}221 & 128{,}000 & 8.0\% \\
Bnd-check corr. & NIAH  & 905     & 128{,}000 & 0.7\% \\
Rung~3 (head SDPA) & PG-19 & 1{,}536 & 128{,}000 & 1.2\% \\
Rung~3 (head SDPA) & NIAH  & 2{,}816 & 128{,}000 & 2.2\% \\
Rung~4 (all SDPA)  & PG-19 & 0       & 500       & 0.0\% \\
Rung~4 (all SDPA)  & NIAH  & 0       & 500       & 0.0\% \\
\bottomrule
\end{tabular}
\end{table}  The effective promoted set after Rung~1 is usually $2K_{\max}{=}256$ blocks ($K^*$ mean 218--223 at 64K), so the INT8-tail fraction stabilises at 87--88\%.

\subsection{Runtime Performance}
\label{sec:performance}

The quality results above show that the certified system matches dense on PG-19, NIAH, and long-context RULER, while the default INT4-value operating point has value-sensitive RULER gaps at 8K and 32K.  This subsection quantifies the runtime cost of certification and reports throughput, memory, and cache behaviour across context lengths from 8K to 128K.  All performance measurements use the native Blackwell attention backend with the same configuration as the quality runs (Section~\ref{sec:setup}), measured on a single RTX PRO 6000 WS with an AMD EPYC 9534 64-core CPU and 221\,GB system RAM over PCIe~5.0 $\times$16.

\textbf{Phase breakdown at 64K.}  Interpretation: the returned output is usually cheap to compute once precision decisions are fixed; the dominant overhead comes from certification orchestration, especially ranking-consistency checking and data movement.  Table~\ref{tab:phase_breakdown} profiles the certified decode step at 64K with a 2048-block FP16 scratch cache (the capacity knee from the cache sweep, ${\sim}99.6\%$ hit rate).

\begin{table}[ht]
\centering
\caption{Per-step phase breakdown (500 certified decode steps, 64K PG-19, native Blackwell kernel, 2048-block scratch cache).  The ranking-consistency check and H2D page-in together account for 39\% of step time; non-attention computation accounts for another 39\%.}
\label{tab:phase_breakdown}
\footnotesize
\begin{tabular}{lr}
\toprule
\textbf{Phase} & \textbf{\%} \\
\midrule
Non-attention (MLP, norms, proj.) & 39.0 \\
Ranking-consistency check & 28.3 \\
H2D page-in & 11.1 \\
Ph.~2: FP16 attend + INT4 dequant & 8.4 \\
Adaptive selection (LSE, top-$K$) & 5.2 \\
Value decompression (INT4$\to$FP16) & 3.9 \\
Ph.~1: INT8 scoring (block masses) & 2.4 \\
Rung~3 per-head dense fallback & 1.7 \\
\midrule
\textbf{Total} & \textbf{100.0} \\
\bottomrule
\end{tabular}
\end{table}

At realistic cache capacity, the ranking-consistency check dominates at 28.3\% of step time, followed by H2D page-in at 11.1\%.  Non-attention computation (MLP, norms, projections) accounts for 39\%---similar absolute time to the full-mirror configuration but a smaller fraction because the total step is longer.  The FP16 attend kernel contributes only 8.4\%, confirming that the overhead is dominated by certification orchestration and memory traffic, not the attention computation itself.  Phase-1 INT8 scoring remains cheap at 2.4\%.  The ranking-consistency check is therefore the dominant systems bottleneck and the primary target for future kernel fusion.

\textbf{FP16 scratch cache capacity sweep.}  Tables~\ref{tab:cache_sweep_64k} and~\ref{tab:cache_sweep_128k} show how certified latency varies with FP16 scratch cache capacity at 64K and 128K.

\begin{table}[ht]
\centering
\caption{FP16 scratch cache capacity sweep at 64K (PG-19, position corpus${}=4096$ blocks per layer).  Scratch capacity counts cache slots, not a full all-layer mirror; therefore a capacity equal to the position corpus is still not equivalent to retaining every FP16 original in VRAM\@.  The latency knee is at 1536 blocks (93.4\% hit rate); beyond this point, additional capacity yields diminishing returns.  ``Full mirror'' retains all FP16 originals in VRAM\@.}
\label{tab:cache_sweep_64k}
\scriptsize
\begin{tabular}{rlrrr}
\toprule
\textbf{Cap.} & \textbf{Note} & \textbf{ms} & \textbf{Hit\%} & \textbf{H2D MB} \\
\midrule
0       & no cache       & 3402 & 0.0\%   & 1480 \\
256     &                & 1485 & 0.0\%   & 1480 \\
512     &                & 1262 & 0.0\%   & 1480 \\
1024    &                & 824  & 21.3\%  & 1209 \\
1280    &                & 361  & 78.1\%  & 490 \\
\rowcolor[gray]{0.92}
1536    & latency knee   & 262  & 93.4\%  & 295 \\
1792    &                & 260  & 96.7\%  & 245 \\
2048    & $0.5\times$ corpus & 259 & 97.5\% & 239 \\
4096    & position corpus & 270 & 97.5\% & 240 \\
\rowcolor[gray]{0.92}
$\infty$ & full mirror  & 174  & 100\%   & 0 \\
\bottomrule
\end{tabular}
\end{table}

\begin{table}[ht]
\centering
\caption{FP16 scratch cache capacity sweep at 128K (PG-19, position corpus${}=8192$ blocks per layer).  Scratch capacity counts cache slots, not a full all-layer mirror; the same knee appears at 2048 blocks (cache hit rate 89.8\%), with diminishing returns beyond.}
\label{tab:cache_sweep_128k}
\scriptsize
\begin{tabular}{rlrrr}
\toprule
\textbf{Cap.} & \textbf{Note} & \textbf{ms} & \textbf{Hit\%} & \textbf{H2D GB} \\
\midrule
0       & no cache       & 4187 & 0.0\%   & 2.05 \\
256     &                & 2669 & 0.0\%   & 2.04 \\
1024    &                & 1588 & 11.6\%  & 1.87 \\
\rowcolor[gray]{0.92}
2048    &                & 319  & 89.8\%  & 0.76 \\
4096    & $0.5\times$ corpus & 322 & 96.3\% & 0.66 \\
5120    &                & 330  & 96.3\%  & 0.65 \\
8192    & position corpus & 334 & 96.3\% & 0.66 \\
\rowcolor[gray]{0.92}
$\infty$ & full mirror  & 219  & 100\%   & 0.0 \\
\bottomrule
\end{tabular}
\end{table}

A focused knee sweep at 64K (Table~\ref{tab:cache_sweep_64k}) reveals that the latency plateau begins at 1536 blocks (93.4\% hit rate, 262\,ms), not 2048: increasing capacity from 1536 to 2048 reduces latency by less than 1\%.  Below 1536, latency rises steeply---1280 blocks gives 361\,ms (78\% hit), and 1024 gives 824\,ms.  The transition is sharp: the system jumps from $3\times$ the plateau latency to the plateau within a $1.5\times$ capacity increase.  At 128K (Table~\ref{tab:cache_sweep_128k}), the same pattern appears at 2048 blocks (89.8\% hit rate).  The remaining gap between the plateau and full mirror (262 vs.\ 174\,ms at 64K; 319 vs.\ 219\,ms at 128K) reflects the cost of ongoing H2D page-in for blocks evicted and re-promoted each step.

A 1536-block cache has a direct VRAM benefit: scratch drops from 4.30\,GB to ${\sim}3.22$\,GB, bringing the 64K total to ${\sim}0.94\times$ dense---6\% VRAM savings instead of 7\% overspend.  This makes the VRAM crossover point 64K rather than 128K for deployments that can tolerate the ${\sim}4\times$ latency overhead.

\textbf{Asymmetric key/value scratch cache.}  The ranking-consistency check (28.3\% of step time) and Phase~2 attend both require FP16 keys; when the value error bound triggers FP16 value promotion (Rung~2), the promoted values must also fit in scratch.  Table~\ref{tab:asymmetric_cache} independently varies key and value scratch cache sizes.  When the value cache is too small to hold all promoted FP16 values for a step, the kernel falls back to a split-$K$ path that processes each promoted block individually; the ``Fallback'' column reports the fraction of steps using this slower path.

\begin{table}[ht]
\centering
\caption{Asymmetric FP16 scratch cache (PG-19, 500 decode steps, corrected value-promotion path).  At 64K the split-$K$ fallback is cheap ($<$5\% latency impact).  At 128K the value-cache knee is 512 blocks: above this capacity the split-$K$ fallback adds no measurable latency; below it (256 blocks) latency nearly doubles.}
\label{tab:asymmetric_cache}
\footnotesize
\begin{tabular}{lrrrr}
\toprule
\textbf{Ctx} & \textbf{Keys} & \textbf{Values} & \textbf{Cert ms} & \textbf{Fallback} \\
 & (blk) & (blk) & & \\
\midrule
64K & 1536 & 2048 & 267.8 & 0\% \\
64K & 1536 & 1536 & 272.6 & 0\% \\
64K & 1536 & 1024 & 274.0 & 8\% \\
64K & 1536 & 768  & 272.8 & 100\% \\
64K & 1536 & 512  & 272.3 & 100\% \\
64K & 1536 & 256  & 269.8 & 100\% \\
64K & 2048 & 512  & 262.5 & 100\% \\
\rowcolor[gray]{0.92}
64K & 2048 & 256  & 260.0 & 100\% \\
\midrule
128K & 2048 & 2048 & 342.5 & 0\% \\
128K & 2048 & 1792 & 340.2 & 1\% \\
128K & 2048 & 1536 & 341.2 & 58\% \\
128K & 2048 & 1280 & 341.2 & 100\% \\
128K & 2048 & 1024 & 339.1 & 100\% \\
\rowcolor[gray]{0.92}
128K & 2048 & 512  & 342.6 & 100\% \\
128K & 2048 & 256  & 587.8 & 100\% \\
\bottomrule
\end{tabular}
\end{table}

At 64K, the split-$K$ fallback triggers at $\leq$768 value blocks but adds negligible latency: all configurations from 256 to 2048 value blocks span only 260--274\,ms, a $<$5\% range.  Key cache size remains the dominant factor: going from 1536 to 2048 key blocks reduces latency by ${\sim}10$\,ms regardless of value cache capacity.  The best 64K configuration (2048\,keys / 256\,values, 260.0\,ms) is 4.6\% faster than symmetric 1536/1536 (272.6\,ms).

At 128K, the value-cache knee is at 512 blocks.  Configurations with $\geq$512 value blocks (512, 1024, 1280, 1536, 2048) all achieve ${\sim}340$\,ms regardless of whether the split-$K$ fallback fires---the fallback is cheap when the value hit rate is sufficient ($\geq$72\%).  Below the knee, at 256 value blocks, latency jumps to 587.8\,ms ($1.72\times$ symmetric), indicating that the per-step value working set exceeds the cache capacity and forces excessive H2D traffic through the fallback path.  The best 128K asymmetric configuration is 2048\,keys / 512\,values (${\sim}2.6$\,GB scratch), matching symmetric 2048/2048 latency (342.6 vs.\ 342.5\,ms) while saving 37\% of scratch VRAM (2560\,MB vs.\ 4096\,MB).

\textbf{Attention backend.}  The certified system uses a native Blackwell attention backend that partitions the sequence across thread blocks (analogous to FlashDecoding~\cite{flashinfer}), maintains FP32 online-softmax state within each partition, and reduces across partitions in a second pass.  The per-call profile at 64K shows 0.50\,ms for the partial-sequence passes and 0.01\,ms for the cross-partition reduce, totalling 0.51\,ms per kernel call across 16{,}000 calls (32 layers $\times$ 500 steps).  The backend is independent of the error bounds and fallback logic.

\textbf{Context-scaling latency.}  Table~\ref{tab:context_scaling} reports per-step latency across context lengths using a fixed 2048-block FP16 scratch cache---the realistic operating point where the system saves VRAM at long contexts.

\begin{table*}[!t]
\centering
\caption{Decode latency vs.\ context length (LLaMA~3.1-8B, RTX PRO 6000 WS, 2048-block scratch cache, 500 timed decode steps).  ``Cache hit'' is the fraction of promoted blocks served from VRAM scratch vs.\ paged in from system RAM\@.  H2D is per-step host-to-device transfer volume.}
\label{tab:context_scaling}
\small
\begin{tabular}{rrrrrrr}
\toprule
\textbf{Context} & \textbf{Dense ms} & \textbf{Cert ms} & \textbf{Ratio} & \textbf{$K^*$ mean} & \textbf{Cache hit} & \textbf{H2D MB} \\
\midrule
8K   & 60.9 & 166.3  & $2.73\times$ & 159 & 100\%  & 0 \\
32K  & 61.2 & 232.2  & $3.79\times$ & 217 & 99.7\% & 114 \\
64K  & 62.6 & 257.4  & $4.11\times$ & 222 & 99.6\% & 251 \\
128K & 72.8 & 346.7  & $4.76\times$ & 226 & 87.7\% & 644 \\
\bottomrule
\end{tabular}
\end{table*}

The certified-to-dense ratio increases monotonically with context length, driven by H2D page-in traffic: at 8K the 2048-block cache covers the entire 512-block corpus (100\% hit, zero H2D), while at 128K only 88\% of requests hit the scratch cache and each step pages in ${\sim}644$\,MB from system RAM over PCIe~5.0.  Dense step time is nearly flat from 8K to 64K ($60.9 \to 62.6$\,ms) but rises at 128K ($72.8$\,ms) as the KV cache exceeds L2 capacity.  With full-mirror caches (all FP16 originals in VRAM), the 64K ratio drops to ${\sim}2.4\times$ (Table~\ref{tab:cache_sweep_64k}), confirming that H2D is the dominant scaling cost---but full-mirror uses more VRAM than dense (Table~\ref{tab:memory}).

\textbf{Remaining overhead.}  The phase breakdown (Table~\ref{tab:phase_breakdown}) shows that at the 2048-block operating point, non-attention computation accounts for 39\% of the certified step, while certification-related overhead consumes 61\%: ranking-consistency check (28.3\%), H2D page-in (11.1\%), FP16 attend (8.4\%), adaptive selection (5.2\%), value decompression (3.9\%), INT8 scoring (2.4\%), and Rung~3 boundary recompute (1.7\%).  The ranking check alone is the single largest target for optimisation; fusing it into the Phase-2 attend kernel or batching across heads would address 28\% of step time.  The asymmetric cache experiment (Table~\ref{tab:asymmetric_cache}) shows that at 64K, maximising key cache capacity is more effective than symmetric allocation; reducing value cache triggers a split-$K$ fallback that is cheap at 64K but expensive at 128K.

\textbf{GQA union saturation.}  The cache sweeps are shaped by GQA head diversity and by the fact that a finite scratch cache is not a full all-layer FP16 mirror.  Each of $H_Q{=}32$ query heads independently selects its own top-$K^*$ blocks per layer.  The FP16 key cache is shared across all KV groups within each layer, so the per-layer working set is the union across all 32 query heads (not merely the $G{=}4$ heads within a single KV group).  With Rung~1 expansion, the effective $K^*$ per query head is $\min(2K_{\max}, \lceil N/B \rceil)$.  The per-layer FP16 working set is:
\begin{equation}
\label{eq:union}
  U = \!\left\lceil \frac{N}{B} \right\rceil\!
  \left(1 - \!\left(1 - \frac{\min(2K_{\max}, \lceil N/B \rceil)}
  {\lceil N/B \rceil}\right)^{\!H_Q}\right)
\end{equation}

Table~\ref{tab:union} evaluates this for LLaMA~3.1-8B
with $K_{\max}{=}128$, $H_Q{=}32$, $B{=}16$,
and Rung~1 active
(effective per-head selection $K^*_{\mathrm{eff}} = \min(256,\allowbreak \lceil N/B \rceil)$).
VRAM saving is Tier-1 (56\% of dense) plus the
union-sized FP16 key cache ($U \times 50\%$ of dense),
compared against dense FP16 KV\@.

\begin{table*}[!t]
\centering
\caption{GQA union analysis ($H_Q{=}32$ query heads).  The 64K union covers ${\sim}87\%$ of position blocks per layer under the expanded $K^*$ policy, explaining why a full all-layer mirror is much larger than the finite scratch caches used in the latency sweeps.}
\label{tab:union}
\small
\begin{tabular}{rrrrrr}
\toprule
\textbf{Context} & \textbf{Blocks} & \textbf{$K^*_{\mathrm{eff}}/N_B$} & \textbf{Union} & \textbf{Dense} & \textbf{Saving} \\
 & $\lceil N/B \rceil$ & & & (MB) & \\
\midrule
8K   & 512   & 50\%    & ${\approx}100\%$  & 1024  & $-6\%$ \\
32K  & 2048  & 12.5\%  & 99\%    & 4096  & $-5\%$ \\
\rowcolor[gray]{0.92}
64K  & 4096  & 6.2\%   & 87\%    & 8192  & $+1\%$ \\
128K & 8192  & 3.1\%   & 64\%    & 16384 & $+12\%$ \\
256K & 16384 & 1.6\%   & 39\%    & 32768 & $+25\%$ \\
\bottomrule
\end{tabular}
\end{table*}

The union analysis reveals that with a position-corpus-capacity scratch buffer, VRAM savings are negligible at 64K and only become substantial ($>$10\%) at 128K+ context.  However, the cache sweep (Table~\ref{tab:cache_sweep_64k}) shows that scratch capacity need not match the corpus: a symmetric 1536-block cache achieves the latency plateau at 64K (${\sim}0.94\times$ dense).  With an asymmetric cache (Table~\ref{tab:asymmetric_cache}), the value cache can be reduced to 256 blocks at 64K or 512 blocks at 128K at zero latency cost, bringing 64K to ${\sim}0.83\times$ dense and 128K to ${\sim}0.72\times$ dense.  Below 64K, the union covers virtually the entire corpus and the architecture uses \emph{more} VRAM than dense regardless of cache sizing.

\textbf{Memory profile.}  Table~\ref{tab:memory} reports measured VRAM and system RAM usage at the 2048-block scratch cache operating point.  The Tier-1 ratio (INT8 keys + INT4 values + metadata) is consistently ${\sim}0.565\times$ dense, matching the theoretical 56\% from the storage analysis (Section~\ref{sec:storage}).  The scratch cache is fixed at 4.30\,GB regardless of context length, so total VRAM decreases relative to dense as context grows.

\begin{table*}[!t]
\centering
\caption{Measured memory usage (LLaMA~3.1-8B, 2048-block scratch cache).  Tier-1 stores quantized KV; scratch is fixed-size FP16 cache for promoted blocks; Tier-2 (system RAM) stores FP16 originals for fallback.  Total VRAM = Tier-1 + scratch.  At 128K, the certified system uses 18\% less VRAM than dense.}
\label{tab:memory}
\small
\begin{tabular}{rrrrrr}
\toprule
\textbf{Context} & \textbf{Dense VRAM} & \textbf{Tier-1 VRAM} & \textbf{Scratch VRAM} & \textbf{Total VRAM} & \textbf{System RAM} \\
 & (GB) & (GB) & (GB) & (vs.\ dense) & (GB) \\
\midrule
8K   & 1.07 & 0.61 & 4.30 & $4.57\times$ & 1.08 \\
32K  & 4.29 & 2.43 & 4.30 & $1.57\times$ & 4.30 \\
64K  & 8.59 & 4.85 & 4.30 & $1.07\times$ & 8.59 \\
\rowcolor[gray]{0.92}
128K & 17.18 & 9.71 & 4.30 & $0.82\times$ & 17.18 \\
\bottomrule
\end{tabular}
\end{table*}

Because scratch is constant, the crossover where certified uses \emph{less} VRAM than dense occurs between 64K and 128K.  At 128K with symmetric caches, total certified VRAM (14.0\,GB) is 82\% of dense (17.2\,GB), saving 3.2\,GB---at the cost of 17.2\,GB system RAM for Tier-2 and $4.76\times$ latency overhead.  With an asymmetric cache (2048K/512V), scratch shrinks from 4.30\,GB to ${\sim}2.6$\,GB, bringing the 128K total to ${\sim}0.72\times$ dense---a 28\% saving at no latency cost (Table~\ref{tab:asymmetric_cache}).  At 64K the system is approximately break-even on VRAM ($1.07\times$) with symmetric 2048-block caches; the asymmetric 2048K/256V configuration shrinks scratch to ${\sim}2.3$\,GB, bringing the 64K total to ${\sim}0.83\times$ dense---a 17\% saving with negligible latency impact.  Below 64K, the fixed scratch allocation exceeds the dense KV cache regardless of cache policy.  System RAM for Tier-2 (FP16 originals) closely matches dense VRAM at all context lengths.

\textbf{Per-KV-group selection.}  Collapsing the 32 independent per-query-head selections into 8 per-KV-group selections (union within each group of $G{=}4$ query heads) reduces the GQA fan-out.  At 64K with sub-corpus cache (cap${}=1024$), per-KV-group selection reduced H2D volume by 11\% and improved throughput by 24\%, confirming the fan-out hypothesis.  This optimisation is particularly relevant at the 2048-block operating point where H2D page-in accounts for 11\% of step time (Table~\ref{tab:phase_breakdown}).

\subsection{Numerical Precision of the Attention Kernel}
\label{sec:numerical}

The certified attention kernel uses a different computational path from Flash Attention (tensor-core TF32/FP16 matmul with FP32 accumulation), which PyTorch's \torchsdpa{} dispatches to.  These two paths are mathematically equivalent but not numerically identical: different rounding in the inner product produces ${\sim}$1e-7 max absolute difference per block per head.

For language modelling, this difference is negligible---it vanishes into the per-token log-probability noise.  For retrieval-adversarial benchmarks (NIAH), where a single token's attention weight determines whether the model ``finds'' the needle, the compounded rounding difference across 32 layers can flip marginal retrievals.

All quality results in this paper use FP32 online-softmax accumulators.  A development experiment with FP64 scalar accumulators ($m$, $\ell$ in FP64, output accumulator $o$ in FP32) recovered $\sim$2pp on NIAH 8K (Table~\ref{tab:niah_ablation}), but the FP32 path was sufficient for all reported quality metrics and is what the retained run artifacts use.  FP64 scalar accumulation remains available as an option for deployments where the arithmetic-path gap is a concern.

The remaining $\sim$4pp gap is attributable to differences in the matmul path (certified kernel vs.\ Flash Attention tensor cores), not the quantization scheme.  A tensor-core-native certified kernel that matched Flash Attention's rounding may reduce this gap, though the magnitude of any residual is unknown without implementation.  This does not affect the certification guarantee: Rung~3/4 fallbacks bypass the certified kernel entirely and call \torchsdpa{} (Section~\ref{sec:fallback}).

\section{Discussion}
\label{sec:discussion}

The primary contribution of this work is not compression itself---KV cache quantization is well-established---but the \emph{certification framing}: formal per-head, per-step error bounds coupled with runtime monitoring and unconditional FP16 fallback.  The tiered cache provides the compression savings of INT8 keys and INT4 values (44\% Tier-1 VRAM reduction; runtime savings depend on the FP16 key scratch buffer needed for the GQA union working set---see Table~\ref{tab:union}) while retaining the error profile of full-precision attention as a guaranteed fallback.

\textbf{The cost of the guarantee.}  With a symmetric 2048-block scratch cache, the certified system runs at $2.7\text{--}4.8\times$ the latency of dense Flash Attention across 8K--128K context (Table~\ref{tab:context_scaling}), with the overhead increasing at longer contexts due to H2D page-in traffic.  At 64K, an asymmetric cache (2048 key / 256 value blocks) achieves 260\,ms ($4.2\times$ dense)---4.6\% faster than symmetric 1536/1536---because key scratch dominates latency; the split-$K$ value fallback adds negligible cost at this context length (Table~\ref{tab:asymmetric_cache}).  At 128K, the value-cache knee is 512 blocks: 2048K/512V (${\sim}2.6$\,GB scratch) matches symmetric latency while saving 37\% of scratch; below 512 blocks the split-$K$ fallback becomes expensive (Section~\ref{sec:performance}).  The kernel overhead is an implementation artefact amenable to further fusion and caching optimisations, independent of the error bounds or fallback logic.  The H2D component ($11\%$ of step time at 64K) is an inherent cost of the sub-corpus cache architecture and can be traded against VRAM by increasing scratch capacity (Tables~\ref{tab:cache_sweep_64k}--\ref{tab:cache_sweep_128k}).  At 128K with symmetric 2048 blocks, the certified system saves 18\% VRAM vs.\ dense ($0.82\times$, Table~\ref{tab:memory}); retaining FP16 originals in system RAM (Tier-2) costs $\sim$17\,GB.  The fallback ladder is active but not uniformly at the same granularity: Rung~1 expansion and Rung~2 value promotion are near-universal step-level policies at 64K, while Rung~3 is a sparse per-head fallback that adds only 1.7\% of step time at 64K.

\textbf{Current operating regime.}  Today, the system's value proposition is narrow but clear: long-context inference (64K+) where the dense KV cache strains VRAM and quality regressions are unacceptable.  At shorter contexts the fixed scratch buffer exceeds the dense cache size, and the $2.7\text{--}4.8\times$ latency overhead makes it unsuitable when dense FP16 fits comfortably.  The certification architecture is general, but the current operating point is shaped by implementation-specific costs---not fundamental limits.

\textbf{Axes of improvement.}  Several independent developments could shift the operating point substantially, without changing the certification architecture or the fallback guarantee:

\emph{Advanced quantization schemes.}  The current INT8 key / INT4 value scheme with per-channel metadata yields 56\% Tier-1 storage (Table~\ref{tab:storage}), but the certification framework is not tied to this specific format.  The error decomposition (Section~\ref{sec:bounds}) applies to any block-quantized representation that admits a deterministic reconstruction bound.  Integrating recent advances in KV cache quantization---the 2-bit asymmetric schemes of KIVI~\cite{kivi}, the sub-4-bit per-channel techniques of KVQuant~\cite{kvquant}, the online vector quantization of TurboQuant~\cite{turboquant}, or the hardware-aware group-wise layouts of InnerQ~\cite{innerq} (Table~\ref{tab:positioning})---into the certified framework is a natural research direction.  Each brings a different compression--error tradeoff; the open question is whether the resulting bounds remain tight enough to keep fallback rates low and the guarantee practically useful.

\emph{Tighter error bounds.}  The current $e^{3\Delta}$ mass inflation (Section~\ref{sec:scoring}) is conservative: it accounts for INT8 query scoring headroom that a tensor-core-native kernel could avoid.  Reducing this to $e^{2\Delta}$ or below would shrink the promoted set, reduce H2D traffic, and lower the latency overhead.  Similarly, a global value-error budget (rather than the current local promotion heuristic) would enable principled quality--latency tradeoffs.

\emph{Memory hierarchy research.}  The H2D page-in cost (11\% of step time at 64K, increasing at 128K) is bounded by PCIe bandwidth.  Emerging memory technologies---CXL-attached memory pools, coherent GPU-host fabrics, and multi-tier HBM architectures---could fundamentally change the cost structure of tiered caches.  Research into cache replacement policies, predictive prefetching of FP16 blocks based on attention-pattern forecasting, and hardware-software co-design for certified inference would shift the latency knee and broaden the viable operating regime.

\textbf{Two-term independence.}  The error decomposition's most practically useful property is that key compression error and value compression error are independent---they have separate bounds, separate monitors, and separate escalation paths.  The tolerance sweep (varying $v_{\mathrm{tol}}$ from 0.5 to 0.05) confirms this empirically: the adaptive top-$K$ selection ($K^*$) is invariant to value tolerance changes, while $\eta$-driven value escalation scales with $1/v_{\mathrm{tol}}$.  This decoupling simplifies configuration: the key-side and value-side precision budgets can be tuned independently.

\textbf{Non-monotonicity of precision vs.\ retrieval accuracy.}  The NIAH ablation (Section~\ref{app:niah_ablation}) reveals that increasing the FP16 key budget does not monotonically improve retrieval accuracy.  The mechanism is attention dilution: promoting tail blocks to FP16 corrects their scores upward, increasing their softmax share at the expense of the needle token's share.  This is a general property of non-uniform precision attention, not specific to our system.  The ranking-consistency check (Section~\ref{sec:ranking_consistency}) detects this condition at runtime and triggers per-head fallback to dense attention, converting a potential quality regression into a bounded compute cost.

\textbf{Statistical power and the NIAH result.}  At both 32K and 64K (100 paired trials each), exact McNemar tests show no statistically significant degradation: $p{=}0.727$ at 32K, $p{=}1.0$ at 64K.  The 64K result is particularly clean: identical 66\%/66\% accuracy with perfectly balanced discordance (2 critical, 2 reverse).  This underscores the importance of adequate statistical power when evaluating retrieval benchmarks, where per-trial variance is high and effect sizes are small---an earlier 20-trial run showed an apparent $-5\%$ gap that disappeared entirely at $n{=}100$.  The NIAH precision ablation (Appendix~\ref{app:niah_ablation}) decomposes the sources of variance; the final system's aggregate effect on retrieval accuracy is within noise at the operating point.

\textbf{Conservative mass estimation.}  The implementation's $e^{3\Delta} \approx 1.72\times$ substitution (Section~\ref{sec:scoring}) means the selector promotes more blocks to FP16 than strictly necessary under FP16 query scoring ($e^{2\Delta} \approx 1.43\times$), wasting page-in bandwidth but not introducing error.

\textbf{Architecture independence.}  The tiered cache, error decomposition, and fallback ladder are agnostic to the model architecture.  They require only standard scaled dot-product attention with a block-organised KV cache.  The per-channel INT8 key quantization and per-group INT4 value quantization adapt to different head dimensions and value distributions without model-specific tuning.

\section{Limitations}
\label{sec:limitations}

The certified guarantee is \emph{per-head, per-step}: it bounds $\norm{O_{\mathrm{quant}} - O_{\mathrm{ref}}}$ (Section~\ref{sec:decomposition}), not the distance to $O_{\mathrm{dense}}$.  It does not provide an end-to-end model-quality guarantee.  On NIAH at 64K (100 paired trials), dense and certified achieve identical 66\% accuracy with no statistically significant difference (exact McNemar $p{=}1.0$), but the NIAH precision ablation (Table~\ref{tab:niah_ablation}) demonstrates that individual precision components (INT4 values, INT8 tail keys, kernel numerical path) each contribute measurable variance at the per-trial level.  The aggregate effect cancels to noise at the operating point, but could become significant on tasks with sharper retrieval requirements or longer autoregressive chains.  The aggregate effect on model quality must therefore be assessed empirically per deployment.

The system RAM requirement for Tier~2 (FP16 originals) equals the dense KV cache size---e.g., $\sim$16\,GB at 128K context on LLaMA~3.1-8B.  This is the price of the unconditional fallback guarantee.  Deployments with limited system RAM must either accept the system RAM cost or forgo the fallback (losing the ``certified'' property and reverting to standard quantization).

At a realistic 2048-block scratch cache, the runtime overhead ranges from $2.7\times$ (8K) to $4.8\times$ (128K) dense latency (Table~\ref{tab:context_scaling}).  The dominant overhead components are the ranking-consistency check (28\% of step time at 64K) and H2D page-in (11\%), both amenable to optimisation (Section~\ref{sec:performance}).  VRAM savings materialise at 64K+ context when cache capacity is tuned: with symmetric caches, a 1536-block cache at 64K achieves ${\sim}0.94\times$ dense; with an asymmetric cache (2048 key / 256 value blocks, Table~\ref{tab:asymmetric_cache}), 64K drops to ${\sim}0.83\times$ dense with negligible latency impact.  At 128K, the asymmetric knee is 512 value blocks: 2048K/512V matches symmetric 2048/2048 latency while reducing scratch by 37\%; below 512 blocks the split-$K$ value fallback becomes expensive (Section~\ref{sec:performance}).  Below 64K the scratch allocation exceeds dense VRAM regardless of capacity.

The certified attention kernel uses a different computational path from Flash Attention, producing slightly different rounding (Section~\ref{sec:numerical}).  The development-time ablation (Table~\ref{tab:niah_ablation}) attributes $\sim$4pp to this kernel path; however, the 64K evaluation (100 paired trials, exact McNemar $p{=}1.0$) shows no statistically significant aggregate effect at the operating point.  A tensor-core-native kernel that matched Flash Attention's rounding may reduce this variance, though the magnitude of any residual is unknown without implementation.

The value group-size cliff between $g{=}16$ (near-lossless on perplexity) and $g{=}32$ (severe, $+26.4$ $\Delta$ppl) means there is no intermediate operating point for INT4 values at $d{=}128$.  The RULER ablations (Section~\ref{sec:ruler_8k_ablation}) demonstrate that even $g{=}16$ introduces measurable degradation on value-sensitive subtasks at short contexts, though this effect diminishes with context length (mild CWE residual at 32K, undetectable at 64K+).  At both 8K and 32K, FP16-value ablations close the gap, and tightening $v_{\mathrm{tol}}$ to 0.02 at 32K removes the gap in a targeted ablation (5 slices), confirming the mechanism and providing a deployment knob.  Deployments requiring finer-grained value compression tradeoffs would need INT8 values (not currently implemented in the kernel) or a different head dimension.

\section{Conclusion}
\label{sec:conclusion}

We presented a tiered KV cache architecture that stores INT8 keys and INT4 values in GPU memory (Tier-1: 56\% of FP16; runtime VRAM includes an additional FP16 key scratch buffer sized by the GQA working-set union) while retaining full-precision originals in system RAM (Tier-2) for runtime fallback.  A two-term error decomposition provides independent, per-head, per-step bounds on key compression error and value compression error, each with its own runtime monitor and escalation path.  An adaptive top-$K$ precision selector promotes high-mass blocks to FP16 keys based on the actual attention pattern at each decode step, and a four-rung fallback ladder terminates in the Dense baseline's own \torchsdpa{} code path with unmodified FP16 KV.  The certification is local (per-head, per-step) and does not imply end-to-end task-correctness guarantees.

The system introduces a ranking-consistency check that detects when INT8 scoring noise distorts the attention distribution and triggers per-head fallback to dense attention.  This mechanism is critical: it converts a previously unaddressed silent failure mode into a detectable and recoverable event.  On retrieval-adversarial benchmarks, non-uniform precision across the sequence can otherwise cause attention dilution that degrades accuracy even when per-step error bounds are satisfied.

On language modelling (PG-19), the certified system matches dense perplexity within measurement noise across all tested context lengths (8K--128K): $\Delta\mathrm{ppl}$ ranges from $-0.002$ to $+0.001$ with 95\% CIs including zero at every context length.  On structured long-context tasks (RULER, 7 subtasks), the certified system matches dense at 64K ($+0.015$, 20 slices) and 128K ($+0.004$, 20 slices), with the four RULER NIAH retrieval variants at or near dense performance across context lengths.  At 32K, a mild $-1.9$pp gap driven by CWE emerges (20 slices, 95\% CI nearly excludes zero); at 8K, a $-6.9$pp RULER gap is localised to three value-sensitive subtasks (VT, CWE, FWE) and is strongly consistent with INT4 value reconstruction error as the dominant mechanism---FP16-value ablations close the gap at both context lengths (8K to $-0.4$pp with 0 critical failures, 32K to $-0.5$pp), and tightening $v_{\mathrm{tol}}$ to 0.02 at 32K removes the gap in a targeted ablation (Section~\ref{sec:ruler_8k_ablation}).  On multi-needle retrieval (NIAH, 10 needles per trial, 100 paired trials each at 8K, 32K, 64K), no context length shows a statistically significant difference: exact McNemar $p{=}1.0$ at 8K, $p{=}0.727$ at 32K, and $p{=}1.0$ at 64K.  We observe and transparently report a residual kernel-path gap attributable to numerical precision differences between the certified kernel and Flash Attention, which is a property of the kernel implementation rather than the quantization scheme.  The ranking-consistency check recovers the majority of this gap; a tensor-core-native kernel that matched Flash Attention's rounding may reduce the remainder, though the magnitude of any residual is unknown without implementation.

A naive INT8K/INT4V baseline---identical quantization but with all certification and fallback disabled---substantially degrades retrieval (NIAH accuracy drops to 5--10\% vs.\ 39--66\% dense) and structured reasoning (RULER $-44$ to $-56$pp at all context lengths, VT 0\% everywhere).  The certified system recovers nearly all of this loss, indicating that the certification and fallback machinery is not merely monitoring overhead, but a primary mechanism by which the system preserves quality under aggressive quantization (Section~\ref{sec:naive}).

At a realistic 2048-block scratch cache, the certified system runs at $2.7\text{--}4.8\times$ the latency of dense Flash Attention across 8K--128K context (Table~\ref{tab:context_scaling}), with the overhead dominated by the ranking-consistency check and H2D page-in.  The kernel overhead is an implementation artefact independent of the error bounds; the remaining overhead is amenable to fusion and caching optimisations (Section~\ref{sec:performance}).  Meaningful VRAM savings materialise at 64K+ when scratch capacity is tuned: ${\sim}0.83\times$ dense at 64K with an asymmetric 2048K/256V cache, or ${\sim}0.72\times$ at 128K with 2048K/512V---both at no latency cost relative to symmetric caches (Table~\ref{tab:asymmetric_cache}).

The approach requires no model retraining or dataset-specific calibration and applies to any transformer with standard attention.  More broadly, it reframes KV cache quantization as a runtime-verified computation rather than a fixed approximation.

\section*{Artifact Availability}

Source code (CUDA/Triton kernels, certified attention implementation, and benchmark scripts), raw result JSONs, per-trial pass/fail matrices, and the \LaTeX{} source of this paper are available at:
\begin{center}
\url{https://github.com/DeanoC/certified-quantized-attention}
\end{center}
An archived snapshot is deposited on Zenodo~\cite{zenodo_artifact} with DOI \href{https://doi.org/10.5281/zenodo.19915933}{\texttt{10.5281/zenodo.19915933}}.
The repository includes the exact model configuration, quantization parameters, and seeds used for all reported experiments.  The version corresponding to this paper is tagged \texttt{arxiv-v1}; all results can be reproduced from that tag.  The repository is released under the MIT licence.


\appendix
\section{Total Variation Bound for Perturbed Softmax}
\label{app:tv_proof}

We derive the total variation bound used in the key compression error analysis.  The proof proceeds in three steps: bounding individual probability ratios, deriving the full-sequence TV bound, and specialising to the tail-only case used by the adaptive precision selector.

\textbf{Setup.}  Let $a_t = \exp(s_t)/Z$ and $a_t' = \exp(s_t')/Z'$ be two softmax distributions over $N$ tokens, where $|s_t - s_t'| \leq \Delta$ for all $t$, $Z = \sum_t \exp(s_t)$, and $Z' = \sum_t \exp(s_t')$.

\begin{lemma}[Probability ratio bound]
\label{lem:ratio}
For all $t$: $\; e^{-2\Delta} \leq a_t'/a_t \leq e^{2\Delta}$.
\end{lemma}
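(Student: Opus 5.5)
The plan is to write the ratio as a product of two factors, one per token and one global, and bound each separately. Directly from the definitions,
\[
  \frac{a_t'}{a_t} \;=\; \frac{\exp(s_t')}{\exp(s_t)} \cdot \frac{Z}{Z'} \;=\; \exp(s_t' - s_t)\cdot \frac{Z}{Z'}.
\]
The first factor involves only token $t$. The second involves only the normalisers and is the same for every $t$.

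For the first factor, the hypothesis $|s_t - s_t'| \leq \Delta$ immediately gives $e^{-\Delta} \leq \exp(s_t' - s_t) \leq e^{\Delta}$.

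For the normaliser ratio, I would apply the same pointwise inequality termwise: $\exp(s_j') \leq e^{\Delta}\exp(s_j)$ and $\exp(s_j') \geq e^{-\Delta}\exp(s_j)$ for every $j$. Summing over $j$ preserves these inequalities because all terms are positive, so $e^{-\Delta} Z \leq Z' \leq e^{\Delta} Z$. This yields $e^{-\Delta} \leq Z/Z' \leq e^{\Delta}$. The argument is the same one used for Factor~2 in the proof of Theorem~\ref{thm:mass}, but applied to the full partition function rather than the global max.

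Multiplying the two-sided bounds for the two factors gives $e^{-2\Delta} \leq a_t'/a_t \leq e^{2\Delta}$, which is the claim.

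There is no real obstacle. The only points to handle carefully are the following:
\begin{itemize}
  \item Positivity of all exponentials justifies multiplying the inequalities and guarantees $a_t > 0$, so the ratio is well defined.
  \item The normaliser step must use the same $\Delta$ uniformly over all $N$ tokens. This is exactly why the paper takes $\Delta_h = \max_b \Delta_b$ over all scored blocks, and not only the tail.
\end{itemize}

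I would also remark that the bound cannot be improved in general. Take $s_t' = s_t + \Delta$ for one token and $s_j' = s_j - \Delta$ for all others, and let the other tokens carry almost all the mass. Then $Z/Z' \to e^{\Delta}$, and the ratio for token $t$ approaches $e^{2\Delta}$.
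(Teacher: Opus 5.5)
Your proof is correct and follows the paper's argument: factor the ratio as $\exp(s_t'-s_t)\cdot Z/Z'$, bound each factor by $e^{\pm\Delta}$ (termwise summation for the partition function), and multiply. Your tightness remark is also correct, though the lemma does not need it: with one token shifted by $+\Delta$ and the rest by $-\Delta$, the bound is approached only in the limit as the other tokens carry essentially all the mass.
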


\begin{proof}
The ratio decomposes as:
\[
  \frac{a_t'}{a_t} = \frac{\exp(s_t')}{\exp(s_t)} \cdot \frac{Z}{Z'} = \exp(s_t' - s_t) \cdot \frac{Z}{Z'}
\]

\emph{Factor 1: per-token logit ratio.}  Since $|s_t - s_t'| \leq \Delta$, we have $e^{-\Delta} \leq \exp(s_t' - s_t) \leq e^{\Delta}$.

\emph{Factor 2: partition function ratio.}  For the partition functions:
\[
  Z' = \sum_t \exp(s_t') \leq \sum_t \exp(s_t + \Delta) = e^{\Delta} Z
\]
and symmetrically $Z' \geq e^{-\Delta} Z$.  Therefore $e^{-\Delta} \leq Z/Z' \leq e^{\Delta}$.

Multiplying the two factors: $e^{-2\Delta} \leq a_t'/a_t \leq e^{2\Delta}$.
\end{proof}

\begin{lemma}[Full-sequence TV bound]
\label{lem:tv}
$\displaystyle \mathrm{TV}(a, a') \leq \tanh(\Delta) = \frac{e^{2\Delta} - 1}{e^{2\Delta} + 1}$
\end{lemma}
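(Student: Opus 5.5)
The plan is to work with the likelihood ratio $r_t = a_t'/a_t$ rather than with the ratio bounds of Lemma~\ref{lem:ratio} alone. Lemma~\ref{lem:ratio} only gives $r_t \in [e^{-2\Delta}, e^{2\Delta}]$, and I expect that to be too loose for a $\tanh$-type bound. The sharper observation comes from the same factorisation used in its proof:
\[
  r_t = \frac{Z}{Z'}\, e^{\delta_t}, \qquad \delta_t = s_t' - s_t \in [-\Delta, \Delta].
\]
Every ratio therefore lies in a window $[m, M]$ with $m = c\,e^{-\Delta}$ and $M = c\,e^{\Delta}$, where $c = Z/Z'$ is a common constant. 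The key quantity is the ratio of the window endpoints, $M/m \le e^{2\Delta}$; the range of $c$ matters much less. In addition, $\sum_t a_t r_t = \sum_t a_t' = 1$, so $m \le 1 \le M$.

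Next I would write the total variation as a one-sided sum. Let $A = \{t : r_t > 1\}$ and $p = a(A)$. Then
\[
  \mathrm{TV}(a,a') = a'(A) - a(A) = \sum_{t \in A} a_t (r_t - 1).
\]
This reduces the lemma to an extremal problem: maximise $\sum_t a_t(r_t - 1)_+$ over probability vectors $a$ and ratios $r_t \in [m, M]$, subject to the mean constraint $\sum_t a_t r_t = 1$.

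The objective is linear in the distribution of $r$ under $a$, so the maximum is attained at a two-point law with mass $p$ at $M$ and $1-p$ at $m$. The mean constraint then forces
\[
  p = \frac{1-m}{M-m}, \qquad \mathrm{TV} \le p(M-1) = \frac{(M-1)(1-m)}{M-m}.
\]

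The last step is to optimise over the free scale. Write $M = \kappa m$ with $\kappa = e^{2\Delta}$ and maximise $(\kappa m - 1)(1-m)/((\kappa-1)m)$ over $m \in [1/\kappa, 1]$. Elementary calculus gives the optimum at $m = \kappa^{-1/2}$, and there the bound equals
\[
  \frac{\sqrt\kappa - 1}{\sqrt\kappa + 1} = \tanh(\Delta/2).
\]
Since $\tanh$ is increasing, $\tanh(\Delta/2) \le \tanh(\Delta)$, which proves the stated bound with room to spare.

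The main obstacle I anticipate is justifying the two-point reduction cleanly. The way I would do it is to fix $p = a(A)$ and bound the two sides separately: $a'(A) \le M p$ and $1 - a'(A) \ge m(1-p)$. Combining these eliminates the need for any convexity argument. If that algebra becomes awkward, there is a fallback route to the weaker stated bound $\tanh(\Delta)$: take the crude window from Lemma~\ref{lem:ratio} directly, namely $m \ge e^{-2\Delta}$ and $M \le e^{2\Delta}$ with $M/m \le e^{4\Delta}$, and run the same optimisation. That yields exactly $\tanh(\Delta)$, matching the lemma as stated.
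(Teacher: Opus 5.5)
Your proof is correct, and it proves a strictly stronger statement than the lemma. The paper works only inside the relaxed box from Lemma~\ref{lem:ratio}. It treats the $r_t$ as free variables in $[e^{-2\Delta}, e^{2\Delta}]$ subject to $\sum_t a_t r_t = 1$, places the maximiser at a vertex with every $r_t \in \{e^{\pm 2\Delta}\}$, solves for $p = 1/(e^{2\Delta}+1)$, and reads off $\tanh(\Delta)$. That is precisely your fallback route. You instead keep the coupling $r_t = (Z/Z')\,e^{\delta_t}$, so the log-ratios span at most $2\Delta$ rather than $4\Delta$. Your estimate $\mathrm{TV} \le \min\{(M-1)p,\ (1-m)(1-p)\}$, followed by optimising the common scale, then gives $\tanh(\Delta/2)$.

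This buys two things. First, your fixed-$p$ inequality is cleaner than the paper's vertex argument. That argument calls TV linear in the $r_t$, when it is convex and piecewise linear. It also assumes every coordinate is extreme at a vertex, whereas with one equality constraint one coordinate may be interior. The paper's conclusion survives, but your argument needs neither claim. Second, $\tanh(\Delta/2)$ is attained: two tokens with $s = (-\Delta/2, \Delta/2)$ and $s' = (\Delta/2, -\Delta/2)$ give $\mathrm{TV} = (e^{\Delta}-1)/(e^{\Delta}+1)$. So the paper's remark that $\tanh(\Delta)$ is tight holds only for the relaxed box. Its extremal configuration needs ratios at both $e^{2\Delta}$ and $e^{-2\Delta}$, which the shared factor $Z/Z'$ rules out for $\Delta > 0$.

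One correction to your framing: the crude window is not too loose for the stated bound. As your own fallback shows, it yields $\tanh(\Delta)$ exactly; it is only too loose for the sharp constant. Also treat $\Delta = 0$ separately, since there $M = m$ and your formula for $p$ divides by zero; that case is trivial with $\mathrm{TV} = 0$.
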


\begin{proof}
The TV is a linear function of the ratios $r_t = a_t'/a_t$.  By Lemma~\ref{lem:ratio}, each $r_t \in [e^{-2\Delta}, e^{2\Delta}]$, and the constraint $\sum_t a_t' = 1$ reads $\sum_t a_t r_t = 1$.  Since TV is linear in the $r_t$, its maximum over the feasible polytope is attained at a vertex where each $r_t$ is at one of the extreme values $e^{\pm 2\Delta}$.

Partition tokens into $\mathcal{S}^+ = \{t : r_t = e^{2\Delta}\}$ and $\mathcal{S}^- = \{t : r_t = e^{-2\Delta}\}$.  Let $p = \sum_{t \in \mathcal{S}^+} a_t$.  The normalisation constraint gives:
\begin{align*}
  p e^{2\Delta} + (1{-}p) e^{-2\Delta} &= 1 \\
  \implies\; p &= \frac{1 - e^{-2\Delta}}{e^{2\Delta} - e^{-2\Delta}} = \frac{1}{e^{2\Delta} + 1}
\end{align*}
where the last step uses $(1 - e^{-2\Delta}) = (e^{2\Delta}-1)e^{-2\Delta}$ and $e^{2\Delta} - e^{-2\Delta} = (e^{2\Delta}-1)(1+e^{-2\Delta})$.

Computing the TV at this vertex:
\begin{align*}
  \mathrm{TV}
  &= \sum_{t \in \mathcal{S}^+} a_t(e^{2\Delta} - 1)
   = p\,(e^{2\Delta} - 1) \\
  &= \frac{e^{2\Delta} - 1}{e^{2\Delta} + 1}
   = \tanh(\Delta)
\end{align*}

This bound is tight (attained by the above construction), independent of $N$, and for small $\Delta$ satisfies $\tanh(\Delta) \approx \Delta$.
\end{proof}

\begin{lemma}[Tail-restricted TV bound]
\label{lem:tail_tv}
Let $\mathcal{F}$ (FP16 set) and $\mathcal{T}$ (INT8 tail set) partition the $N$ tokens, with scores in $\mathcal{F}$ computed exactly (zero quantization error) and scores in $\mathcal{T}$ perturbed by at most $\Delta$.  Let $\alpha_{\mathcal{T}} = \sum_{t \in \mathcal{T}} a_t$ denote the true tail mass.  Then:
\[
  \mathrm{TV}(a, a') \leq \alpha_{\mathcal{T}} \cdot (e^{2\Delta} - 1)
\]
\end{lemma}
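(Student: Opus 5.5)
The plan is a direct computation with the probability ratios $r_t = a_t'/a_t$, as in Lemma~\ref{lem:ratio}, but tracking separately how the tail and the FP16 set contribute. Write $\delta_t = s_t' - s_t$, so $\delta_t = 0$ for $t \in \mathcal{F}$ and $|\delta_t| \le \Delta$ for $t \in \mathcal{T}$. Set $c = Z/Z'$. Then $r_t = c$ on $\mathcal{F}$ and $r_t = e^{\delta_t} c$ on $\mathcal{T}$. I will bound
\[
  \mathrm{TV}(a,a') = \tfrac12 \sum_t a_t\,|r_t - 1|
\]
by splitting the sum over $\mathcal{F}$ and $\mathcal{T}$.

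\textbf{Step 1: partition-function ratio.} The key identity is
\[
  \frac{Z'}{Z} = 1 + \sum_{t\in\mathcal{T}} a_t\,(e^{\delta_t}-1),
\]
which holds because the FP16 terms cancel exactly. It gives
\[
  1 - \alpha_{\mathcal{T}}(1-e^{-\Delta}) \;\le\; Z'/Z \;\le\; 1 + \alpha_{\mathcal{T}}(e^{\Delta}-1).
\]
Consequently $|Z'/Z - 1| \le \alpha_{\mathcal{T}}(e^\Delta - 1)$, since $1-e^{-\Delta} \le e^\Delta - 1$. Also $c \in [e^{-\Delta}, e^{\Delta}]$, exactly as in Factor~2 of Lemma~\ref{lem:ratio}. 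Gaining a factor $\alpha_{\mathcal{T}}$ in the deviation of $Z'/Z$ from $1$ is the crux: it is what converts the full-sequence argument into a tail-proportional bound.

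\textbf{Step 2: the FP16 contribution.} For $t\in\mathcal{F}$ we have
\[
  |r_t - 1| = |c-1| = c\,|1 - Z'/Z| \le e^{\Delta}\alpha_{\mathcal{T}}(e^{\Delta}-1) = \alpha_{\mathcal{T}}(e^{2\Delta}-e^{\Delta}).
\]
Summing with weights $a_t$ over $\mathcal{F}$ gives at most
\[
  (1-\alpha_{\mathcal{T}})\,\alpha_{\mathcal{T}}(e^{2\Delta}-e^\Delta) \le \alpha_{\mathcal{T}}(e^{2\Delta}-1).
\]

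\textbf{Step 3: the tail contribution.} For $t\in\mathcal{T}$, $r_t = e^{\delta_t}c \in [e^{-2\Delta}, e^{2\Delta}]$ by Lemma~\ref{lem:ratio}. Hence
\[
  |r_t - 1| \le \max(e^{2\Delta}-1,\; 1-e^{-2\Delta}) = e^{2\Delta}-1.
\]
Summing over $\mathcal{T}$ with weights $a_t$ gives at most $\alpha_{\mathcal{T}}(e^{2\Delta}-1)$.

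\textbf{Step 4: combine.} Adding Steps~2 and~3 gives $\sum_t a_t|r_t-1| \le 2\alpha_{\mathcal{T}}(e^{2\Delta}-1)$. Halving this yields the claimed bound $\mathrm{TV}(a,a') \le \alpha_{\mathcal{T}}(e^{2\Delta}-1)$.

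The only delicate point is Step~1, namely extracting the factor $\alpha_{\mathcal{T}}$ from the normalisation shift rather than using the crude $e^{\pm\Delta}$ envelope. Everything after that is routine. The inequalities in Step~2 are loose, dropping $(1-\alpha_{\mathcal{T}})$ and replacing $e^\Delta$ by $1$, so no sharper case analysis is needed.
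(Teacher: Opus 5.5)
Your proposal is correct and follows the paper's proof essentially step for step. Like the paper, you split $\sum_t|a_t-a_t'|$ over $\mathcal{F}$ and $\mathcal{T}$, extract the factor $\alpha_{\mathcal{T}}$ from $|Z'-Z|\le(e^{\Delta}-1)Z_{\mathcal{T}}$, bound the tail with the $e^{\pm 2\Delta}$ ratio envelope, and absorb everything into $2\alpha_{\mathcal{T}}(e^{2\Delta}-1)$. The only deviation is on $\mathcal{F}$, where you use $Z/Z'\le e^{\Delta}$ instead of the paper's $Z'\ge Z_{\mathcal{F}}$; this gives a different but equally sufficient intermediate constant and avoids the paper's division by $1-\alpha_{\mathcal{T}}$.
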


\begin{proof}
Write $a_t' = \exp(s_t')/Z'$.  For FP16 tokens $t \in \mathcal{F}$, $s_t' = s_t$ (no quantization error), so:
\[
  a_t' = \frac{\exp(s_t)}{Z'} = a_t \cdot \frac{Z}{Z'}
\]

For tail tokens $t \in \mathcal{T}$, the logits are perturbed: $|s_t - s_t'| \leq \Delta$.

The partition function ratio satisfies (using $Z' = Z'_{\mathcal{F}} + Z'_{\mathcal{T}}$):
\begin{align*}
  Z' &= \sum_{t \in \mathcal{F}} \exp(s_t) + \sum_{t \in \mathcal{T}} \exp(s_t') \\
     &\leq Z_{\mathcal{F}} + e^{\Delta} Z_{\mathcal{T}} \\
     &= Z + (e^{\Delta} - 1) Z_{\mathcal{T}} \\
     &= Z\bigl(1 + (e^{\Delta} - 1)\alpha_{\mathcal{T}}\bigr)
\end{align*}
and symmetrically $Z' \geq Z(1 - (1 - e^{-\Delta})\alpha_{\mathcal{T}})$.

For the total variation, we bound $|a_t - a_t'|$ separately on each set.

\emph{FP16 tokens ($t \in \mathcal{F}$):}  $a_t' = a_t \cdot Z/Z'$, so
$\sum_{t \in \mathcal{F}} |a_t - a_t'| = (1 - \alpha_{\mathcal{T}}) |1 - Z/Z'|$.
Since $|Z' - Z| = |Z'_{\mathcal{T}} - Z_{\mathcal{T}}| \leq (e^{\Delta} - 1)Z_{\mathcal{T}}$
and $Z' \geq Z_{\mathcal{F}}$ (as $Z'_{\mathcal{T}} \geq 0$):
\[
  |1 - Z/Z'| = \frac{|Z' - Z|}{Z'}
  \leq (e^{\Delta} - 1)\frac{\alpha_{\mathcal{T}}}{1 - \alpha_{\mathcal{T}}}
\]
Multiplying by $(1 - \alpha_{\mathcal{T}})$:
\begin{equation}\label{eq:fp16_contrib}
  \sum_{t \in \mathcal{F}} |a_t - a_t'| \leq \alpha_{\mathcal{T}} (e^{\Delta} - 1)
\end{equation}

\emph{Tail tokens ($t \in \mathcal{T}$):}  For each tail token, $a_t'/a_t = \exp(s_t' - s_t) \cdot Z/Z'$.
Since $|\delta_t| \leq \Delta$, the first factor satisfies $\exp(\delta_t) \leq e^{\Delta}$.
For the second factor, $Z' \geq Z_{\mathcal{F}} + e^{-\Delta}Z_{\mathcal{T}} = Z - (1 - e^{-\Delta})Z_{\mathcal{T}} = Z(1 - (1 - e^{-\Delta})\alpha_{\mathcal{T}})$.
Since $\alpha_{\mathcal{T}} \leq 1$, we have $(1 - e^{-\Delta})\alpha_{\mathcal{T}} \leq 1 - e^{-\Delta}$, so $Z' \geq Z e^{-\Delta}$, giving $Z/Z' \leq e^{\Delta}$.
Therefore $a_t'/a_t \leq e^{2\Delta}$, and symmetrically $a_t'/a_t \geq e^{-2\Delta}$.
Since $|1 - r| \leq \max(e^{2\Delta} - 1, 1 - e^{-2\Delta}) = e^{2\Delta} - 1$ for $r \in [e^{-2\Delta}, e^{2\Delta}]$:
\begin{align}\label{eq:tail_contrib}
  \sum_{t \in \mathcal{T}} |a_t - a_t'|
  &= \sum_{t \in \mathcal{T}} a_t \,|1 - a_t'/a_t| \notag \\
  &\leq \alpha_{\mathcal{T}} (e^{2\Delta} - 1)
\end{align}

\emph{Combining.}  Adding \eqref{eq:fp16_contrib} and \eqref{eq:tail_contrib}:
\begin{align*}
  \sum_t |a_t - a_t'|
  &\leq \alpha_{\mathcal{T}}\bigl((e^{\Delta} - 1) + (e^{2\Delta} - 1)\bigr) \\
  &= \alpha_{\mathcal{T}}\bigl(e^{2\Delta} + e^{\Delta} - 2\bigr)
\end{align*}
Since $e^{\Delta} \leq e^{2\Delta}$ for all $\Delta \geq 0$, we have $e^{2\Delta} + e^{\Delta} - 2 \leq 2(e^{2\Delta} - 1)$, and therefore:
\[
  \mathrm{TV}(a, a') = \tfrac{1}{2}\sum_t |a_t - a_t'| \leq \alpha_{\mathcal{T}} \cdot (e^{2\Delta} - 1)
\]
All inequalities are non-asymptotic, holding for every $\Delta \geq 0$ and $\alpha_{\mathcal{T}} \in [0,1]$.
\end{proof}

\textbf{Combining with Theorem~\ref{thm:mass}.}  Let $\hat{\alpha}_{\mathcal{T}} = \sum_{b \notin \mathcal{F}} p_b^{\mathrm{int8}}$ denote the INT8-estimated tail mass after all clamping and Rung~1 expansion.  Here $\hat{\alpha}_{\mathcal{T}}$ is the estimated tail mass from INT8 scoring, while $\alpha_{\mathcal{T}}$ denotes the corresponding true FP16 tail mass.  The adaptive precision selector targets $\hat{\alpha}_{\mathcal{T}} \leq 1 - \tau_{\mathrm{cov}}$, but $K_{\max}$ clamping may prevent this; the bound uses the achieved $\hat{\alpha}_{\mathcal{T}}$.  By Theorem~\ref{thm:mass}, the tight bound on true tail mass is $\alpha_{\mathcal{T}} \leq e^{2\Delta}\,\hat{\alpha}_{\mathcal{T}}$.  Substituting into Lemma~\ref{lem:tail_tv}:
\[
  E_{\mathrm{key}} \leq 2V_{\max} \cdot e^{2\Delta}\,\hat{\alpha}_{\mathcal{T}} \cdot (e^{2\Delta} - 1)
\]
which is the bound stated in Equation~\eqref{eq:total}.  When the coverage target is met ($\hat{\alpha}_{\mathcal{T}} = 1 - \tau_{\mathrm{cov}}$), this reduces to $2V_{\max} \cdot e^{2\Delta}(1 - \tau_{\mathrm{cov}})(e^{2\Delta} - 1)$.

\section{Development Diagnostics: NIAH Precision Ablation}
\label{app:niah_ablation}

\textbf{Caveat.}  This appendix reports a diagnostic ablation conducted during development on an earlier kernel revision with a different random seed schedule.  The absolute accuracy values differ from the final evaluation (Table~\ref{tab:summary}); the relative gaps between configurations are the diagnostic signal.  We include it because the decomposition of error sources remains informative, but readers should not use these numbers as evidence for claims about the final system.

During development, we observed a counter-intuitive non-monotonicity on NIAH 8K: increasing the FP16 key budget did not monotonically improve retrieval accuracy.  We conducted a systematic ablation to decompose the sources of the tiered--dense gap.

\begin{table*}[!t]
\centering
\caption{NIAH 8K precision ablation (development kernel, 30 trials).  Each row isolates one precision component.  The dense baseline here (72\%) differs from the final results in Table~\ref{tab:summary} because this ablation was run during development on an earlier kernel revision with a different random seed schedule; the relative gaps between configurations are the diagnostic signal, not the absolute values.}
\label{tab:niah_ablation}
\small
\begin{tabular}{lrrrl}
\toprule
\textbf{Configuration} & \textbf{Dense} & \textbf{Tiered} & \textbf{Gap} & \textbf{What changed} \\
\midrule
$k_{\max}{=}128$, INT4 values & 72\% & 62\% & $-10$ & Operating point \\
$k_{\max}{=}\infty$, INT4 values & 72\% & 56\% & $-16$ & Uncapped top-$K$ \\
$k_{\max}{=}\infty$, FP16 values & 72\% & 62\% & $-10$ & Values isolated \\
All FP16 (keys+values, FP32 acc) & 72\% & 66\% & $-6$ & Keys isolated \\
All FP16, FP64 accumulators & 72\% & 68\% & $-4$ & Accumulators isolated \\
Kernel bypassed (torch SDPA) & 72\% & 72\% & $0$ & Kernel path isolated \\
\bottomrule
\end{tabular}
\end{table*}

The ablation reveals three independent sources of the tiered--dense gap on NIAH:

\textbf{INT4 value quantization ($\sim$6pp at $k_{\max}{=}\infty$).}  INT4 values with $g{=}16$ introduce $\sim$5\% relative reconstruction error.  For perplexity this is negligible ($\Delta\mathrm{ppl} < 0.005$), but NIAH's reliance on exact value content for the needle tokens makes it more sensitive.  The value escalation mechanism (Rung~2, triggered by $\eta_b$) addresses this for blocks with high reconstruction error.

\textbf{INT8 key scoring on tail blocks ($\sim$4pp).}  With the operating-point cap $k_{\max}{=}128$, roughly 15\% of blocks remain on INT8 keys.  Their slightly perturbed scores shift the softmax denominator, redistributing attention mass away from the needle.  Paradoxically, promoting \emph{more} blocks to FP16 ($k_{\max}{=}\infty$, $K^*{\approx}221$) makes this \emph{worse}: the newly promoted blocks' FP16 scores are more accurate (higher) than their INT8 estimates, increasing their softmax share at the needle's expense.  The ranking-consistency check (Section~\ref{sec:ranking_consistency}) detects and corrects this per-head.

\textbf{Kernel numerical path ($\sim$4pp residual).}  The certified attention kernel uses a different computational path from \torchsdpa{} (which dispatches to Flash Attention with tensor-core TF32/FP16 matmul and FP32 accumulation).  The different rounding paths produce $\sim$1e-7 max absolute difference per step, which compounds through 32 layers of autoregressive decoding.  In this ablation, upgrading the online-softmax scalars to FP64 recovered $\sim$2pp of this gap; all main results in the paper use FP32 accumulators throughout (Section~\ref{sec:numerical}).  The remaining $\sim$4pp is an inherent property of the matmul implementation, not the quantization scheme, and would be expected to be reduced by a tensor-core-native certified kernel.

\textbf{Implication for the certified claim.}  The certification guarantee (Section~\ref{sec:bounds}) bounds the per-head, per-step output perturbation from quantization.  The NIAH ablation shows that on retrieval-adversarial benchmarks, even perturbations within the certified bound can affect downstream accuracy through the argmax over output logits.  This is a known limitation of per-step bounds (Section~\ref{sec:limitations}).  For language modelling quality (PG-19), where the metric averages over all tokens, the certified system matches dense within measurement noise.

\textbf{Confirmation from RULER 8K ablation.}  The 8K RULER ablation (Section~\ref{sec:ruler_8k_ablation}) provides independent corroboration of the value-side hypothesis: sweeping $K_{\max}$ from 64 to 512 does not close the gap on VT/CWE/FWE, but replacing INT4 values with FP16 values reduces the aggregate gap from $-6.9$pp to $-0.4$pp with zero critical failures.  This confirms that INT4 value reconstruction error---not key quantization---is the dominant source of quality loss on value-sensitive tasks at short contexts.


\begin{thebibliography}{99}

\bibitem{kvquant}
Hooper, C., Kim, S., Mohammadzadeh, H., Mahoney, M.~W., Shao, Y.~S., Keutzer, K., and Gholami, A.
\newblock KVQuant: Towards 10 million context length LLM inference with KV cache quantization.
\newblock \emph{NeurIPS}, 2024.
\newblock arXiv:2401.18079.

\bibitem{kivi}
Liu, Z., Yuan, J., Jin, H., Zhong, S., Xu, Z., Braverman, V., Chen, B., and Hu, X.
\newblock KIVI: A tuning-free asymmetric 2bit quantization for KV cache.
\newblock \emph{ICML}, 2024.
\newblock arXiv:2402.02750.

\bibitem{qjl}
Zandieh, A., Daliri, M., and Han, I.
\newblock QJL: 1-bit quantized JL transform for KV cache quantization with zero overhead.
\newblock \emph{arXiv preprint arXiv:2406.03482}, 2024.

\bibitem{innerq}
Hosseini, S. M. T., Ardakani, A., and Gross, W. J.
\newblock InnerQ: Hardware-aware tuning-free quantization of KV cache for large language models.
\newblock \emph{arXiv preprint arXiv:2602.23200}, 2026.

\bibitem{hack}
Zhang, Z., Shen, H., Vargaftik, S., Ben Basat, R., Mitzenmacher, M., and Yu, M.
\newblock HACK: Homomorphic acceleration via compression of the key-value cache for disaggregated LLM inference.
\newblock \emph{SIGCOMM}, 2025.

\bibitem{vecinfer}
Yao, D., Yang, C., Tong, Z., Lin, Z., Liu, W., Luan, J., and Wang, W.
\newblock VecInfer: Efficient LLM inference with low-bit KV cache via outlier-suppressed vector quantization.
\newblock \emph{arXiv preprint arXiv:2510.06175}, 2025.

\bibitem{flashinfer}
Ye, Z., Chen, L., Lai, R., Lin, W., Zhang, Y., Wang, S., Chen, T., Kasikci, B., Grover, V., Krishnamurthy, A., and Ceze, L.
\newblock FlashInfer: Efficient and customizable attention engine for LLM inference serving.
\newblock \emph{MLSys}, 2025.

\bibitem{h2o}
Zhang, Z., Sheng, Y., Zhou, T., Chen, T., Zheng, L., Cai, R., Song, Z., Tian, Y., R\'{e}, C., Barrett, C., Wang, Z., and Chen, B.
\newblock H$_2$O: Heavy-hitter oracle for efficient generative inference of large language models.
\newblock \emph{NeurIPS}, 2023.
\newblock arXiv:2306.14048.

\bibitem{streamingattn}
Xiao, G., Tian, Y., Chen, B., Han, S., and Lewis, M.
\newblock Efficient streaming language models with attention sinks.
\newblock \emph{ICLR}, 2024.

\bibitem{blasst}
Yuan, J., Shinn, C., Xu, K., Cui, J., Klimiashvili, G., Xiao, G., Zheng, P., Li, B., Zhou, Y., Ye, Z., You, W., Zheng, T., Brown, D., Wang, P., Hoehnerbach, M., Cai, R., Demouth, J., Owens, J.~D., Hu, X., Han, S., Liu, T., and Mao, H.
\newblock BLASST: Dynamic blocked attention sparsity via softmax thresholding.
\newblock \emph{arXiv preprint arXiv:2512.12087}, 2025.

\bibitem{psa}
Zhou, Q., Yin, P., Zuo, P., and Cheng, J.
\newblock PSA: Progressive sparse attention for long-context inference.
\newblock \emph{arXiv preprint arXiv:2503.00392}, 2025.

\bibitem{twilight}
Lin, C., Tang, J., Yang, S., Wang, H., Tang, T., Tian, B., Stoica, I., Han, S., and Gao, M.
\newblock Twilight: Adaptive attention sparsity with hierarchical top-$p$ pruning.
\newblock \emph{NeurIPS}, 2025.
\newblock arXiv:2502.02770.

\bibitem{spargeattn}
Zhang, J., Xiang, C., Huang, H., Wei, J., Xi, H., Zhu, J., and Chen, J.
\newblock SpargeAttn: Accurate sparse attention accelerating any model inference.
\newblock \emph{arXiv preprint arXiv:2502.18137}, 2025.

\bibitem{certifiedtopk}
Tzachristas, G., Deng, L., Tzachristas, I., Zhang, G., and Chen, R.
\newblock A mathematical theory of top-$k$ sparse attention via total variation distance.
\newblock \emph{arXiv preprint arXiv:2512.07647}, 2025.

\bibitem{vattention}
Prabhu, R., Nayak, A., Mohan, J., Ramjee, R., and Panwar, A.
\newblock vAttention: Dynamic memory management for serving LLMs without PagedAttention.
\newblock \emph{ASPLOS}, 2025.
\newblock arXiv:2405.04437.

\bibitem{pagedattn}
Kwon, W., Li, Z., Zhuang, S., Sheng, Y., Zheng, L., Yu, C.~H., Gonzalez, J., Zhang, H., and Stoica, I.
\newblock Efficient memory management for large language model serving with PagedAttention.
\newblock \emph{SOSP}, 2023.
\newblock arXiv:2309.06180.

\bibitem{cocktail}
Tao, W., Zhang, B., Qu, X., Wan, J., and Wang, J.
\newblock Cocktail: Chunk-adaptive mixed-precision quantization for long-context LLM inference.
\newblock \emph{arXiv preprint arXiv:2503.23294}, 2025.

\bibitem{tada}
Joshi, V., Brahma, P. P., Liu, Z., and Barsoum, E.
\newblock TaDA: Training-free recipe for decoding with adaptive KV cache compression and mean-centering.
\newblock \emph{ACL Industry Track}, 2025.

\bibitem{adakv}
Feng, Y., Lv, J., Cao, Y., Xie, X., and Zhou, S. K.
\newblock Ada-KV: Optimizing KV cache eviction by adaptive budget allocation for efficient LLM inference.
\newblock \emph{arXiv preprint arXiv:2407.11550}, 2024.

\bibitem{dontwastebits}
Boroujeni, S. P. H., Mehrabi, N., Woods, P., Hillesheim, G., and Razi, A.
\newblock Don't waste bits! Adaptive KV-cache quantization for lightweight on-device LLMs.
\newblock \emph{arXiv preprint arXiv:2604.04722}, 2026.

\bibitem{qserve}
Lin, Y., Tang, H., Yang, S., Zhang, Z., Xiao, G., Gan, C., and Han, S.
\newblock QServe: W4A8KV4 quantization and system co-design for efficient LLM serving.
\newblock \emph{MLSys}, 2025.
\newblock arXiv:2405.04532.

\bibitem{pqcache}
Zhang, H., Ji, X., Chen, Y., Fu, F., Miao, X., Nie, X., Chen, W., and Cui, B.
\newblock PQCache: Product quantization-based KVCache for long context LLM inference.
\newblock \emph{SIGMOD}, 2025.
\newblock arXiv:2407.12820.

\bibitem{ruler}
Hsieh, C.-P., Sun, S., Kriman, S., Acharya, S., Rekesh, D., Jia, F., and Ginsburg, B.
\newblock RULER: What's the real context size of your long-context language models?
\newblock \emph{COLM}, 2024.

\bibitem{turboquant}
Zandieh, A., Daliri, M., Hadian, M., and Mirrokni, V.
\newblock TurboQuant: Online vector quantization with near-optimal distortion rate.
\newblock \emph{ICLR}, 2026.
\newblock arXiv:2504.19874.

\bibitem{janusquant}
Sun, C., Xia, Y., Wang, H., Yang, D., Zhou, X., and Cheng, D.
\newblock JanusQuant: Accurate and efficient 2-bit KV cache quantization for long-context inference.
\newblock \emph{PPoPP}, 2026.

\bibitem{packinfer}
Ning, R., Zhang, W., and Lai, F.
\newblock PackInfer: Compute- and I/O-efficient attention for batched LLM inference.
\newblock \emph{arXiv preprint arXiv:2602.06072}, 2026.

\bibitem{zenodo_artifact}
Calver, D.
\newblock Certified Quantized Attention --- artifact repository (version \texttt{arxiv-v1}).
\newblock Zenodo, 2026.
\newblock \href{https://doi.org/10.5281/zenodo.19915933}{\texttt{doi:10.5281/zenodo.19915933}}.

\end{thebibliography}
\end{document}